
\documentclass[preprint,3p,11pt]{elsarticle}




\usepackage{amssymb}
\usepackage{amsthm}


\usepackage{hyperref}
\hypersetup{colorlinks=true,
linkcolor=blue,
anchorcolor=blue,
citecolor=blue}

\usepackage{algorithmic}
\usepackage{algorithm}
\usepackage{amsmath}

\usepackage{caption}
\usepackage{subcaption}
\usepackage{float}
\usepackage{breqn}

\usepackage[section]{placeins} 

\allowdisplaybreaks

\newtheorem{myTheo}{Theorem}[section]
\newtheorem{myLem}{Lemma}
\newtheorem{myCor}{Corollary}[section]
\newtheorem{myAss}{A}[section]

\journal{arXiv}

\begin{document}
\begin{sloppypar}

\begin{frontmatter}






\title{Conjugate-Gradient-like Based Adaptive Moment Estimation \\
        Optimization Algorithm for Deep Learning}

\author[INS1,CEQ]{Jiawu Tian}
\ead{jiawu2015@gmail.com}

\author[INS1,COR]{Liwei Xu}
\ead{xul@uestc.edu.cn}

\author[INS1,COR,CEQ]{Xiaowei Zhang}
\ead{x.w.zhang@126.com}

\author[INS1]{Yongqi Li}
\ead{yongqi2023@126.com}


\affiliation[INS1]{{School of Mathematical Sciences, 
University of Electronic Science and Technology of China.}}

\affiliation[COR]{{Co-corresponding author.}}

\affiliation[CEQ]{{These authors contributed equally to this work.}}

\begin{abstract}
Training deep neural networks is a challenging task. In order to speed up training and 
enhance the performance of deep neural networks, we rectify the vanilla conjugate 
gradient as conjugate-gradient-like and incorporate it into the generic Adam, and thus
propose a new optimization algorithm named CG-like-Adam for deep learning. 
Specifically, both the first-order and the second-order moment estimation of generic Adam
are replaced by the conjugate-gradient-like. Convergence analysis handles the cases 
where the exponential moving average coefficient of the 
first-order moment estimation is constant and the first-order moment estimation is unbiased. 
Numerical experiments show the superiority of the proposed algorithm based on 
the CIFAR10/100 dataset.
\end{abstract}

\begin{keyword}
    Deep Learning \sep Optimization \sep Conjugate Gradient \sep Adaptive Moment Estimation



\end{keyword}

\end{frontmatter}


\section{Introduction}
Deep learning has been used in many aspects, such as recommendation systems \cite{ref1}, 
natural language processing \cite{ref2}, image recognition \cite{ref3}, 
reinforcement learning \cite{ref4}, etc.
Neural network model is the main research object of deep learning, 
which includes input layer, hidden layer and output layer. 
Each layer includes a certain number of neurons, 
and each neuron is connected with each other in a certain way. 
The parameters and connection parameters of each neuron determine 
the performance of the deep learning model. 
How to optimize these huge number of parameters affects 
the performance of the deep learning model, attracting researchers 
to devote their energy to exploration \cite{ref5}.

Stochastic Gradient Descent(for short, SGD) has dominated training of deep neural networks 
despite it was proposed in the last century. It updates parameters 
of deep neural network toward the negative gradient direction 
which would be scaled by a constant called learning rate.
Simple as it is but may encounter non-convergence. More than one kind of 
improvement has been put forward, including momentum \cite{{ref6,ref7}}
and Nesterov’s acceleration \cite{ref8}, to accelerate the
training process and upgrade optimization. Large number of parameters 
sharing the same learning rate may be inappropriate since some parameters 
can be very close to the optimal, which needs to adjust learning rate at 
that situation. AdaGrad \cite{ref9} 
was the practicer of the idea of adaptive learning rate. 
It scales every coordinate of the gradient by the square roots of 
sum of the squared past gradients. Since then various of improved 
optimization algorithm combining the strength of momentum and adaptive learning rate
emerge in an endless stream. Some of the most popular ones are AdaDelta \cite{ref10}, 
RMSProp \cite{ref11}, Adam \cite{ref12}, NAdam \cite{ref13}, etc. 

Adam is the most popular one among them as a result of its fast convergence,
good performance as well as generality in most of practical applications. 
Although it has achieved wonderful results in many deep learning tasks, 
non-convergence issues trouble Adam in some other cases. Reddi etc.
constructed several examples to disprove convergence of Adam \cite{ref14}, and proposed AMSGrad fixing
the problem in proof of convergence of Adam algorithm given in reference  \cite{ref12}. 
However, AMSGrad's theoretical proof only handles the case where 
the objective function is convex, and can not ensure convergence when the 
exponential moving average coefficient of first-order moment estimation is a constant.
Zhou etc. tackled non-convex convergence of adaptive gradient methods \cite{ref15},
but the ``constant'' case is an open problem until the work of Chen etc. \cite{ref16}.
Unfortunately, when the first-order moment estimation is unbiased, further work
needs to be put into the proof. In a word, there is still room for improvement in for Adam-type 
optimization algorithm.

Conjugate Gradient method was proposed by Hestenes in the 1950s \cite{ref17} and was 
generalized to the non-linear optimization by Fletcher and Reeves in 1964 \cite{ref18}. 
It is very suitable for solving large-scale unconstrained non-convex optimization 
problems. The method iteratively moves the current parameters toward negative conjugate 
gradient direction which can be computed through the current gradient and the previous 
conjugate gradient multiplied by a conjugate coefficient. Specifically, the conjugate coefficient
could be calculated by efficient formulae such as Fletcher-Reeves(FR) \cite{ref18},
Polak-Ribiere-Polyak(PRP) \cite{{ref24,ref25}}, Hestenes-Stiefel(HS) \cite{ref17}, 
Dai-Yuan(DY) \cite{ref20} as well as Hager-Zhang(HZ) \cite{ref21}.

Directly replacing the gradient of Adam-type with vanilla conjugate gradient will not 
bring the satisfactory results, i.e., non-convergence, which was demonstrated 
in our experiments and the reference  \cite{ref22} whose author incorporated conjugate 
gradient into optimization algorithm for deep learning. 
Our work absorbs the idea of conjugate-gradient-like(CG-like)
in reference  \cite{{ref22,ref23}}. To be more specific, the conjugate coefficient is scaled by a positive 
real monotonic decreasing sequence depending on the number of iterations.(see 
Alg.\ref{alg:CG-like-Adam} for more details) 
Such rectification leads to conjugate-gradient-like 
direction, and we prove the convergence (Th.\ref{th3.2}) of the proposed algorithm for the non-convex. 
Unlike the work of the reference \cite{ref23}, the conjugate-gradient-like is also
used as second-order moment estimation. Experiments show that
the proposed algorithm(Alg.\ref{alg:CG-like-Adam}) performs better than CoBA \cite{ref23}.

There are two main perspectives to prove convergence: Regret Bound and Stationary Point. Convergence analysis 
for convex case usually adopts Regret Bound and Stationary Point for non-convex.
Our convergence analysis follows the train of thought in reference \cite{ref16} and extends 
the theorem to the case of first-order moment unbiased estimation, obtaining
more general convergence theorem. Several experiments were implemented to 
validate the effectiveness and commendable performance of our optimization 
algorithm for deep learning.

To sum up, the main contributions can be summarized as follows:

(i) The vanilla conjugate gradient direction is rectified by means 
of conjugate coefficient multiplied by a positive real monotonic decreasing 
sequence, which leads to conjugate-gradient-like direction that is incorporated
into Adam-type optimization algorithm for deep learning in order to speed up 
training process, boost performance and enhance generalization ability 
of deep neural networks. The algorithm, named CG-like-Adam now, combines the advantages of conjugate 
gradient for solving large-scaled unconstrained non-convex optimization 
problems and adaptive moment estimation.

(ii) The convergence for non-convex case is analyzed. The convergence analysis
tackles two hard situations: a constant coefficient of exponential moving average of 
first-order moment estimation, and unbiased first-order moment estimation. A great deal 
of work is based on the objective function convex and yet they cannot deal with the 
constant coefficient. Our proofs generalize the convergence theorem, making it more 
practical.

(iii) The numerical experiments both on CIFAR-10 and CIFAR-100 dataset using 
popular ResNet-34 and VGG-19 network for image classification task are done.
Experiment results not only testify the effectiveness of CG-like-Adam
but also provide satisfactory performance especially VGG-19 network 
on CIFAR-10/100 dataset.

\section{Preliminaries}
Here some necessary knowledge are prepared for better understanding.

\subsection{Notation}
$\left\| \cdot \right\|_{2}$ which is defined as $\ell_{2}$-norm
is replaced by $\left\| \cdot \right\|$ for convenience. 
$\langle \cdot , \cdot \rangle$ denotes inner product.
$[d]$, $\mathcal{T}$, $S^{d}_{+}$ are both set and denotes $\{1,2,\ldots,d\}$,
$\{1,2,\ldots,T\}$,  $\{V | V \in \mathbb{R}^{d \times d}, V \succ 0\}$ that 
is the set of all positive definite $d \times d$ matrices, respectively.
For any vector $x_t \in \mathbb{R}^{d}$, $x_{t,i}$ denotes its $i^{th}$ 
coordinate where $i \in [d]$ and $\hat{V}^{-\frac{1}{2}}_{t}x_{t}$ represents
$(\hat{V}^{\frac{1}{2}}_{t})^{-1}x_{t}$. Besides, $\sqrt{x_t}$ or $x^{\frac{1}{2}}_{t}$
represents for element-wise square root, $x^{2}_{t}$ for element-wise square,
$x_{t}/y_{t}$ or $\frac{x_{t}}{y_{t}}$ for element-wise division, 
$\max(x_{t},y_{t})$ for element-wise maximum, where $y_{t} \in \mathbb{R}^{d}$.
$diag(v) \in \mathbb{R}^{d \times d}$ is used to denote a diagonal matrix,
in which the diagonal elements $d_{ii}$ are $v_{i}$ and the other elements are 0,
$i \in [d]$. Finally, $O(\cdot), o(\cdot), \Omega(\cdot)$ are used
as standard asymptotic notations.

\subsection{Stochastic Optimization, Generic Adam and Stationary Point}
\textbf{Stochastic Optimization}
For any deep learning or machine learning model, it can be analysed by stochastic 
optimization framework. In the first place, consider the problem in the following form:
\begin{equation} 
{\min _{x \in \mathcal{X}} \mathbb{E}_{\pi}[\mathcal{L}(x, \pi)]+ \sigma(x)} ,
\end{equation}
where $\mathcal{X} \subseteq \mathbb{R}^{d}$ is feasible set. $\pi$ is a 
random variable with an unknown distribution,
representing randomly selected data sample or random noise. $\sigma(x)$ is a regular term.
For any given $x$, $\mathcal{L}(x, \pi)$ usually represents the loss function on sample
$\pi$. But for most practical cases,  the distribution of $\pi$ can not be obtained. 
Hence the expectation $\mathbb{E}_{\pi}$ can not be computed. 
Now there is another one to be considered,
known as Empirical Risk Minimization Problem(ERM):
\begin{equation}\label{eq2}
{\min _{x \in \mathcal{X}} f(x) = \frac{1}{N} \sum_{i=1}^{N} \mathcal{L}_{i}(x)+\sigma(x)} ,
\end{equation}
where $\mathcal{L}_{i}(x)=\mathcal{L}(x, \pi_{i})$, $i=1,2,\ldots,N$ and $\pi_{i}$ are samples.
For the convenience of our discussion below, without loss of generality, 
the regular term $\sigma(x)$ is ignored.

\textbf{Generic Adam}
There are many stochastic optimization algorithms to solve the ERM 
(see Eq.\eqref{eq2}), such as SGD, AdaDelta\cite{ref10}, RMSProp\cite{ref11}, 
Adam\cite{ref12}, NAdam\cite{ref13},etc. All the algorithms listed above are 
first-order optimization algorithm and can be described by the following Generic 
Adam paradigm(Alg.\ref{alg:G-Adam}), where $\varphi_{t}:\mathcal{X} \rightarrow S^{d}_{+}$, 
an unspecified ``averaging'' function, is usually applied to inversely weight the 
first-order moment estimation. The first-order moment estimation 
of Generic Adam is biased. Our study will focus on the Generic Adam, yet we will 
rectify all the moment estimation as unbiased and are going to incorporate conjugate
gradient into the Generic Adam as well as prove the convergence based on that. 

\begin{algorithm}[ht]
    \caption{Generic Adam}
    \label{alg:G-Adam}
        \begin{algorithmic}
            \STATE Require: ${x}_{1} \in \mathcal{X}$, ${m}_{0}:={0}$, 
            $(\beta_{1t})_{t \in \mathcal{T} } \subset [0,1)$.
            
            \FOR{$t=1$ to $T$}
                \STATE 
                $g_{t}$ : \text{noisy gradient}
            
                $m_{t}:=\beta_{1t} {m}_{t-1} + (1-\beta _{1t}) {g}_{t}$
                
                ${V}_{t}:=\varphi_{t} (g_{1},g_{2},\ldots,g_{t})$
            
                ${x}_{t+1}:={x}_{t} - \frac{\alpha_{t}}{\sqrt{V_{t}+\epsilon I}} \cdot m_{t}$
            \ENDFOR
    \end{algorithmic}
\end{algorithm}

\textbf{Stationary Point}
For any differentiable function $f(x)$, $x^{*}$ is called a stationary point 
when $\| \nabla f(x^{*}) \|^{2} = 0$, where $\nabla f$ denotes the gradient of $f$.
If $f$ is convex, $x^{*} \in \mathcal{X}$ will be a global minimizer of $f$ 
over $\mathcal{X}$. However, $f$ is usually non-convex in practice. When the 
solution obtained by an optimization algorithm is a stationary point, the gradient of 
$f$ is almost zero near the solution, resulting in the optimization process 
appearing stagnant. Although long runs can jump out of this local optimal solution, 
the time cost can be unacceptable.

\subsection{Vanilla Conjugate Gradient}
The vanilla conjugate gradient method for solving unconstrained nonlinear 
optimization problems has been studied for ages\cite{{ref18,ref17,ref20,ref21,ref24,ref25}}. 
It generates update direction $d_{t}$ by the following manner:
\begin{equation}\label{eq3}
    {d}_{t}:={g}_{t} - \gamma _{t} \cdot {d}_{t-1} ,
\end{equation}
where $\gamma _{t}$ is called conjugate coefficient and can be calculated 
directly by the following manner:
\begin{equation}\label{eq4}
    \gamma_{t}^{\mathrm{HS}}:=\frac{\left\langle{g}_{t}, {y}_{t}\right\rangle}{\left\langle{d}_{t-1}, {y}_{t}\right\rangle} ,
\end{equation}
\begin{equation}\label{eq5}
    \gamma_{t}^{\mathrm{FR}}:=\frac{\left\|{g}_{t}\right\|^{2}}{\left\|{g}_{t-1}\right\|^{2}} ,
\end{equation}
\begin{equation}\label{eq6}
    \gamma_{t}^{\mathrm{PRP}}:=\frac{\left\langle{g}_{t}, {y}_{t}\right\rangle}{\left\|{g}_{t-1}\right\|^{2}} ,
\end{equation}
\begin{equation}\label{eq7}
    \gamma_{t}^{\mathrm{DY}}:=\frac{\left\|{g}_{t}\right\|^{2}}{\left\langle{d}_{t-1}, {y}_{t}\right\rangle} ,
\end{equation}
\begin{equation}\label{eq8}
    \gamma_{t}^{\mathrm{HZ}}:=\frac{\left\langle{g}_{t}, {y}_{t}\right\rangle}{\left\langle{d}_{t-1}, {y}_{t}\right\rangle} 
    -\lambda \frac{\left\|{y}_{t}\right\|^{2}}{\left\langle{d}_{t-1}, {y}_{t}\right\rangle^{2}}\left\langle{g}_{t}, {d}_{t-1}\right\rangle ,
\end{equation}
where ${y}_{t}:={g}_{t}-{g}_{t-1},  \lambda>\frac{1}{4}$.
All the above was proposed by Hestenes-Stiefel\cite{ref17}, Fletcher-Reeves\cite{ref18},
Polak-Ribiere-Polyak\cite{{ref24,ref25}}, Dai-Yuan\cite{ref20}, Hanger-Zhang\cite{ref21}, respectively.

\section{CG-like-Adam}
\label{}
\subsection{Proposed Algorithm}
\textbf{First-order moment estimation}
In Generic Adam, $m_{t}:=\beta_{1t} {m}_{t-1} + (1-\beta_{1t}) {g}_{t}$, 
the biased estimation of the gradient of $f$, is the update direction.
The conjugate gradient is desired to be update direction, 
so that $m_{t}:=\beta_{1t} {m}_{t-1} + (1-\beta_{1t}) {d}_{t}$.
But this is not over yet. If just use Eq.\eqref{eq3} as the calculation of the conjugate gradient, then the conclusion of    
algorithm divergence has been verified through both our experiment and the reference \citep{ref22}. 
The vanilla conjugate gradient should be modified as conjugate-gradient-like:
\begin{equation}\label{eq9}
    {d}_{t}:={g}_{t} - \frac{\gamma _{t}}{t^{a}} \cdot {d}_{t-1} ,
\end{equation}
where $a \in \left[\frac{1}{2}, +\infty \right)$. What needs to emphasize is that our CG-like is different
from CoBA\citep{ref23} in which a very small constant $M$ was used, and therefore the update direction
of CoBA is almost the noisy gradient $g_{t}$ of $f$.

The update direction $m_{t}$ should be unbiased, so the 
following amendment is made and it is taken as the update direction of our algorithm:
\begin{equation}\label{eq10}
    {\hat{m}}_{t}:=\frac{m_{t}}{1-\beta^{t}_{11}}.
\end{equation}
Such amendment will bring us considerable trouble to the convergence proof, 
but we have managed to solve it.

\textbf{Second-order moment estimation}
Theoretically, there are many ways to instantiate $\varphi_{t}$ in Generic Adam, 
but the commonly used one is the exponential moving average of the square of the past
gradient of $f$ until the current step $t$:
\begin{equation}\label{eq11}
    {v}_{t}:=\beta _{2} {v}_{t-1} + (1-\beta _{2}) g^{2}_{t},
\end{equation}
\begin{equation}\label{eq12}
    {V}_{t}:=diag(v_{t}).
\end{equation}
This momentum adaptively adjusts the learning rate $\alpha_{t}$. In other words,
the update stepsize of each dimension of the solution toward 
the $-\hat{m}_{t}$ direction will be different. Since CG-like is used 
in our algorithm, the second-order moment estimation should be as follows 
and also unbiased:
\begin{equation}\label{eq13}
    {v}_{t}:=\beta _{2} {v}_{t-1} + (1-\beta _{2}) d^{2}_{t},
\end{equation}
\begin{equation}\label{eq14}
    \hat{{v}}_{t} = \frac{{v}_{t}}{1-\beta^{t}_{2}}.
\end{equation}

In order to further ensure the algorithm convergence, the maximum of all 
$\hat{{v}}_{t}$ until the current time step $t$ is also maintained:
\begin{equation}\label{eq15}
    \hat{{v}}_{t}:= max \{ \hat{{v}}_{t-1}, \hat{{v}}_{t} \},
\end{equation}
\begin{equation}\label{eq16}
    \hat{V}_{t}:=diag(\hat{v}_{t}).
\end{equation}

Finally we get the CG-like-Adam(Alg.\ref{alg:CG-like-Adam}).

\begin{algorithm}[ht]
    \caption{CG-like-Adam Algorithm}
    \label{alg:CG-like-Adam}
        \begin{algorithmic}
            \STATE Require: ${x}_{1}$ $\in \mathcal{X}$, 
            $(\beta  _{1t})_{t \in \mathcal{T} } \subset [0,1)$, 
            $\beta _{2} \in (0,1)$, 
            $a \in \left[ \frac{1}{2},+\infty \right)$, 
            $\gamma _{1}=0$, $\epsilon > 0$.
            
            ${m}_{0}:={0}$, ${v}_{0}:={0}$, ${d}_{0}:={0}$.
        
            \FOR{$t=1$ to $T$}
            \STATE 
            $g_{t}$ : \text{noisy gradient}

            ${d}_{t}:={g}_{t} - \frac{\gamma _{t}}{t^{a}} \cdot {d}_{t-1}$ 
        
            ${m}_{t}:=\beta _{1t} {m}_{t-1} + (1-\beta _{1t}) {d}_{t}$
        
            $\hat{{m}}_{t}:=\frac{{m}_{t}}{1-\beta^{t}_{11}}$
            
            ${v}_{t}:=\beta _{2} {v}_{t-1} + (1-\beta _{2}) d^{2}_{t}$
        
            $\hat{{v}}_{t} = \frac{{v}_{t}}{1-\beta^{t}_{2}}$
        
            $\hat{{v}}_{t}:= max \{ \hat{{v}}_{t-1}, \hat{{v}}_{t} \}$
        
            $\hat{V}_{t}:=diag(\hat{{v}}_{t})$
        
            ${x}_{t+1}:= {x}_{t} - \frac{\alpha_{t}}{\sqrt{\hat{V}_{t}+\epsilon I}} \cdot \hat{{m}}_{t}$
            
            \ENDFOR
        \end{algorithmic}
  \end{algorithm}

\subsection{Assumptions and Convergence Analysis}
Based on the demanding of our convergence analysis, 
the assumptions are directly listed as follows.
\begin{myAss}\label{ass31}
    $f$ is differentiable and has $L$-Lipschitz gradient: 
    $\| \nabla f(x) - \nabla f(y) \| \leqslant L \| x - y \| $ holds for all $x, y$ $\in \mathcal{X}.$
\end{myAss}
\begin{myAss}\label{ass32}
    $f$ is lower bounded: $f(x^{*}) > - \infty$, where $x^{*}$ is an optimal solution.
\end{myAss}
\begin{myAss}\label{ass33}
    The noisy gradient $g_{t}$ is unbiased and the noise is independent:
    $g_{t} = \nabla f(x_{t}) + \zeta _{t}$, $\mathbb{E}[\zeta _{t}] = 0$, and $\zeta _{i}$ 
    is independent of $\zeta _{j}$ if $i \neq j$.
\end{myAss}
\begin{myAss}\label{ass34}
    There exists a constant $H$, for all $t \in \mathcal{T}$,  
    $\| \nabla f(x_{t}) \| \leqslant H$, $\| g_{t} \| \leqslant H$.
\end{myAss}

\begin{myTheo}\label{th3.1}
  Suppose that the assumptions A\ref{ass31}-A\ref{ass34} are satisfied. $\beta_{1t} \in [0,1)$, 
  $\beta_{1t} \leq \beta_{1(t+1)}$, $\beta_{1(t+1)} \leq \beta_{1t} h(t)$
  (or $\beta_{1t} h(t) \leq \beta_{1(t+1)}$) hold for all $t \in \mathcal{T}$, 
  in which $h(t)=\frac{(1-\beta_{11}^{t-1})(1-\beta_{11}^{t+1})}{(1-\beta_{11}^{t})^{2}}$.
  And $\exists G \in \mathbb{R}^{+}, \forall t \in \mathcal{T}, 
  \|\alpha_{t} \hat{V}_{t}^{-\frac{1}{2}} \hat{m}_{t}\| \leq G$.
  Then the CG-like-Adam(Alg.\ref{alg:CG-like-Adam}) satisfies
  \begin{equation}
      \begin{aligned}
      \label{eq17}
      & \mathbb{E}\left[\sum_{t=1}^{T}\left\langle\nabla f\left(x_{t}\right), \alpha_{t} \hat{V}_{t}^{-\frac{1}{2}} \nabla f(x_{t})\right\rangle\right]\\
      &
       \leq C_{1} \mathbb{E}\left[\sum_{t=2}^{T} \sum_{k=1}^{d}\left|\mu_{t} \hat{v}_{t, k}^{-\frac{1}{2}} - \mu_{t-1} \hat{v}_{t-1, k}^{-\frac{1}{2}}\right|\right] \\
      & \quad\,  +C_{2} \mathbb{E}\left[\sum_{t=1}^{T-1}\left\|\alpha_{t} \hat{V}_{t}^{-\frac{1}{2}} d_{t}\right\|^{2}\right]
      +C_{3} + C_{4} \sum_{t=1}^{T} \frac{\alpha_{t}\left|\gamma_{t}\right|}{t^{a}} ,
      \end{aligned}
  \end{equation}
  where $C_{1}$,$C_{2}$,$C_{3}$ and $C_{4}$ are both constant independent of $T$, 
  $\mu_{t}=\frac{\alpha_{t}\left(1-\beta_{1 t}\right)}{\xi_{t}}$, 
  $\xi_{t}=\left(1-\beta_{11}^{t}\right)-\beta_{1 t}\left(1-\beta_{11}^{t-1}\right)$.
  The expectation $\mathbb{E}$ is taken with respect to all the randomness corresponding to $g_{t}$.
\end{myTheo}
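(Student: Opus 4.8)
The plan is to follow, and extend to the unbiased setting, the non-convex analysis of reference \cite{ref16}: begin from the descent inequality implied by $L$-smoothness and absorb the momentum, the bias correction and the conjugate-gradient-like correction into error terms. Write $\hat V_t^{-1/2}$ for $(\hat V_t+\epsilon I)^{-1/2}$. Assumption A\ref{ass31} and $x_{t+1}-x_t=-\alpha_t\hat V_t^{-1/2}\hat m_t$ give $f(x_{t+1})\le f(x_t)-\alpha_t\langle\nabla f(x_t),\hat V_t^{-1/2}\hat m_t\rangle+\tfrac{L}{2}\|x_{t+1}-x_t\|^2$; summing over $t=1,\dots,T$ and invoking Assumption A\ref{ass32} yields
\[
\sum_{t=1}^{T}\alpha_t\langle\nabla f(x_t),\hat V_t^{-1/2}\hat m_t\rangle\le f(x_1)-f(x^*)+\tfrac{L}{2}\sum_{t=1}^{T}\|x_{t+1}-x_t\|^2 .
\]
The last sum is bounded by expanding $\hat m_t$ as a weighted average of $d_1,\dots,d_t$, using $\hat V_t\succeq\hat V_k$ for $k\le t$ to pass from $\hat V_t^{-1/2}d_k$ to $\hat V_k^{-1/2}d_k$, applying convexity of $\|\cdot\|^2$, and swapping the order of summation, which produces the $C_2\,\mathbb E[\sum_t\|\alpha_t\hat V_t^{-1/2}d_t\|^2]$ term (plus an $O(1)$ leftover). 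I would also record two facts used repeatedly: from $\hat v_{t,k}\ge v_{t,k}\ge(1-\beta_2)d_{t,k}^2$ one gets $\|\hat V_t^{-1/2}d_t\|^2\le d/(1-\beta_2)$ and $\|\hat V_t^{-1/2}d_{t-1}\|^2\le d/(1-\beta_2)$; and from $\|\hat V_t^{-1/2}\|\le\epsilon^{-1/2}$ with Assumption A\ref{ass34}, $\|\hat V_t^{-1/2}\nabla f(x_t)\|\le H/\sqrt{\epsilon}$. These let the whole argument go through without ever bounding $\|d_t\|$ itself, which is not available under the stated hypotheses.

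The heart of the argument is $\sum_t\alpha_t\langle\nabla f(x_t),\hat V_t^{-1/2}\hat m_t\rangle$. I would rewrite the first-order recursion in bias-corrected form, $(1-\beta_{11}^{t})\hat m_t=\beta_{1t}(1-\beta_{11}^{t-1})\hat m_{t-1}+(1-\beta_{1t})d_t$, which is exactly where $\xi_t=(1-\beta_{11}^t)-\beta_{1t}(1-\beta_{11}^{t-1})$ and $\mu_t=\alpha_t(1-\beta_{1t})/\xi_t$ enter; one checks $\xi_t>0$ always, so $\mu_t>0$ and $\tilde\beta_{1t}:=\beta_{1t}(1-\beta_{11}^{t-1})/(1-\beta_{11}^t)<1$. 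Then introduce an auxiliary iterate $z_t$, an affine combination of $x_t$ and $x_{t-1}$ with coefficients built from $\beta_{1t}$ and $\beta_{11}^t$, so that $z_{t+1}-z_t$ is linear in $d_t$ through an $x_t$-measurable matrix and $\|z_t-x_t\|\le\frac{\tilde\beta_{1t}}{1-\tilde\beta_{1t}}\|x_t-x_{t-1}\|\le\frac{\tilde\beta_{1t}}{1-\tilde\beta_{1t}}G$ (using the hypothesis $\|\alpha_t\hat V_t^{-1/2}\hat m_t\|\le G$). Applying the smoothness inequality along $\{z_t\}$, replacing $\nabla f(z_t)$ by $\nabla f(x_t)$ at a cost $O(L\|z_t-x_t\|\,\|\hat V_t^{-1/2}d_t\|)$ absorbed into $C_2,C_3$, then substituting $d_t=g_t-\frac{\gamma_t}{t^a}d_{t-1}$ and taking the conditional expectation given $g_1,\dots,g_{t-1}$ — where Assumption A\ref{ass33} turns $g_t$ into $\nabla f(x_t)$ while $x_t,\hat V_{t-1}$ are measurable — leaves the main term $\mathbb E[\sum_t\langle\nabla f(x_t),\alpha_t\hat V_t^{-1/2}\nabla f(x_t)\rangle]$, the conjugate cross term bounded pointwise by $\frac{\alpha_t|\gamma_t|}{t^a}H\sqrt{d/(1-\beta_2)}$ (the $C_4\sum_t\alpha_t|\gamma_t|/t^a$ term), and a momentum remainder. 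That remainder, together with the $\hat V_{t-1}\!\leftrightarrow\!\hat V_t$ discrepancies created along the way — sign-definite since $\hat v_{t,k}\ge\hat v_{t-1,k}$ (the running-max step), bounded by $H^2$, and summed by parts — collapses into $C_1\,\mathbb E[\sum_t\sum_k|\mu_t\hat v_{t,k}^{-1/2}-\mu_{t-1}\hat v_{t-1,k}^{-1/2}|]$; the coefficients $\mu_t$ are exactly what multiply the $\hat m_{t-1}$-type contributions after the rearrangement. Gathering $f(x_1)-f(x^*)$, the zero initializations $m_0=v_0=d_0=0$, $\gamma_1=0$ and the remaining $O(1)$ boundary terms into $C_3$ gives Eq.\eqref{eq17}.

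The monotonicity hypotheses are precisely what keep the momentum remainder in that clean successive-difference form: since $h(t)=1-\beta_{11}^{t-1}(1-\beta_{11})^2/(1-\beta_{11}^t)^2<1$ and $\tilde\beta_{1,t+1}/\tilde\beta_{1t}=\beta_{1(t+1)}/(\beta_{1t}h(t))$, the two conditions $\beta_{1(t+1)}\le\beta_{1t}h(t)$ and $\beta_{1t}h(t)\le\beta_{1(t+1)}$ say exactly that $\tilde\beta_{1t}$ (hence $\mu_t$) is monotone in one direction or the other, and the two regimes are handled by symmetric arguments. The main obstacle is this momentum/bias-correction step: with a time-varying $\beta_{1t}$ the factor $(1-\beta_{11}^t)^{-1}$ breaks the constant-$\beta_1$ telescoping of reference \cite{ref16}, and $\xi_t$, $\mu_t$, $h(t)$ are the bookkeeping devices that make the auxiliary-sequence rearrangement close with the main term carrying the coefficient $\alpha_t$ demanded by Eq.\eqref{eq17} and the remainder carrying $\mu_t\hat v_{t,k}^{-1/2}$. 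The conjugate-gradient-like correction, by contrast, is easy here: the bound $\|\hat V_t^{-1/2}d_{t-1}\|\le\sqrt{d/(1-\beta_2)}$ confines its effect to the single extra sum $C_4\sum_t\alpha_t|\gamma_t|/t^a$.
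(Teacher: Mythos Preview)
Your outline follows the paper's route: both introduce the auxiliary sequence $z_t=x_t+\eta_t(x_t-x_{t-1})$ with $\eta_t=\beta_{1t}(1-\beta_{11}^{t-1})/\xi_t$, apply $L$-smoothness along $\{z_t\}$, isolate the noise by the $\hat V_{t}\leftrightarrow\hat V_{t-1}$ swap that yields the $C_1$ sum (the paper's Lemma~\ref{lem7}), and use the monotonicity hypotheses on $\beta_{1t}$ exactly to keep $\eta_t$ monotone (Lemma~\ref{lem9}). Two minor discrepancies: the paper never writes descent along $\{x_t\}$ --- it goes straight to $\{z_t\}$ via Lemma~\ref{lem2} --- so your first paragraph is a detour; and the paper drops $\epsilon I$ entirely in the analysis (see the remark after Corollary~\ref{cor31}), whereas you invoke $\|\hat V_t^{-1/2}\|\le\epsilon^{-1/2}$.

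There is, however, a real gap. Your assertion that ``$z_{t+1}-z_t$ is linear in $d_t$ through an $x_t$-measurable matrix'' is too clean. Lemma~\ref{lem1} gives
\[
z_{t+1}-z_t=-(\eta_{t+1}-\eta_t)\alpha_t\hat V_t^{-1/2}\hat m_t-\eta_t\bigl(\alpha_t\hat V_t^{-1/2}-\alpha_{t-1}\hat V_{t-1}^{-1/2}\bigr)\hat m_{t-1}-\mu_t\hat V_t^{-1/2}d_t,
\]
so besides the $d_t$-piece there are two further terms involving $\hat m_t,\hat m_{t-1}$ and preconditioner \emph{differences}; none of the coefficients is $x_t$-measurable. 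These generate the paper's $T_1,T_3,T_4,T_5$ (Lemmas~\ref{lem3}--\ref{lem6}), each bounded by an $O(1)$ quantity and absorbed into $C_3$ --- they do \emph{not} collapse into the $C_1$ sum as you suggest. In particular, bounding $T_1$ and $T_5$ hinges on a uniform bound $\|\hat m_{t-1}\|\le\bar M$, obtained from $\|d_t\|\le\bar H$ (Lemma~\ref{lem10}). Your tricks $\|\hat V_t^{-1/2}d_t\|^2\le d/(1-\beta_2)$ and $\|\hat V_t^{-1/2}\|\le\epsilon^{-1/2}$ are correct and indeed suffice for the $C_2$- and $C_4$-terms, but they do not yield a telescoping estimate on $\bigl\|(\alpha_t\hat V_t^{-1/2}-\alpha_{t-1}\hat V_{t-1}^{-1/2})\hat m_{t-1}\bigr\|$ without a pointwise bound on $\hat m_{t-1}$. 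So ``never bounding $\|d_t\|$'' cannot be carried through the whole argument; the paper does rely on Lemma~\ref{lem10} (which in turn needs $|\gamma_t|$ bounded --- an assumption used in the proof though absent from the statement of Theorem~\ref{th3.1}).
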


\begin{proof}
  See Appendix \ref{prth31}.
\end{proof}

\begin{myTheo}\label{th3.2}
  Suppose that the assumptions A\ref{ass31}-A\ref{ass34} and the conditions of Th.\ref{th3.1}
  are satisfied. When $T \rightarrow +\infty$, there is 
  \begin{equation}\label{eq18}
      {\min _{t \in \mathcal{T}}\left[\mathbb{E}\left\|\nabla f\left(x_{t}\right)\right\|^{2}\right]=O\left(\frac{S_{1}(T)}{S_{2}(T)}\right)} ,
  \end{equation}
  where when $T \rightarrow +\infty$,
  \begin{equation}
    \begin{aligned}
    O(S_{1}(T))
    & = C_{1} \mathbb{E}\left[\sum_{t=2}^{T} \sum_{k=1}^{d}\left|\mu_{t} \hat{v}_{t, k}^{-\frac{1}{2}} - \mu_{t-1} \hat{v}_{t-1, k}^{-\frac{1}{2}}\right|\right] \\
    &\quad\, +C_{2} \mathbb{E}\left[\sum_{t=1}^{T-1}\left\|\alpha_{t} \hat{V}_{t}^{-\frac{1}{2}} d_{t}\right\|^{2}\right]
    + C_{3} +{C_{4} \sum_{t=1}^{T} 		\frac{\alpha_{t}\left|\gamma_{t}\right|}{t^{a}}},
\end{aligned}
\end{equation}
  ${\Omega\left(S_{2}(T)\right)=\sum_{t=1}^{T} \tau_{t}}$ in which 
  $\tau_{t}:=\underset{k \in[d]}{\min} \left\{ \underset{\{g_{i}\}_{i=1}^{t}}{\min} \frac{\alpha _{t}}{\sqrt{\hat{v}_{t, k}}} \right\}$ 
  denotes the minimum possible value of effective stepsize at time $t$ 
  over all possible coordinate and the past noisy gradients $\{g_{i}\}_{i=1}^{t}$.
  And $S_{1}(T)$, $S_{2}(T)$ are functions that are all unrelated to random variables.
\end{myTheo}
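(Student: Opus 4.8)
The plan is to deduce Theorem \ref{th3.2} directly from the master inequality \eqref{eq17} of Theorem \ref{th3.1} by lower-bounding the left-hand side in terms of $\min_{t\in\mathcal T}\mathbb E\|\nabla f(x_t)\|^2$ and then dividing. First I would observe that since $\hat V_t=\mathrm{diag}(\hat v_t)$ is diagonal with strictly positive entries, each summand on the left of \eqref{eq17} satisfies
\begin{equation}
\left\langle\nabla f(x_t),\alpha_t\hat V_t^{-\frac12}\nabla f(x_t)\right\rangle
=\sum_{k=1}^d\frac{\alpha_t}{\sqrt{\hat v_{t,k}}}\,(\nabla f(x_t))_k^2
\geq \Big(\min_{k\in[d]}\frac{\alpha_t}{\sqrt{\hat v_{t,k}}}\Big)\|\nabla f(x_t)\|^2
\geq \tau_t\,\|\nabla f(x_t)\|^2,
\end{equation}
where the last step uses that $\tau_t$ is the infimum of $\alpha_t/\sqrt{\hat v_{t,k}}$ over coordinates \emph{and} over all admissible gradient histories, hence is a deterministic lower bound that can be pulled outside the expectation. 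Summing over $t$ gives
\begin{equation}
\Big(\sum_{t=1}^T\tau_t\Big)\,\min_{t\in\mathcal T}\mathbb E\|\nabla f(x_t)\|^2
\;\leq\;\sum_{t=1}^T\tau_t\,\mathbb E\|\nabla f(x_t)\|^2
\;\leq\;\mathbb E\!\left[\sum_{t=1}^T\left\langle\nabla f(x_t),\alpha_t\hat V_t^{-\frac12}\nabla f(x_t)\right\rangle\right].
\end{equation}

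Next I would combine this with \eqref{eq17}: the right-hand side of \eqref{eq17} is exactly $O(S_1(T))$ by definition, and $\sum_{t=1}^T\tau_t=\Omega(S_2(T))$, so
\begin{equation}
\min_{t\in\mathcal T}\mathbb E\|\nabla f(x_t)\|^2
\;\leq\;\frac{\mathbb E\big[\sum_{t=1}^T\langle\nabla f(x_t),\alpha_t\hat V_t^{-\frac12}\nabla f(x_t)\rangle\big]}{\sum_{t=1}^T\tau_t}
\;=\;O\!\left(\frac{S_1(T)}{S_2(T)}\right),
\end{equation}
which is precisely \eqref{eq18}. The step that needs the most care is the transition from the stochastic ratio to the deterministic asymptotic statement: one must confirm that $S_1(T)$ and $S_2(T)$, as introduced in the theorem, are the deterministic envelopes of the numerator and denominator (this is guaranteed because $\tau_t$ is defined via a worst-case minimization over $\{g_i\}$, and because each of $C_1,\dots,C_4$ is independent of $T$ so the expectations on the right of \eqref{eq17} are themselves controlled by $T$-indexed deterministic quantities), and that the denominator is positive for every finite $T$ so the division is legitimate.

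I expect the main obstacle — really the only nontrivial point — to be justifying that $\tau_t$ can be extracted as a constant outside the expectation, i.e., that the effective stepsize $\alpha_t/\sqrt{\hat v_{t,k}}$ is bounded below uniformly over the randomness by the quantity $\tau_t$ defined in the statement; this follows from the definition of $\tau_t$ as the minimum over all possible past noisy gradients, but it is worth spelling out that $\hat v_{t,k}$ depends on the sample path only through $\{g_i\}_{i=1}^t$ (via \eqref{eq13}--\eqref{eq15}), so the worst-case minimization in the definition of $\tau_t$ indeed dominates every realization. Everything else is the elementary chain of inequalities above together with the observation that, since the bound holds for the minimum over $t\in\mathcal T$ on the left, no control of individual terms $\mathbb E\|\nabla f(x_t)\|^2$ is needed.
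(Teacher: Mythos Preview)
Your proposal is correct and follows essentially the same approach as the paper: lower-bound the left side of \eqref{eq17} by $\sum_t \tau_t\,\mathbb E\|\nabla f(x_t)\|^2$, pull out the minimum, and divide. The paper's proof is identical in structure, only slightly more terse in recording the $O/\Omega$ constants $K_1,K_2$ explicitly; your extra care in justifying that $\tau_t$ is deterministic (so it factors through the expectation) is exactly the point the paper leaves implicit.
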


\begin{proof}
  See Appendix \ref{prth32}.
\end{proof}

The theorem \ref{th3.2} is the direct result of theorem \ref{th3.1}, which
provides a sufficient condition that guarantees convergence of our CG-like-Adam algorithm:
the Right Hand Side(RHS) of Eq.\eqref{eq17} increases much slower than the sum of the 
minimum possible value of effective stepsize as $T \rightarrow +\infty$. Equivalently,
when $T \rightarrow +\infty$, $S_{2}(T)$ grows slower than $S_{1}(T)$: $S_{2}(T)=o(S_{1}(T))$.

The conclusions of theorem \ref{th3.1} and \ref{th3.2} are similar to those of \citep{ref16}, 
but our theorem is extended to the case where the moment estimation is unbiased, 
which makes the conclusion more universal.

If the learning rate $\alpha_{t}$ is specified as $\frac{\alpha}{t^{b}}$ and 
$\beta_{1t}$ is a constant for all $t \in \mathcal{T}$, then the 
following corollary describes the convergence rate of 
CG-like-Adam(Alg.\ref{alg:CG-like-Adam}):
\begin{myCor}\label{cor31}
  Suppose that the assumptions A\ref{ass31}-A\ref{ass34} and the 
  conditions of Th.\ref{th3.1} are satisfied.
  $\forall t \in \mathcal{T}, \beta_{1 t}=\beta_{1 (t+1)},\alpha_{t} = \frac{\alpha}{t^{b}},b \in [\frac{1}{2},1)$, and 
  $\exists c \in \mathbb{R}^{+}, \forall i \in [d], |g_{1,i}| \geqslant c$, 
  then the following holds:
  \begin{equation}\label{eq19}
      \min _{t \in \mathcal{T}} \mathbb{E} \left[\left\|\nabla f\left(x_{t}\right)\right\|^{2}\right]
      \leqslant \frac{Q_{1}}{T^{1-b}} \left(Q_{2}\ln T+Q_{3}\right) ,
  \end{equation}
  where $Q_{1}$, $Q_{2}$ and $Q_{3}$ are both constant independent of $T$.
\end{myCor}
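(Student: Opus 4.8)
The plan is to feed the corollary's special choices into the master inequality~\eqref{eq17} of Theorem~\ref{th3.1} and estimate its four right-hand terms explicitly, so that $Q_1,Q_2,Q_3$ emerge as concrete $T$-independent constants (working from~\eqref{eq17} rather than from the $O(\cdot)$ form of Theorem~\ref{th3.2} avoids hidden constants). First I would record the simplifications from $\beta_{1t}\equiv\beta_{11}=:\beta_1$ and $\alpha_t=\alpha/t^b$: then $\xi_t=(1-\beta_1^t)-\beta_1(1-\beta_1^{t-1})=1-\beta_1$, so $\mu_t=\alpha_t(1-\beta_1)/\xi_t=\alpha/t^b$, which in particular is decreasing in $t$. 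The remaining hypotheses of Theorem~\ref{th3.1} (monotonicity of $\beta_{1t}$, the bound $\|\alpha_t\hat V_t^{-\frac12}\hat m_t\|\le G$) are already assumed; I would merely note that they are consistent with the setting below.

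The key preparatory step is a two-sided bound $c^2\le\hat v_{t,k}\le V_{\max}$ for every $t\in\mathcal T$, $k\in[d]$. The lower bound is immediate: $\gamma_1=0$ and $d_0=0$ give $d_1=g_1$, hence $\hat v_{1,k}=g_{1,k}^2\ge c^2$, and the step $\hat v_t:=\max\{\hat v_{t-1},\hat v_t\}$ makes $\hat v_{t,k}$ nondecreasing in $t$. For the upper bound one first needs a uniform bound $\|d_t\|\le D$ on the conjugate-gradient-like direction: from $d_t=g_t-(\gamma_t/t^a)d_{t-1}$, $\|g_t\|\le H$ (Assumption~A\ref{ass34}), and a uniform bound $|\gamma_t|\le\Gamma$ on the conjugate coefficient (for which the coordinatewise floor $|g_{1,i}|\ge c$ and A\ref{ass34} are the relevant ingredients), the factor $|\gamma_t|/t^a\le\Gamma/t^a$ drops below $\tfrac12$ once $t$ is large, so $\|d_t\|$ is controlled by a geometric tail plus finitely many bounded initial terms. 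Then $v_{t,k}=(1-\beta_2)\sum_{j\le t}\beta_2^{t-j}d_{j,k}^2\le D^2$, so $\hat v_{t,k}\le D^2/(1-\beta_2)=:V_{\max}$, a bound preserved by the maximum. (The same $\|d_t\|\le D$ gives $\|m_t\|\le D$, $\|\hat m_t\|\le D/(1-\beta_1)$, so $\|\alpha_t\hat V_t^{-\frac12}\hat m_t\|\le\alpha D/(c(1-\beta_1))=:G$.)

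With these bounds I would estimate the right side of~\eqref{eq17}. Term~1: $\mu_t\hat v_{t,k}^{-1/2}$ is a product of two nonincreasing sequences, hence nonincreasing in $t$, so $\sum_{t=2}^T|\mu_t\hat v_{t,k}^{-1/2}-\mu_{t-1}\hat v_{t-1,k}^{-1/2}|$ telescopes to $\mu_1\hat v_{1,k}^{-1/2}-\mu_T\hat v_{T,k}^{-1/2}\le\alpha/c$, giving Term~1 $\le C_1 d\alpha/c=O(1)$. Term~2: $\|\alpha_t\hat V_t^{-\frac12}d_t\|^2\le(\alpha^2/(c^2t^{2b}))\|d_t\|^2\le\alpha^2D^2/(c^2t^{2b})$, and since $b\in[\tfrac12,1)$ we get $\sum_{t=1}^{T-1}t^{-2b}=O(\ln T)$, the logarithm occurring only at $b=\tfrac12$. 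Term~3 $=C_3=O(1)$; Term~4 $=C_4\alpha\sum_{t=1}^T|\gamma_t|t^{-(a+b)}\le C_4\alpha\Gamma\sum_t t^{-(a+b)}=O(\ln T)$, since $a+b\ge1$ with equality only at $a=b=\tfrac12$. Hence the right side of~\eqref{eq17} is at most $Q_2'\ln T+Q_3'$. For the left side, $\langle\nabla f(x_t),\alpha_t\hat V_t^{-\frac12}\nabla f(x_t)\rangle=\alpha_t\sum_k\hat v_{t,k}^{-1/2}(\nabla f(x_t))_k^2\ge\tau_t\|\nabla f(x_t)\|^2$ by the definition of $\tau_t$, so~\eqref{eq17} gives $(\sum_{t=1}^T\tau_t)\,\min_{t\in\mathcal T}\mathbb E\|\nabla f(x_t)\|^2\le Q_2'\ln T+Q_3'$. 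Finally $\sum_{t=1}^T\tau_t\ge(\alpha/\sqrt{V_{\max}})\sum_{t=1}^T t^{-b}\ge(\alpha/\sqrt{V_{\max}})((T+1)^{1-b}-1)/(1-b)\ge c''T^{1-b}$, and dividing yields $\min_{t\in\mathcal T}\mathbb E\|\nabla f(x_t)\|^2\le(Q_2'\ln T+Q_3')/(c''T^{1-b})$, i.e.~\eqref{eq19} with $Q_1=1/c''$, $Q_2=Q_2'$, $Q_3=Q_3'$.

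I expect the main obstacle to be the uniform bound $\|d_t\|\le D$ (equivalently $\hat v_{t,k}\le V_{\max}$): this is exactly where the coordinatewise floor $|g_{1,i}|\ge c$ is indispensable---both to bound the conjugate coefficient $\gamma_t$ and to keep $\hat v_{t,k}$ away from $0$---and where one must verify that the damping $1/t^a$ with $a\ge\tfrac12$ is strong enough that the term $\gamma_t d_{t-1}$ cannot make $\|d_t\|$ grow with $t$. The remainder---spotting the telescoping in Term~1 and tracking which sums degrade to $\ln T$ at the boundary cases $b=\tfrac12$ and $a=b=\tfrac12$---is routine bookkeeping.
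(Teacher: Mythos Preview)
Your proposal is correct and follows essentially the same route as the paper's proof: reduce $\mu_t$ to $\alpha_t$ under constant $\beta_{1t}$, telescope Term~1 to $O(1)$, bound Terms~2 and~4 by $O(\ln T)$ via the floor $\hat v_{t,k}\ge c^2$ and the uniform bound $\|d_t\|\le\bar H$, and lower-bound the left side of~\eqref{eq17} using the upper bound on $\hat v_{t,k}$ to extract the factor $T^{1-b}$. One small correction: the uniform bound $|\gamma_t|\le\bar\gamma$ (your $\Gamma$) is a standing hypothesis in the paper, entering through Lemma~\ref{lem10} and already needed for Theorem~\ref{th3.1}, rather than something the coordinatewise floor $|g_{1,i}|\ge c$ can deliver---that floor only controls $g_1$, not later gradients, and its actual role in the proof is solely to bound $\hat v_{t,k}$ from below.
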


\begin{proof}
  See Appendix \ref{prcor31}.
\end{proof}

The corollary \ref{cor31} indicates that the best convergence rate 
of CG-like-Adam(Alg.\ref{alg:CG-like-Adam}) with the learning rate in the form of
$\alpha_{t} = \frac{\alpha}{t^{b}}$ will be $\frac{\ln T}{\sqrt{T}}$ 
when $b=\frac{1}{2}$. The derived best convergence rate still falls short of the
fastest rate of first-order methods($\frac{1}{\sqrt{T}}$) due to an additional 
factor $\log T$. 
$\alpha_{t}=\alpha (\forall t \in \mathcal{T})$ is usually adopted in practice 
to mitigate the slowdown\citep{ref16},
yet it is still an open problem of convergence rate analysis in this case.

In our theoretical analysis, the positive definite matrix 
$\varepsilon I$ in the algorithm did not be taken into account for the reason of convenience.
Practical implementations may require adding $\varepsilon I$ 
for numerical stability. However, that does not lead to the loss of generality of our 
analysis since $\varepsilon I$ can be very easy converted and incorporated
into $\hat{v}_{t}$. Therefore, the assumption 
$|g_{1,i}| \geqslant c \ (\forall i \in [d])$ in the corollary \ref{cor31} 
is a mild assumption which means it could be easy to hold in practice.

\section{Experiments}
\label{}
In this section, we conduct several experiments of image classification task
on the benchmark dataset CIFAR-10\citep{ref26} and CIFAR-100\citep{ref26} using popular 
network VGG-19\citep{ref27} and ResNet-34\citep{ref28} to numerically compare 
CG-like-Adam(Alg.\ref{alg:CG-like-Adam}) with CoBA\citep{ref23} and Adam\citep{ref12}.
Specifically, VGG-19 was trained on CIFAR-10 and ResNet-34 was trained on CIFAR-100.

The CIFAR-10 and CIFAR-100 dataset both consist of 60000 32$\times$32 colour images 
with 10 and 100 classes respectively. Both two datasets are 
divided into training and testing datasets, which includes 50000 training images and 
10000 test images respectively.
The test batch contains exactly 100 or 1000 randomly-selected images from each class.

VGG-19, proposed by Karen Simonyan\citep{ref27} in 2014, consisting of 16 
convolution layers and 3 fully-connected layers, is a concise network structure 
made up of 5 vgg-blocks with consecutive 3$\times$3 convolutional kernels in each vgg-block.
The final fully-connected layer is 1000-way with a softmax function.
The ResNet-34 network, proposed by He\citep{ref28}, incorporates residual unit through a short-cut connection, 
enhancing the network's learning ability. ResNet-34 is organized as a
7$\times$7 convolution layer, four convolution-blocks including total 32 convolution layers 
with 3$\times$3 convolutional kernels and finally a 1000-way-fully-connected layer 
with a softmax function. VGG-19 and ResNet-34 have a strong ability to extract image features.
All networks was trained for 200 epochs on NVIDIA Tesla V100 GPU.

$\beta_{1t}=0.9(\forall t \in \mathcal{T})$ and $\beta_{2}=0.999$ are set as default values
for all optimizers, Adam, CoBA and CG-like-Adam(ours, Alg.\ref{alg:CG-like-Adam}). 
$\lambda=2$ of HZ(see Eq.\eqref{eq8}) and $a=1+10^{-5}$ are set as default values for CoBA 
and CG-like-Adam. $M=10^{-4}$ is the default value for CoBA.
The cross entropy is used as the loss function for all experiments for 
the reason of commonly used strategy in image classification.
Besides, batch size is set 512 as default.

\subsection{Compare CG-like-Adam with CoBA}
We firstly compare CG-like-Adam with CoBA. All the algorithms are run under different learning rates 
$\alpha_{t} \in \{10^{-3},10^{-4},10^{-5},10^{-6}\}, \forall t \in \mathcal{T}$.
Figure \ref{fig1}-\ref{fig5}, \ref{fig6}-\ref{fig10} show the results of the experiments of 
VGG-19 on CIFAR-10, and ResNet-34 on CIFAR-100, respectively.

\begin{figure}[htbp]
	\centering
	\subfloat[(Log.) Training Loss\label{fig1a}]{
		\centering
		\includegraphics[width=0.31\linewidth]{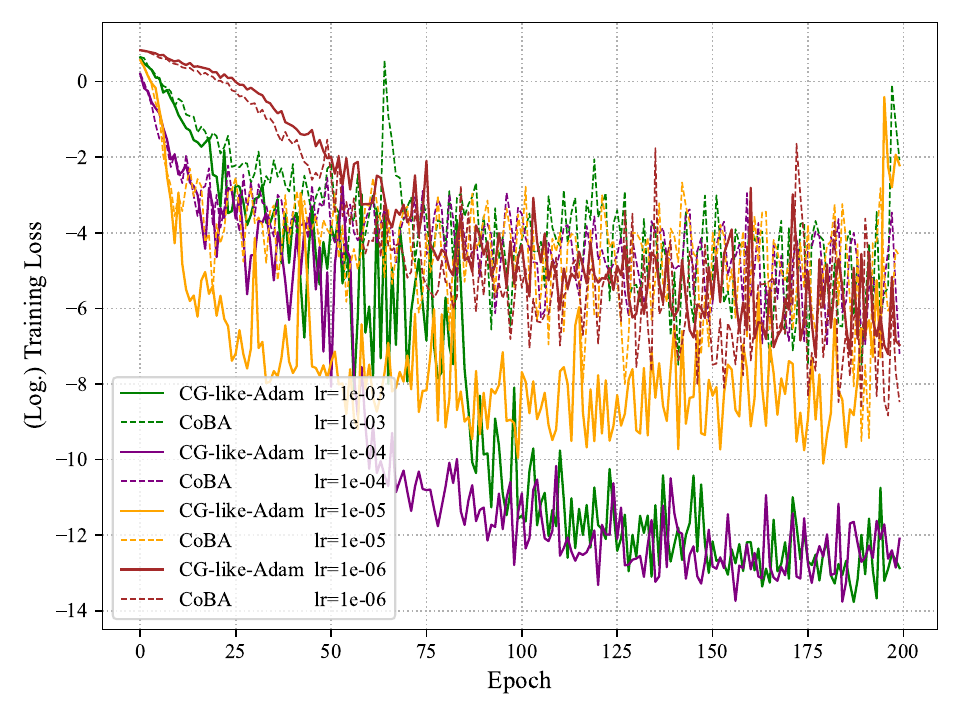}
	 }
    \centering
	\subfloat[Training Accuracy\label{fig1b}]{
		\centering
		\includegraphics[width=0.31\linewidth]{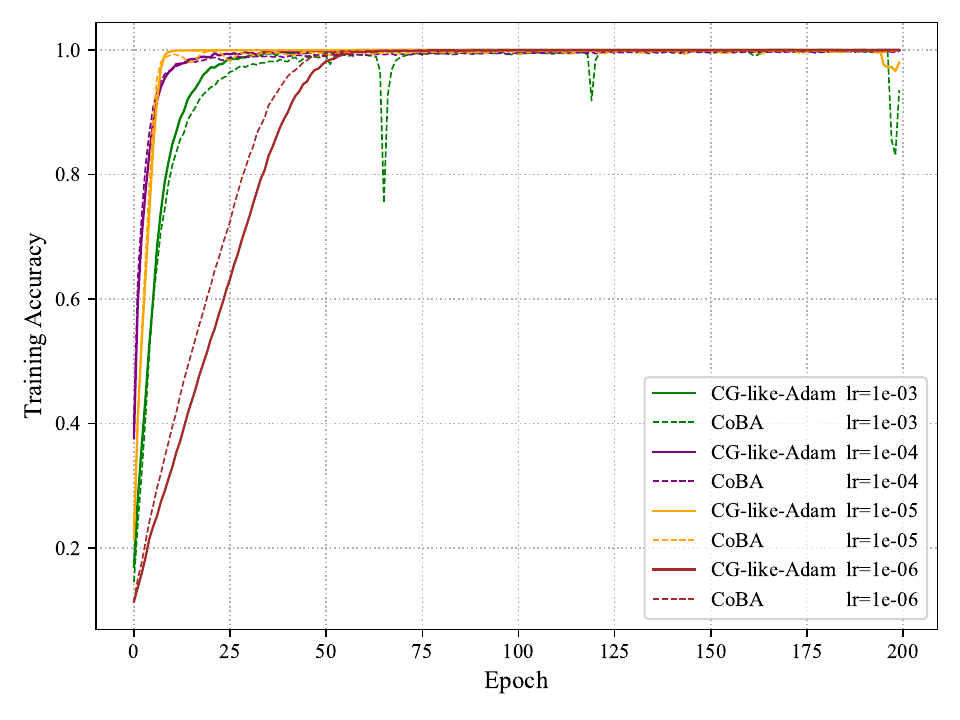}
     }
    \centering
	\subfloat[Testing Accuracy\label{fig1c}]{
		\centering
		\includegraphics[width=0.31\linewidth]{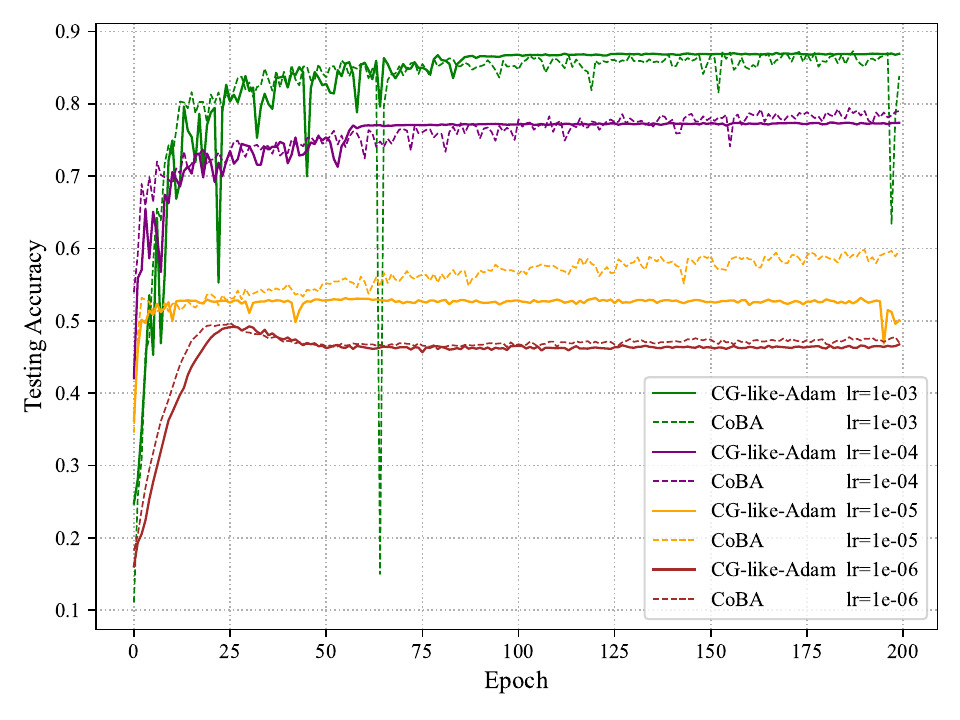}
     }
    \caption{CG-like-Adam V.S. CoBA under different learning rates. 
    (VGG-19, CIFAR-10, HS(\ref{eq4}))}
	\label{fig1}
\end{figure}

\begin{figure}[htbp]
	\centering
	\subfloat[(Log.) Training Loss\label{fig2a}]{
		\centering
		\includegraphics[width=0.31\linewidth]{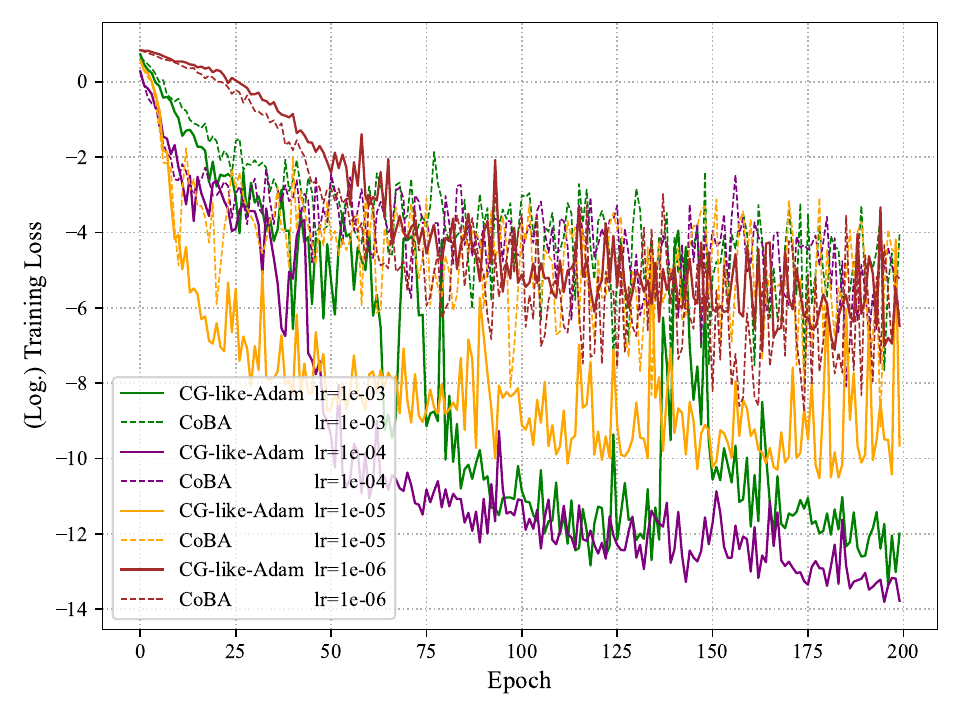}
    }
    \centering
	\subfloat[Training Accuracy\label{fig2b}]{
		\centering
		\includegraphics[width=0.31\linewidth]{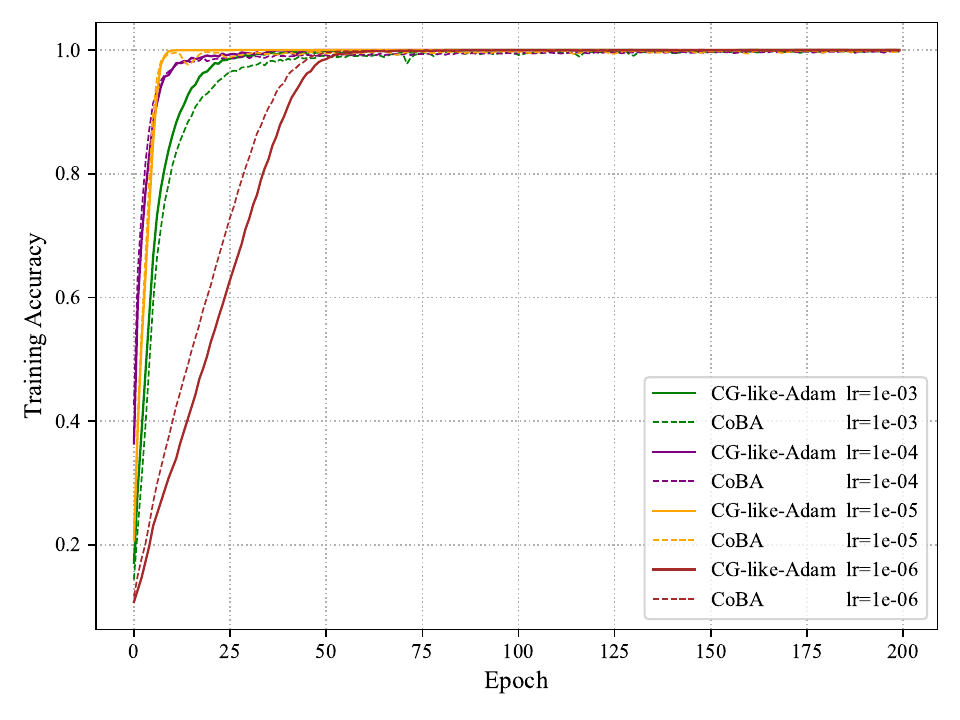}
    }
    \centering
	\subfloat[Testing Accuracy\label{fig2c}]{
		\centering
		\includegraphics[width=0.31\linewidth]{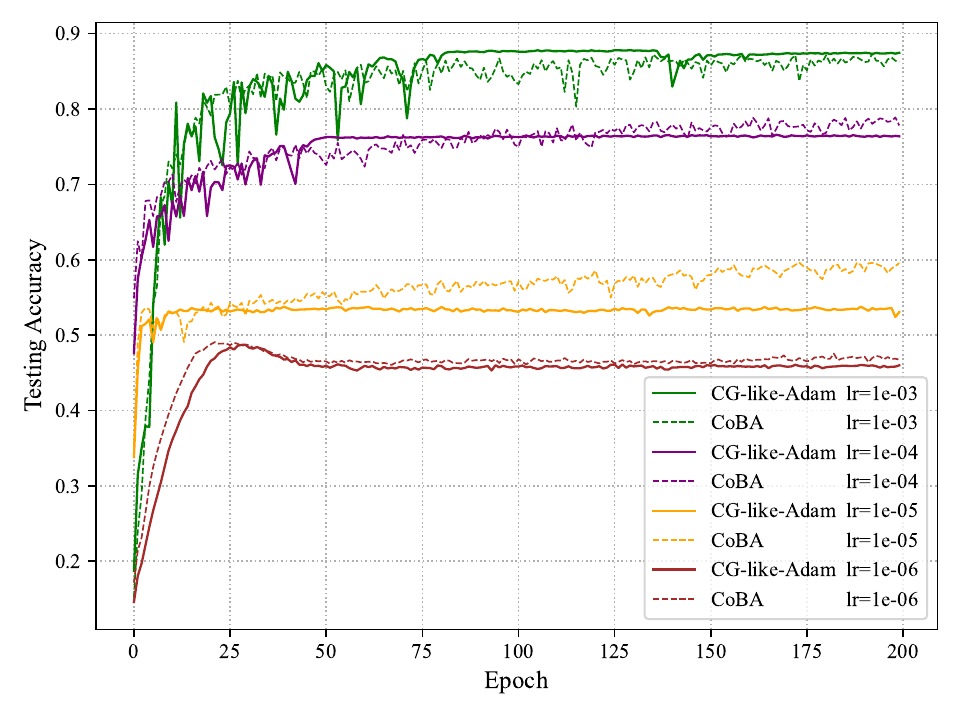}
     }
    \caption{CG-like-Adam V.S. CoBA under different learning rates. 
    (VGG-19, CIFAR-10, FR(\ref{eq5}))}
	\label{fig2}
\end{figure}

\begin{figure}[htbp]
	\centering
	\subfloat[(Log.) Training Loss\label{fig3a}]{
		\centering
		\includegraphics[width=0.31\linewidth]{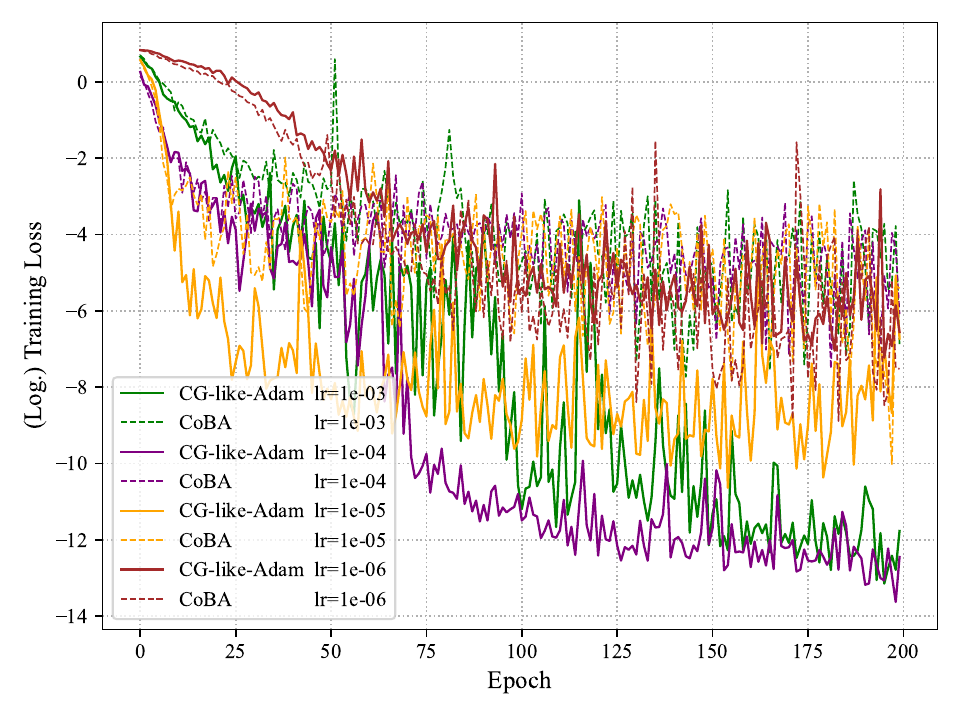}
    }
    \centering
	\subfloat[Training Accuracy\label{fig3b}]{
		\centering
		\includegraphics[width=0.31\linewidth]{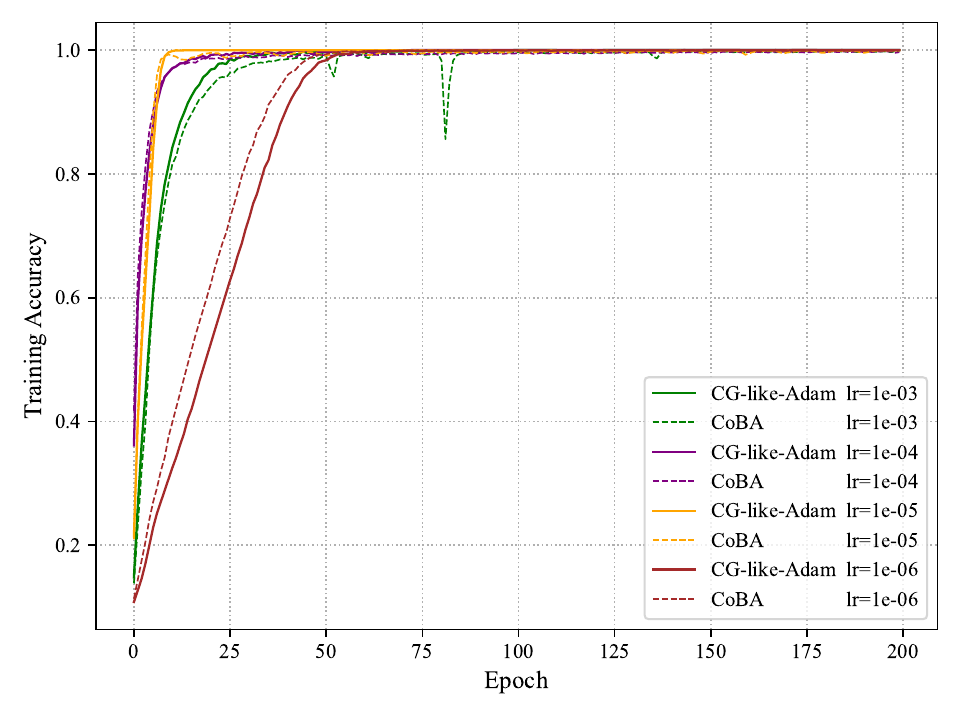}
    }
    \centering
	\subfloat[Testing Accuracy\label{fig3c}]{
		\centering
		\includegraphics[width=0.31\linewidth]{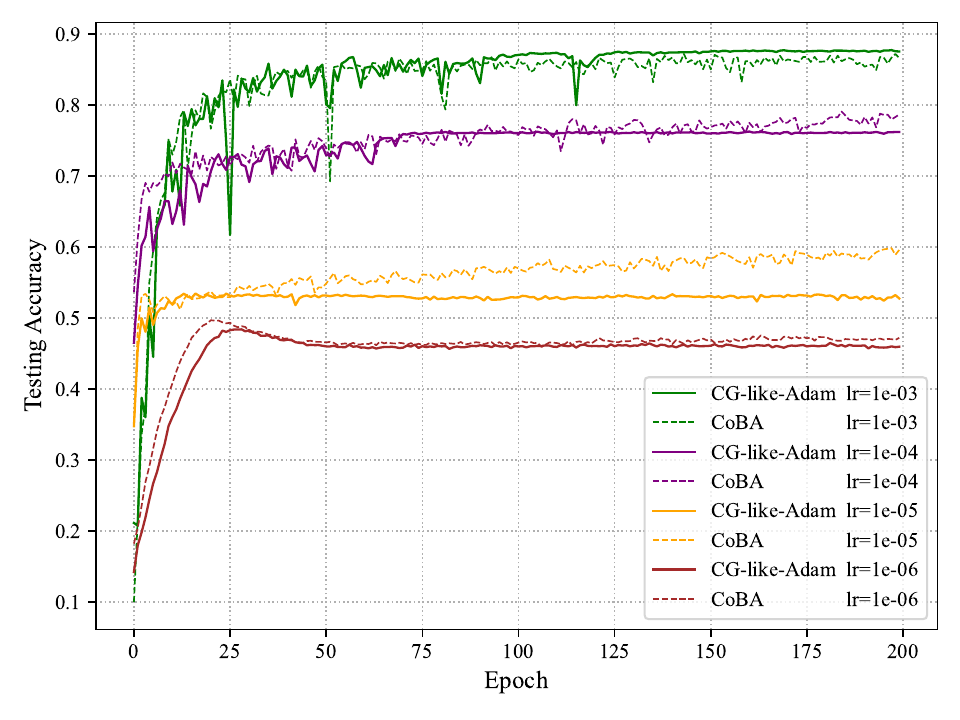}
    }
    \caption{CG-like-Adam V.S. CoBA under different learning rates. 
    (VGG-19, CIFAR-10, PRP(\ref{eq6}))}
	\label{fig3}
\end{figure}

\begin{figure}[htbp]
	\centering
	\subfloat[(Log.) Training Loss\label{fig4a}]{
		\centering
		\includegraphics[width=0.31\linewidth]{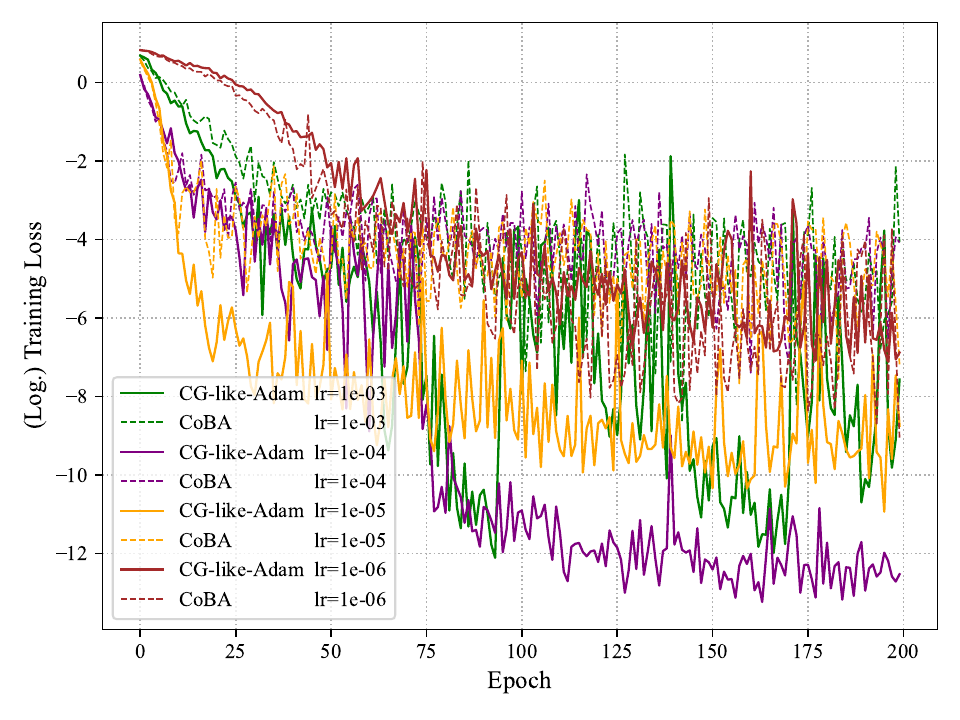}
}
    \centering
	\subfloat[Training Accuracy\label{fig4b}]{
		\centering
		\includegraphics[width=0.31\linewidth]{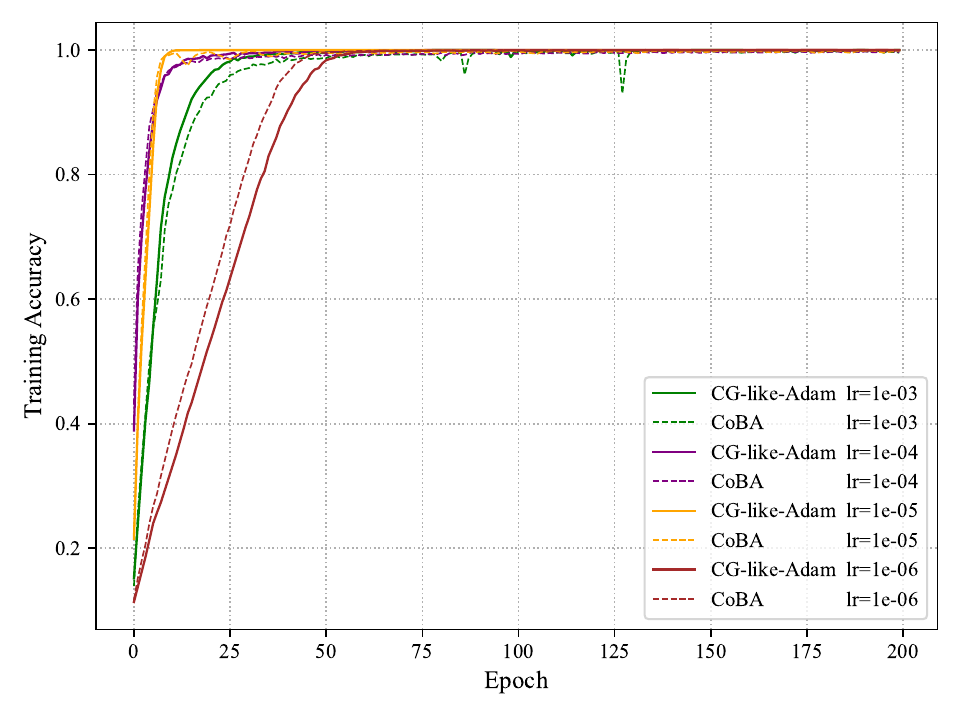}
}
    \centering
	\subfloat[Testing Accuracy\label{fig4c}]{
		\centering
		\includegraphics[width=0.31\linewidth]{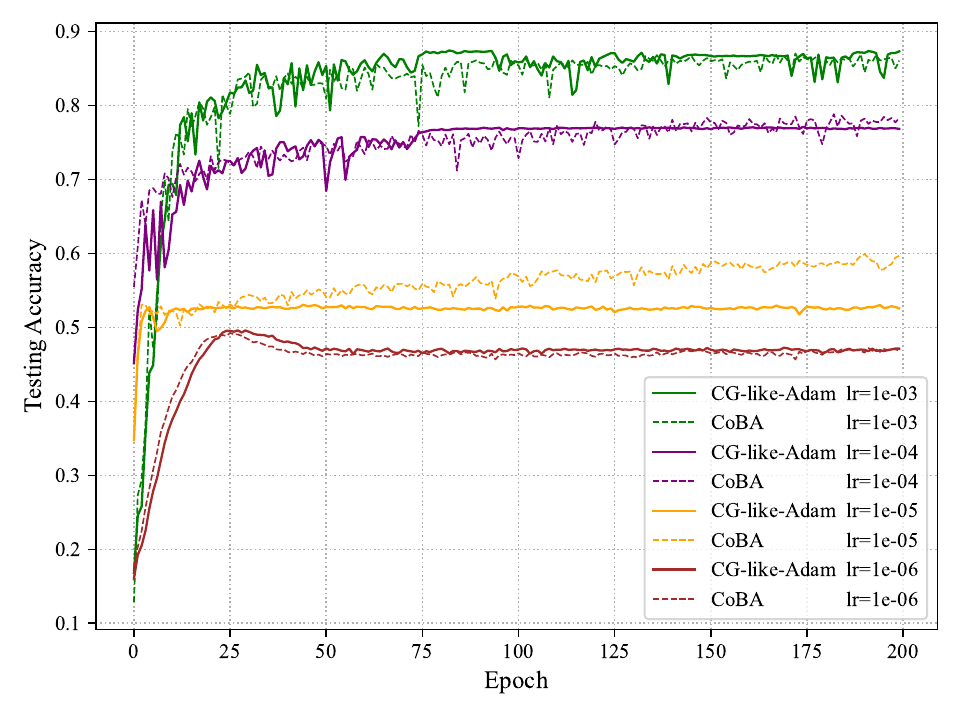}
}
    \caption{CG-like-Adam V.S. CoBA under different learning rates. 
    (VGG-19, CIFAR-10, DY(\ref{eq7}))}
	\label{fig4}
\end{figure}

\begin{figure}[htbp]
	\centering
	\subfloat[(Log.) Training Loss\label{fig5a}]{
		\centering
		\includegraphics[width=0.31\linewidth]{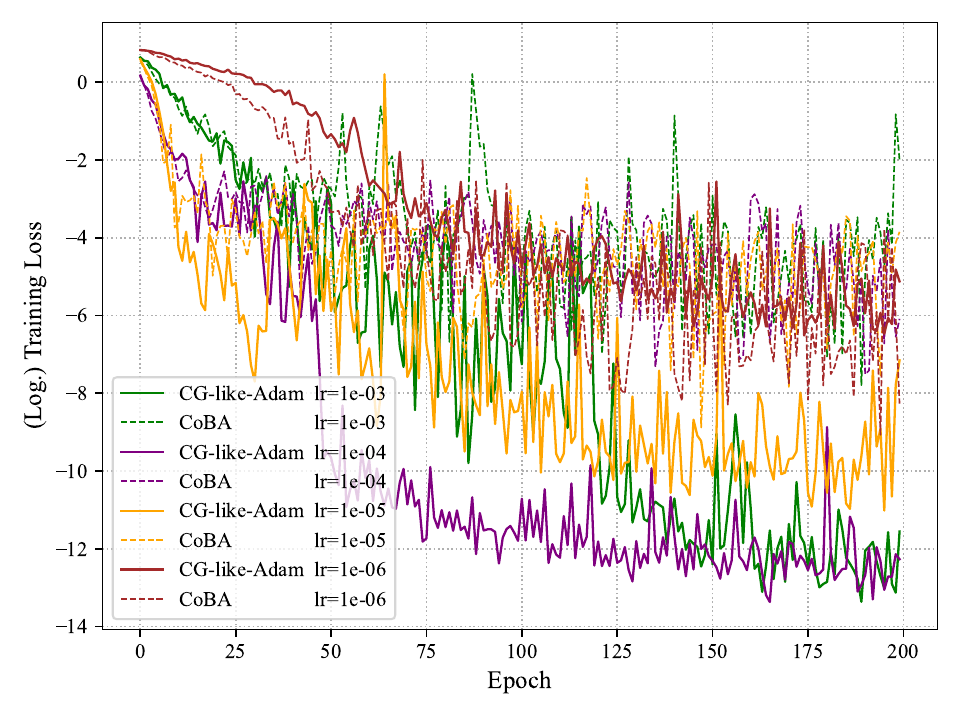}
}
    \centering
	\subfloat[Training Accuracy\label{fig5b}]{
		\centering
		\includegraphics[width=0.31\linewidth]{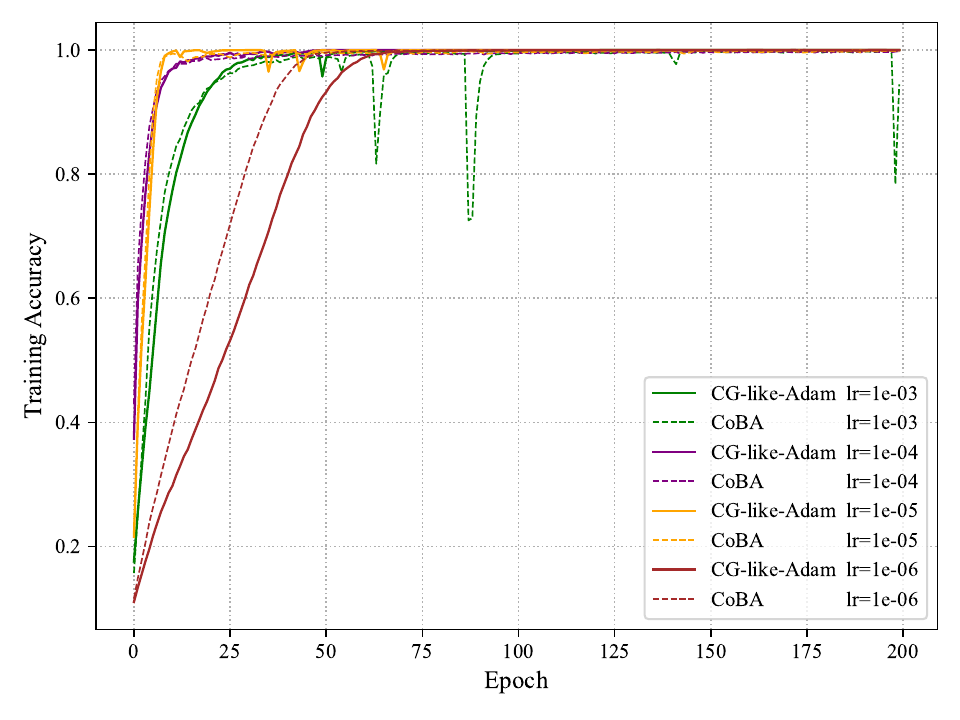}
}
    \centering
	\subfloat[Testing Accuracy\label{fig5c}]{
		\centering
		\includegraphics[width=0.31\linewidth]{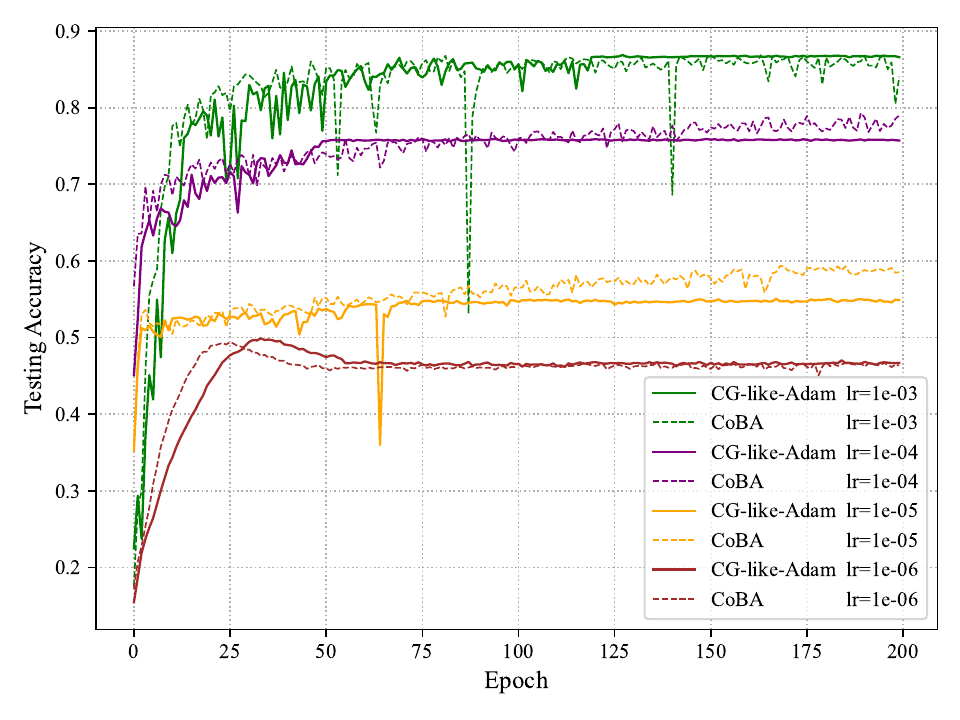}
}
    \caption{CG-like-Adam V.S. CoBA under different learning rates. 
    (VGG-19, CIFAR-10, HZ(\ref{eq8}))}
	\label{fig5}
\end{figure}

From figure \ref{fig1}-\ref{fig5}, as you can see, no matter what type of conjugate coefficient 
calculation method is employed, CG-like-Adam keeps the training loss of VGG-19 to the minimum, 
except the learning rate $\alpha_{t}=10^{-6}$. The training accuracy finally hits 100\% 
through the optimization of CG-like-Adam and CoBA, however, CG-like-Adam achieves this 
goal faster and more stable. More importantly, our algorithm performs better than CoBA 
on test dataset when the learning rate $\alpha_{t}=10^{-3}$, although at other learning rates, 
the testing accuracy of CG-like-Adam is similar to or a little worse than that of CoBA.

\begin{figure}[htbp]
	\centering
	\subfloat[(Log.) Training Loss\label{fig6a}]{
		\centering
		\includegraphics[width=0.31\linewidth]{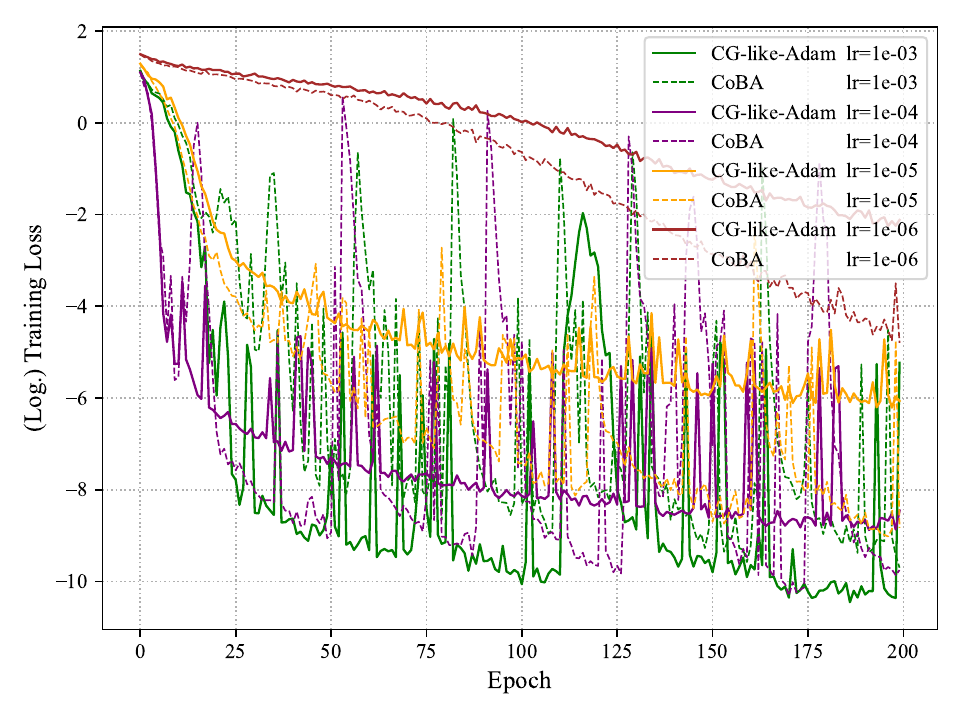}
}
    \centering
	\subfloat[Training Accuracy\label{fig6b}]{
		\centering
		\includegraphics[width=0.31\linewidth]{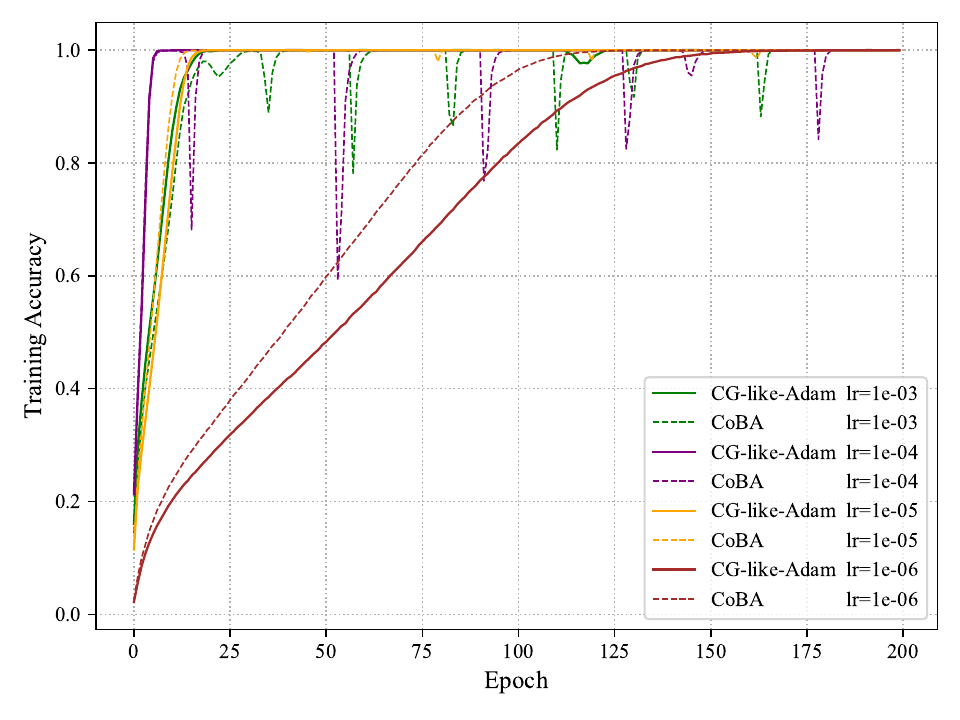}
}
    \centering
	\subfloat[Testing Accuracy\label{fig6c}]{
		\centering
		\includegraphics[width=0.31\linewidth]{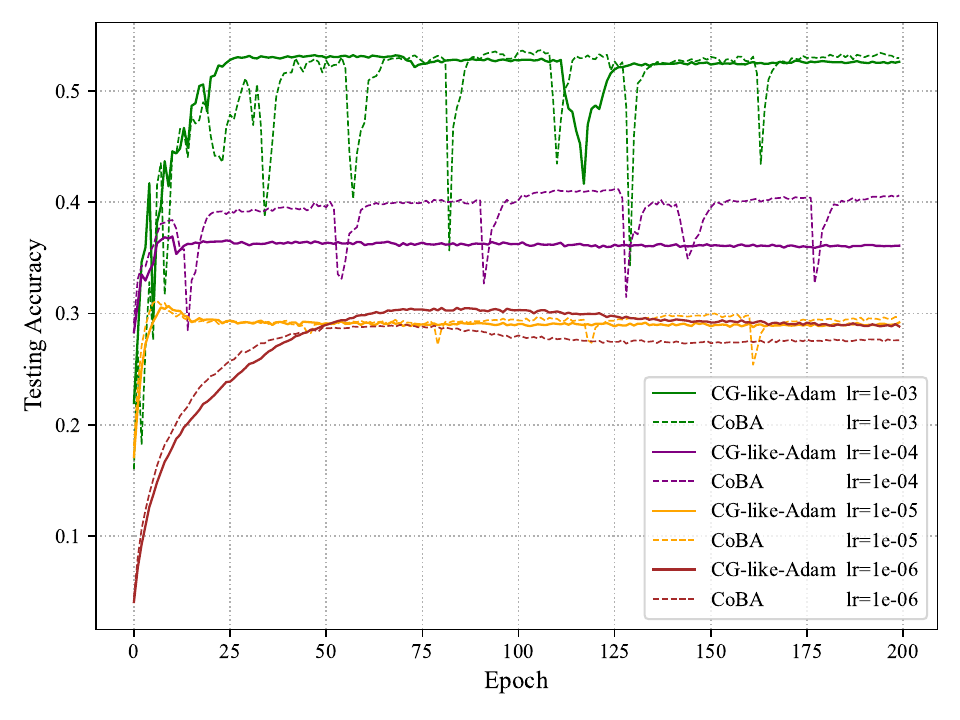}
}
    \caption{CG-like-Adam V.S. CoBA under different learning rates. 
    (ResNet-34, CIFAR-100, HS(\ref{eq4}))}
	\label{fig6}
\end{figure}

\begin{figure}[htbp]
	\centering
	\subfloat[(Log.) Training Loss\label{fig7a}]{
		\centering
		\includegraphics[width=0.31\linewidth]{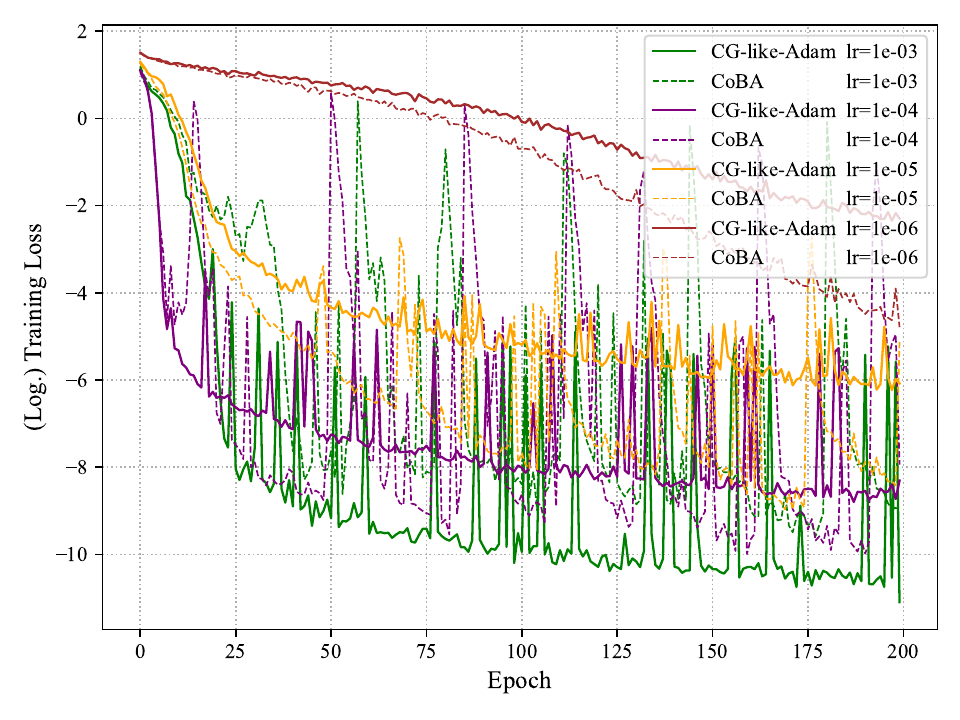}
}
    \centering
	\subfloat[Training Accuracy\label{fig7b}]{
		\centering
		\includegraphics[width=0.31\linewidth]{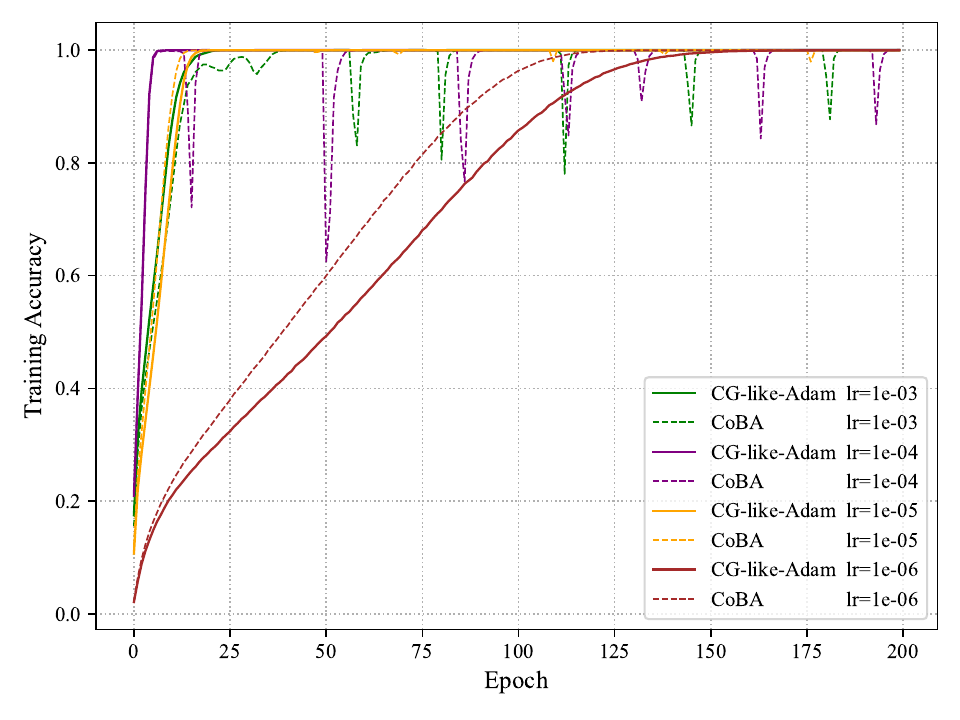}
}
    \centering
	\subfloat[Testing Accuracy\label{fig7c}]{
		\centering
		\includegraphics[width=0.31\linewidth]{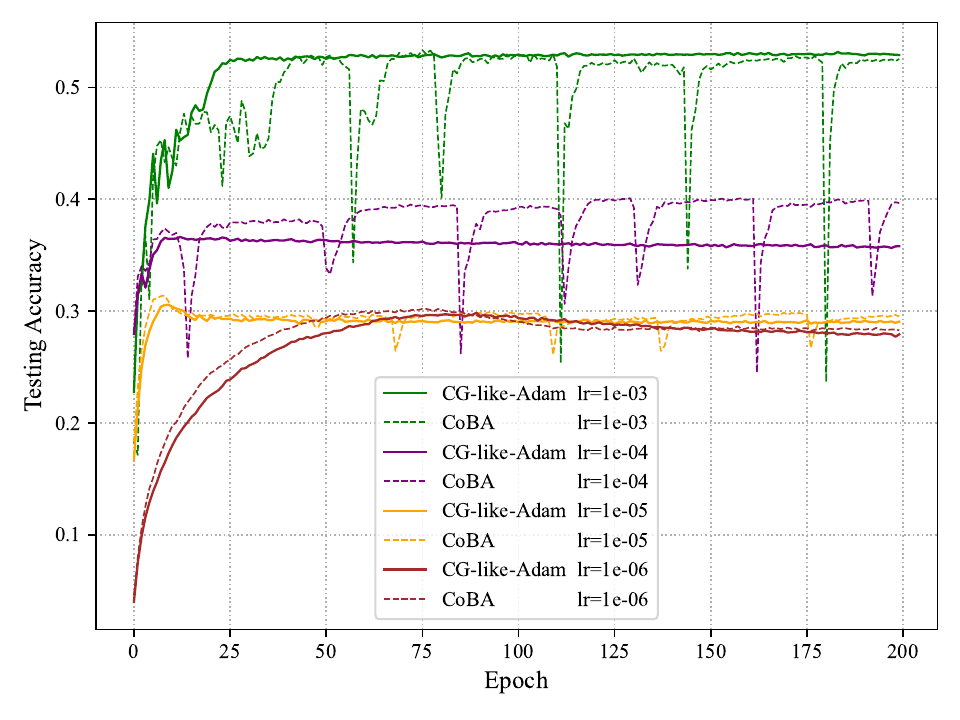}
}
    \caption{CG-like-Adam V.S. CoBA under different learning rates. 
    (ResNet-34, CIFAR-100, FR(\ref{eq5}))}
	\label{fig7}
\end{figure}

\begin{figure}[htbp]
	\centering
	\subfloat[(Log.) Training Loss\label{fig8a}]{
		\centering
		\includegraphics[width=0.31\linewidth]{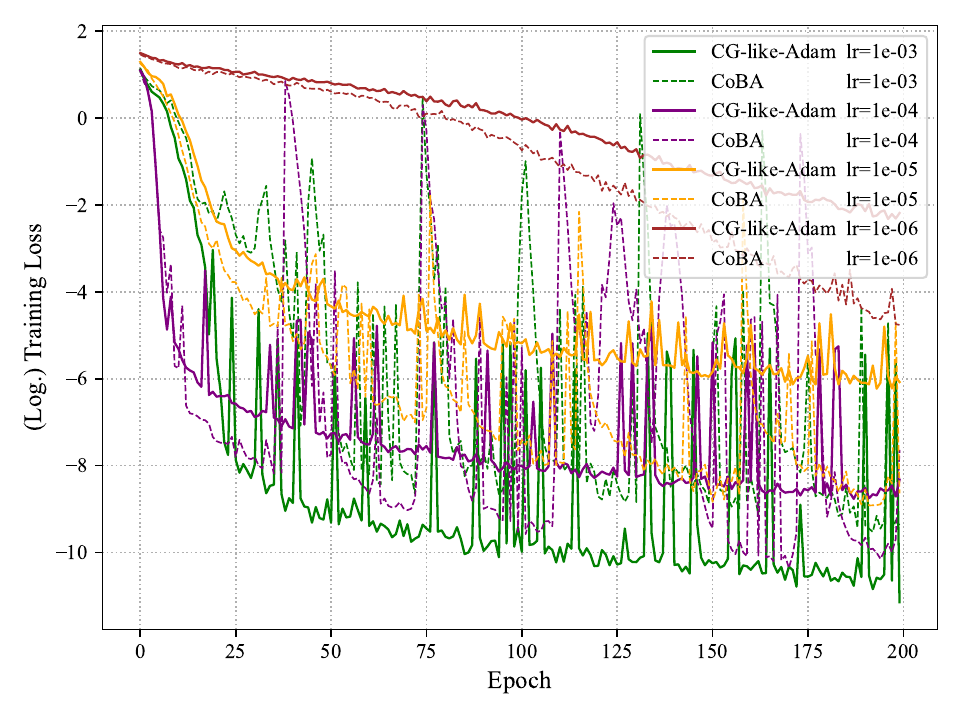}
}
    \centering
	\subfloat[Training Accuracy\label{fig8b}]{
		\centering
		\includegraphics[width=0.31\linewidth]{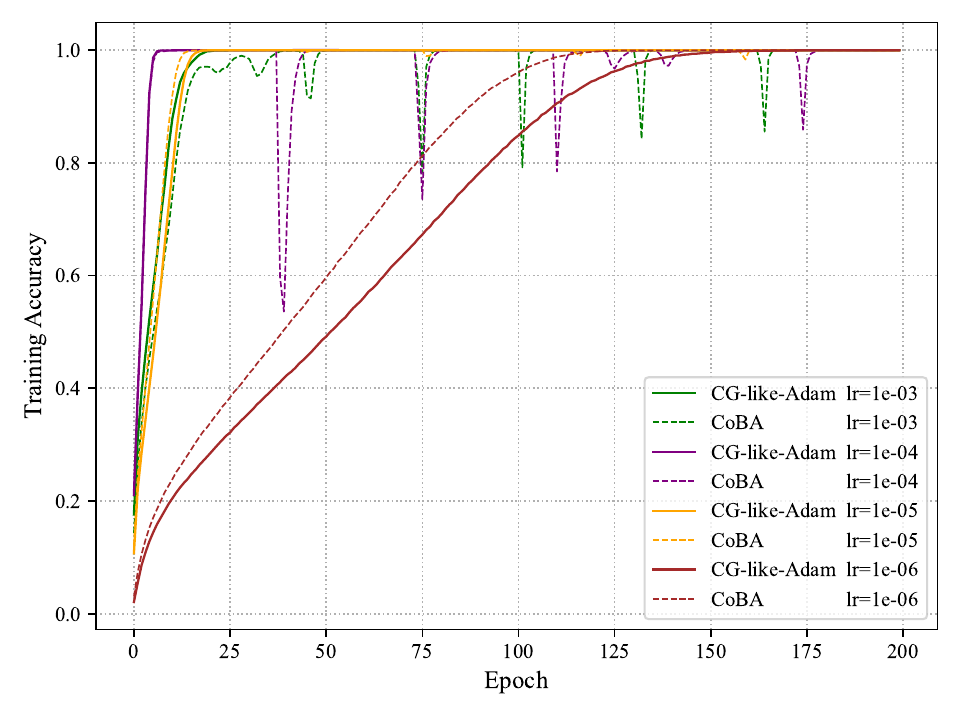}
}
    \centering
	\subfloat[Testing Accuracy\label{fig8c}]{
		\centering
		\includegraphics[width=0.31\linewidth]{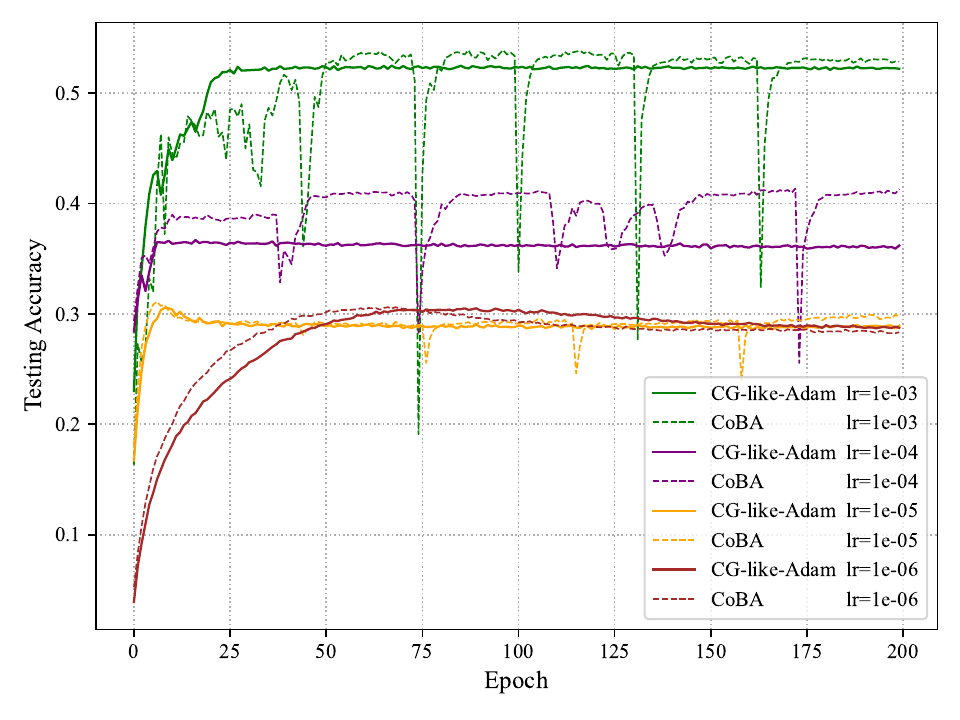}
}
    \caption{CG-like-Adam V.S. CoBA under different learning rates. 
    (ResNet-34, CIFAR-100, PRP(\ref{eq6}))}
	\label{fig8}
\end{figure}

Figure \ref{fig6}-\ref{fig10} show the results of the experiments of ResNet-34 on
CIFAR-100. Although training loss failed to reach the minimum unless the learning rate 
$\alpha_{t}=10^{-3}$, CG-like-Adam obtained 100\% training accuracy in less than 10 epochs
when the learning rate is not too small, which, in our opinion, leads to overfitting and 
thus the testing accuracy is inferior(or similar) to CoBA. 

\begin{figure}[htbp]
	\centering
	\subfloat[(Log.) Training Loss\label{fig9a}]{
		\centering
		\includegraphics[width=0.31\linewidth]{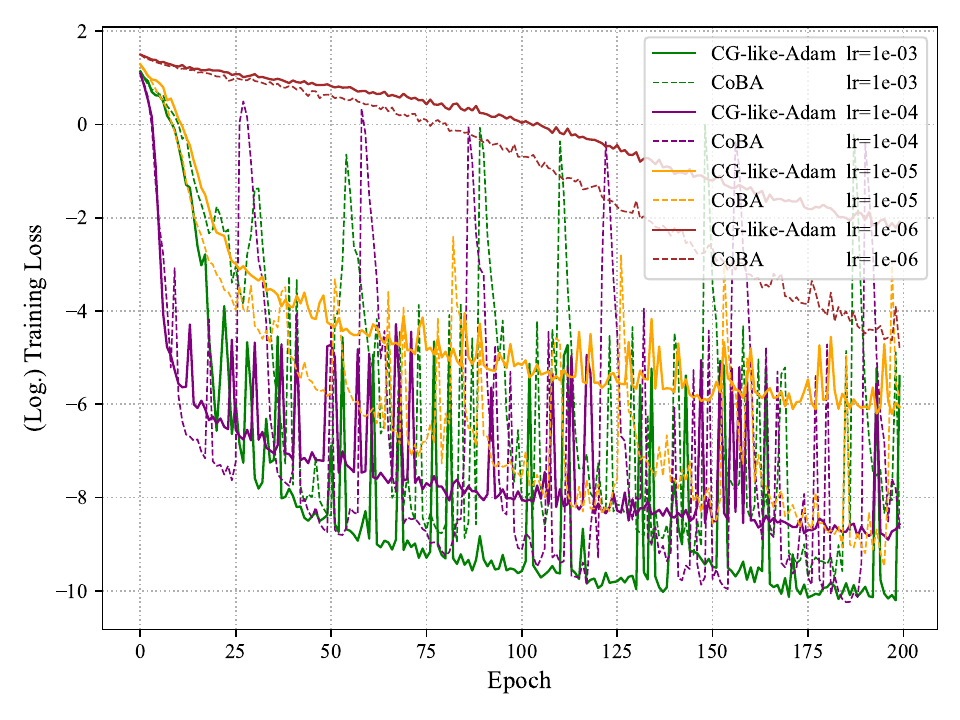}
}
    \centering
	\subfloat[Training Accuracy\label{fig9b}]{
		\centering
		\includegraphics[width=0.31\linewidth]{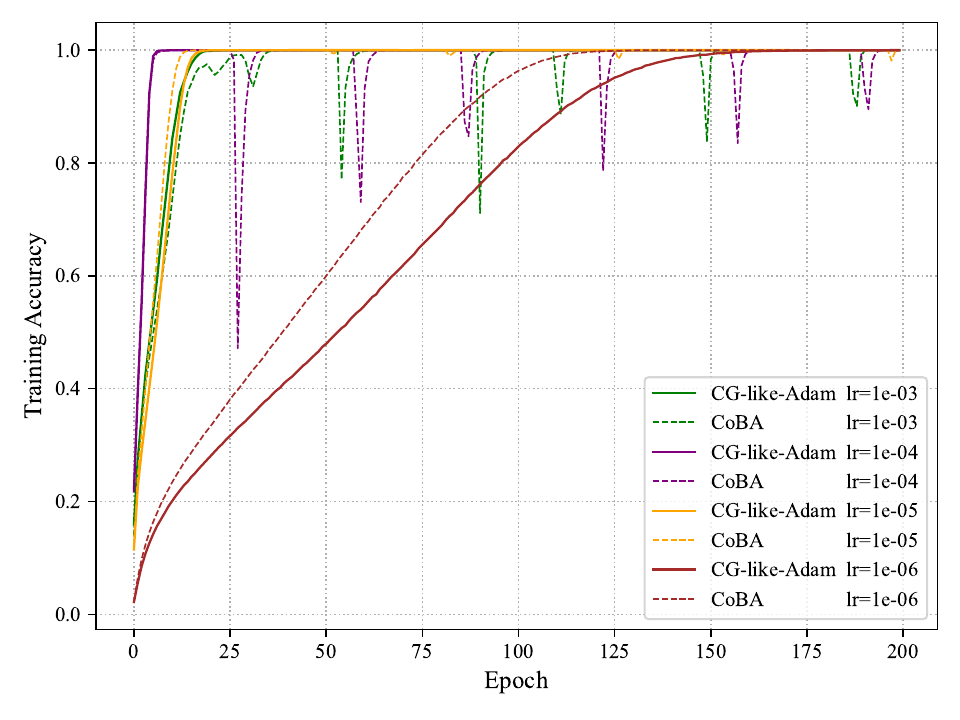}
}
    \centering
	\subfloat[Testing Accuracy\label{fig9c}]{
		\centering
		\includegraphics[width=0.31\linewidth]{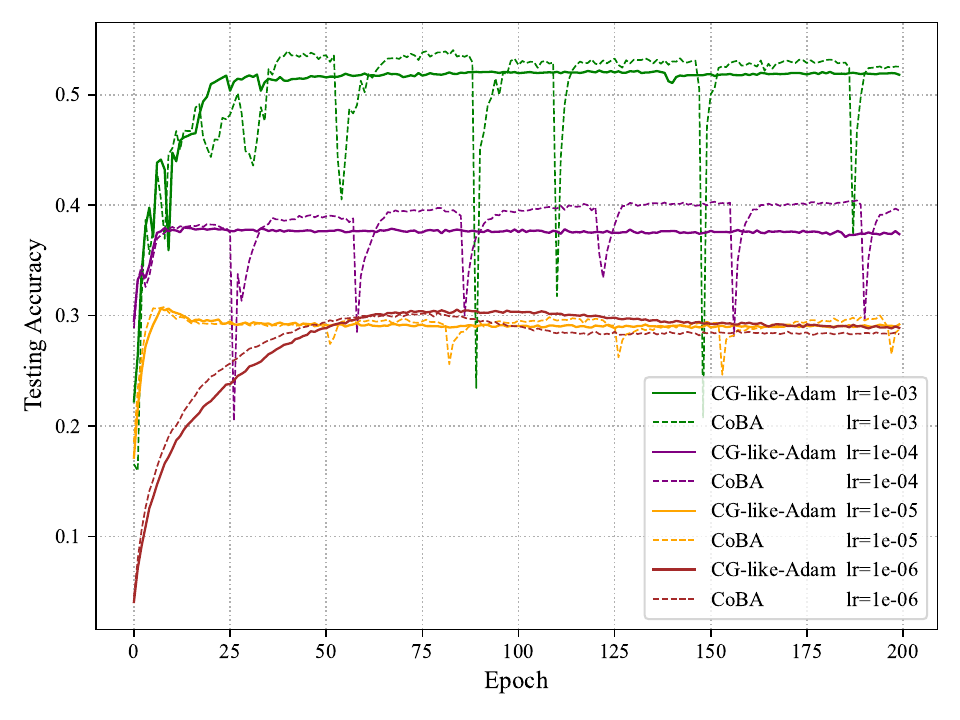}
}
    \caption{CG-like-Adam V.S. CoBA under different learning rates. 
    (ResNet-34, CIFAR-100, DY(\ref{eq7}))}
	\label{fig9}
\end{figure}

\begin{figure}[htbp]
	\centering
	\subfloat[(Log.) Training Loss\label{fig10a}]{
		\centering
		\includegraphics[width=0.31\linewidth]{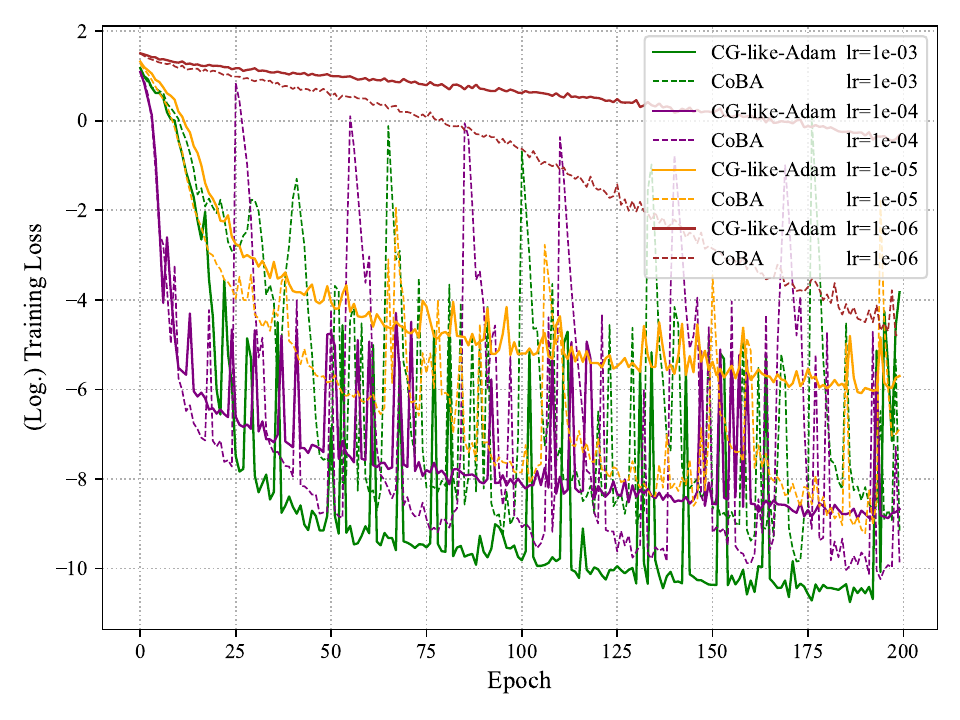}
}
    \centering
	\subfloat[Training Accuracy\label{fig10b}]{
		\centering
		\includegraphics[width=0.31\linewidth]{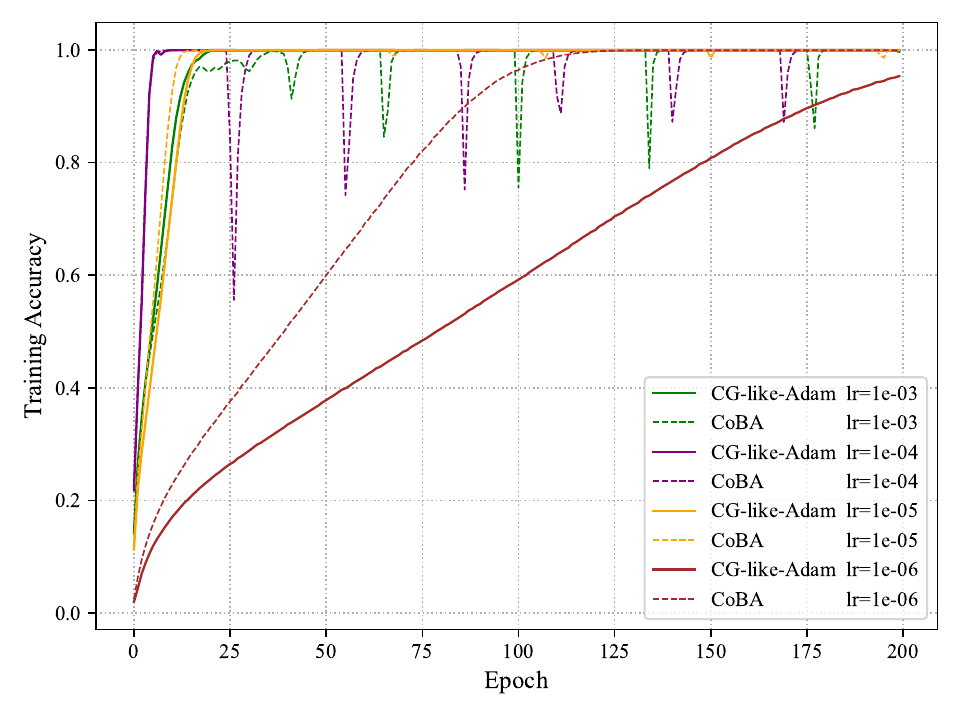}
}
    \centering
	\subfloat[Testing Accuracy\label{fig10c}]{
		\centering
		\includegraphics[width=0.31\linewidth]{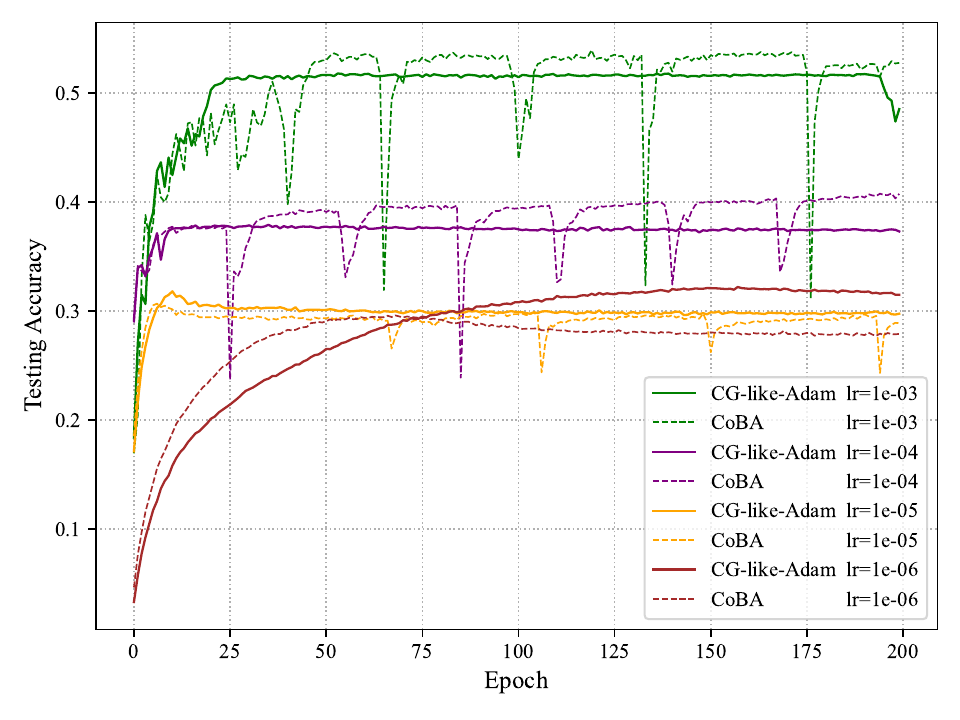}
}
    \caption{CG-like-Adam V.S. CoBA under different learning rates. 
    (ResNet-34, CIFAR-100, HZ(\ref{eq8}))}
	\label{fig10}
\end{figure}

\subsection{Compare CG-like-Adam with Adam}
$\alpha_{t}=10^{-3}(\forall t \in \mathcal{T})$ is set as default value for this experiment.
Figure \ref{fig11} and figure \ref{fig12} show the results of the experiments of 
VGG-19 on CIFAR-10, ResNet-34 on CIFAR-100, respectively.
\begin{figure}[htbp]
	\centering
	\subfloat[(Log.) Training Loss\label{fig11a}]{
		\centering
		\includegraphics[width=0.31\linewidth]{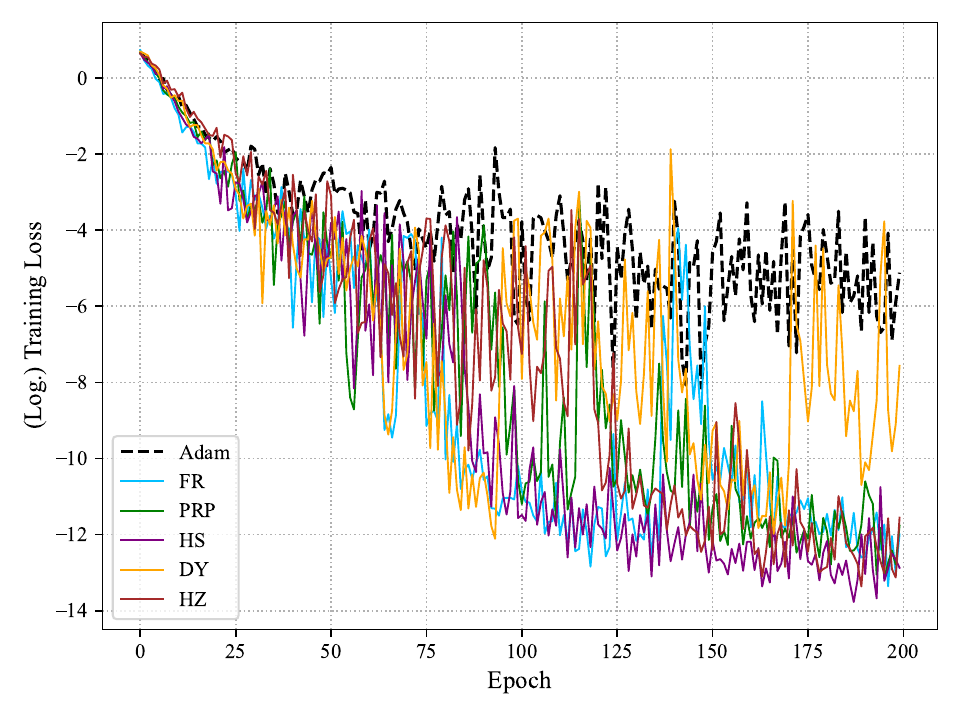}
}
    \centering
	\subfloat[Training Accuracy\label{fig11b}]{
		\centering
		\includegraphics[width=0.31\linewidth]{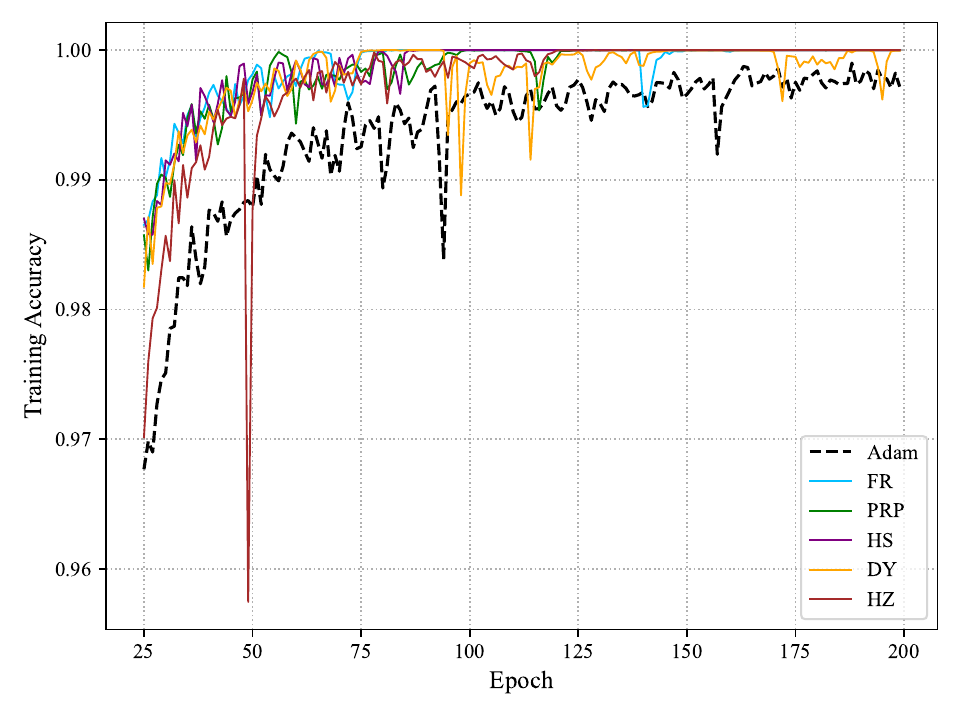}
}
    \centering
	\subfloat[Testing Accuracy\label{fig11c}]{
		\centering
		\includegraphics[width=0.31\linewidth]{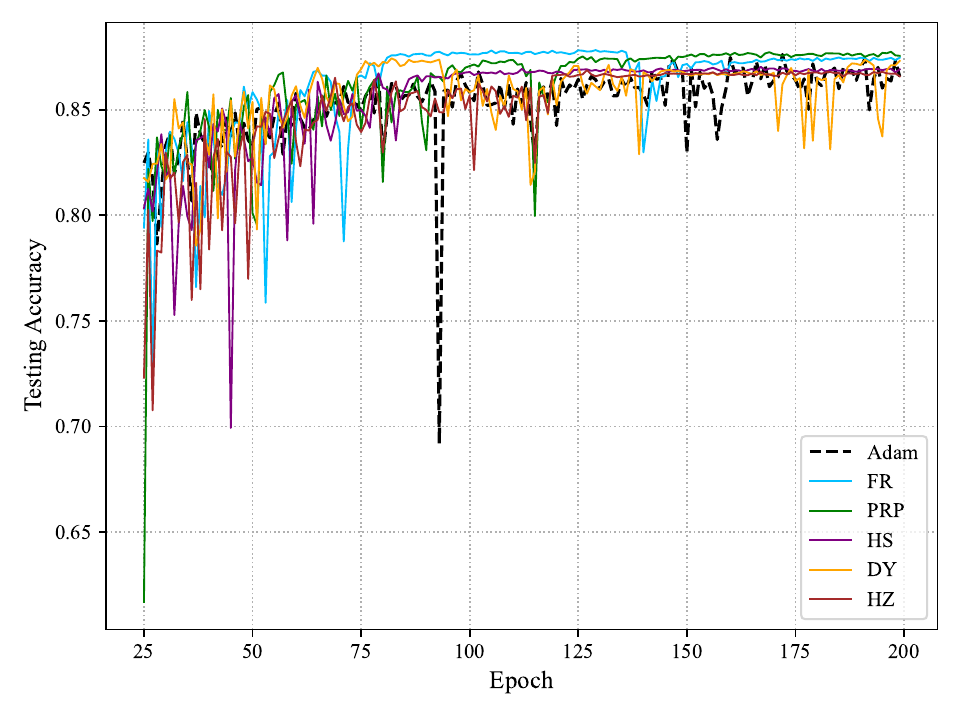}
}
	\caption{Adam V.S. CG-like-Adam with different conjugate coefficient. Train VGG-19 on CIFAR-10.}
	\label{fig11}
\end{figure}
From figure \ref{fig11}, CG-like-Adam is superior to Adam in the criterion of training loss,
training accuracy and testing accuracy. Figure \ref{fig12} tells that it also attains the 
minimum of training loss although it has some vibrations which, however, is more stable than Adam. In addition,
CG-like-Adam defeats Adam in terms of training accuracy and testing accuracy and performs more consistently.

We trained VGG-19 on CIFAR-100 as well. Conclusion can be drawn from the results 
showed as figure \ref{fig13} that CG-like-Adam performed better than Adam once again 
except the conjugate coefficient HZ(Eq.\eqref{eq8}), which may be upgraded by adjusting its
$\lambda$.

\begin{figure}[h]
	\centering
	\subfloat[(Log.) Training Loss\label{fig12a}]{
		\centering
		\includegraphics[width=0.31\linewidth]{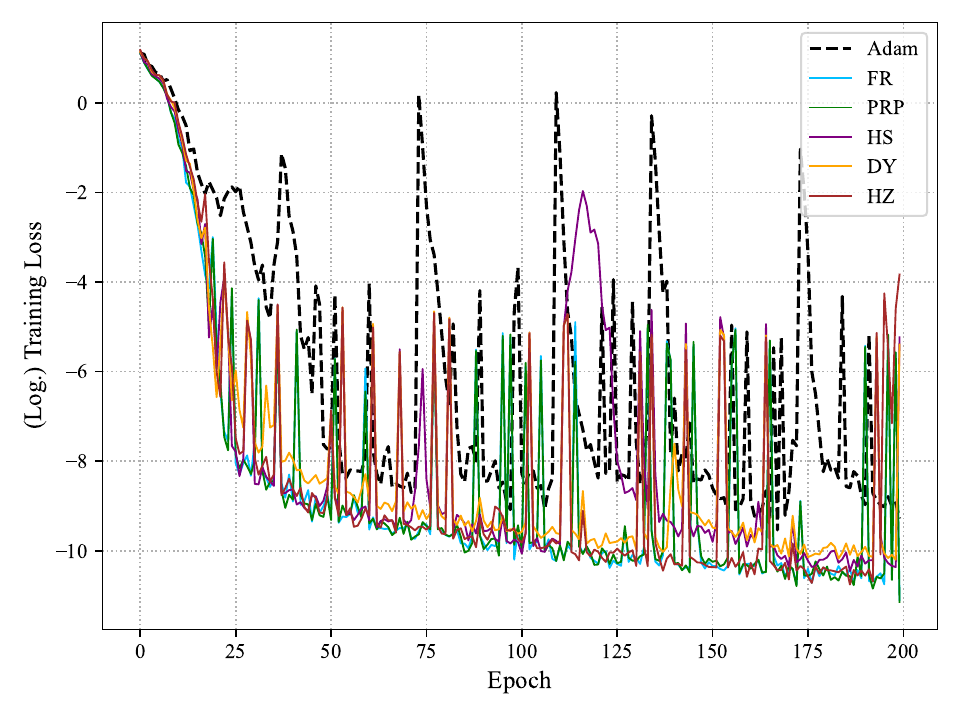}
}
    \centering
	\subfloat[Training Accuracy\label{fig12b}]{
		\centering
		\includegraphics[width=0.31\linewidth]{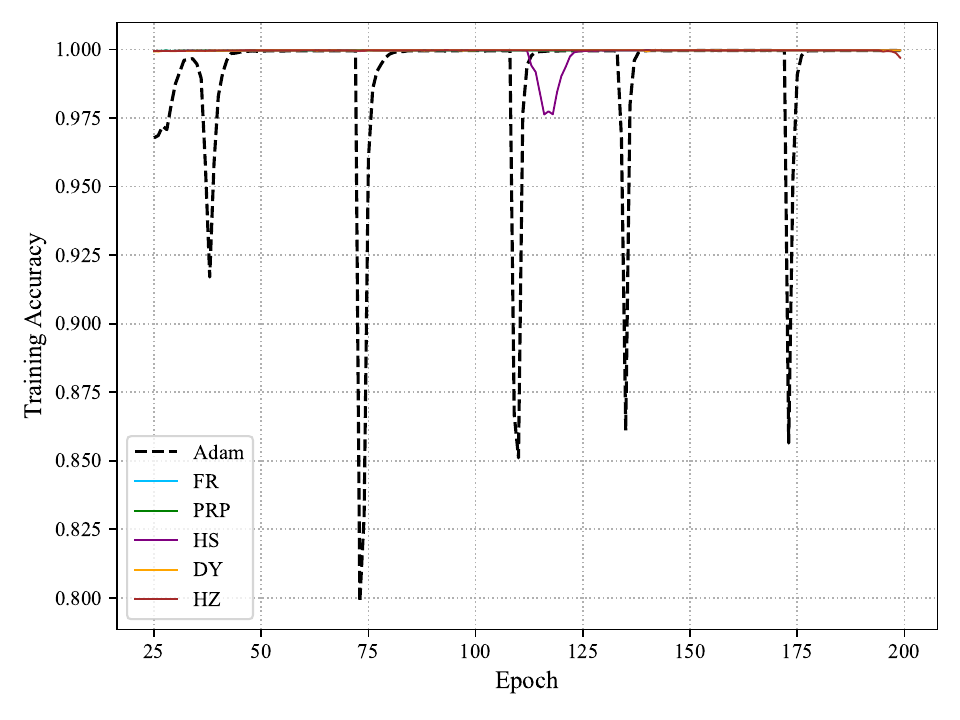}
}
    \centering
	\subfloat[Testing Accuracy\label{fig12c}]{
		\centering
		\includegraphics[width=0.31\linewidth]{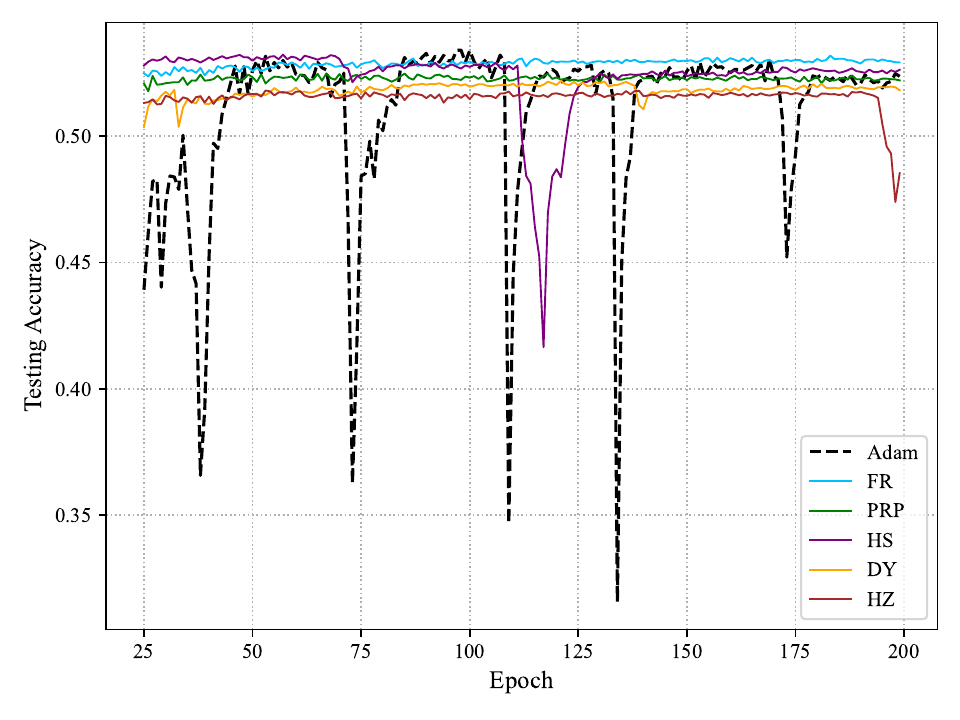}
}
	\caption{Adam V.S. CG-like-Adam with different conjugate coefficient. Train ResNet-34 on CIFAR-100.}
	\label{fig12}
\end{figure}

\newpage
\begin{figure}[h]
	\centering
	\subfloat[(Log.) Training Loss\label{fig13a}]{
		\centering
		\includegraphics[width=0.31\linewidth]{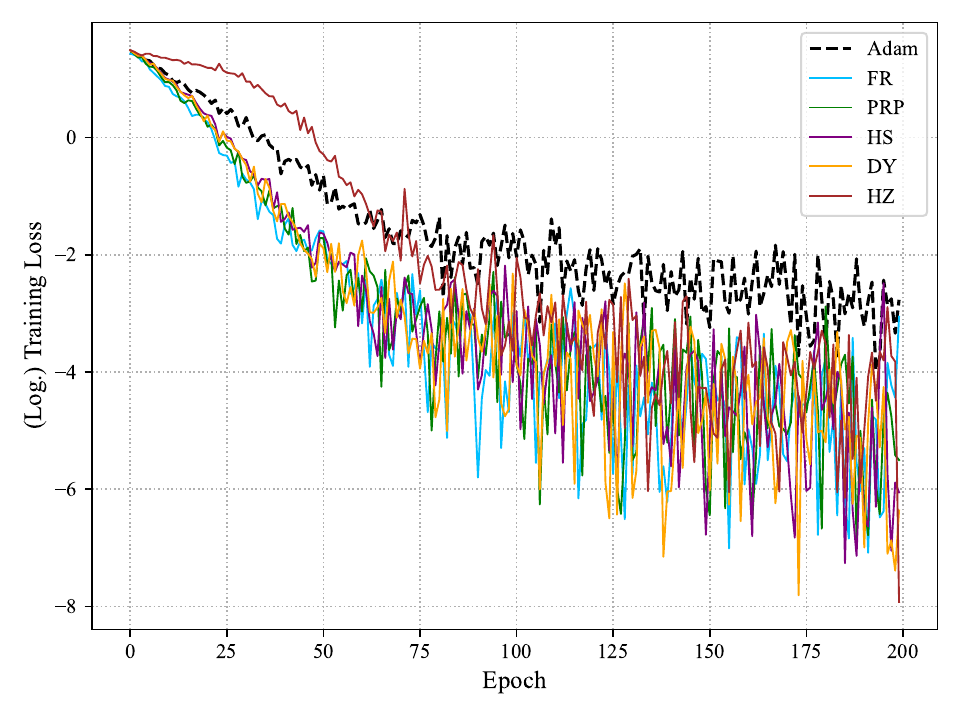}
}
    \centering
	\subfloat[Training Accuracy\label{fig13b}]{
		\centering
		\includegraphics[width=0.31\linewidth]{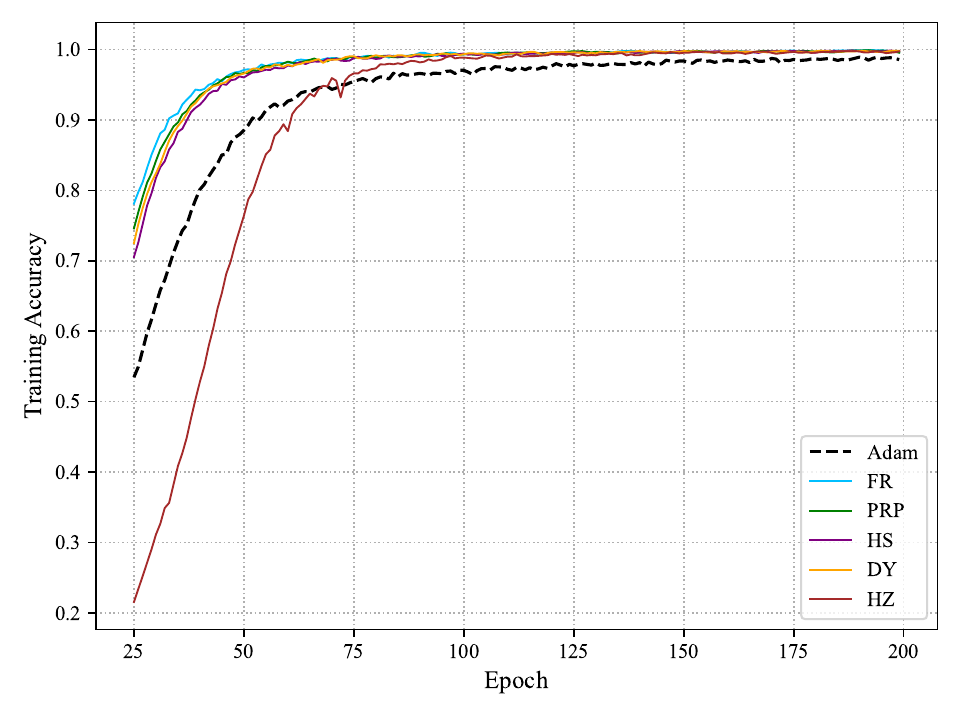}
}
    \centering
	\subfloat[Testing Accuracy\label{fig13c}]{
		\centering
		\includegraphics[width=0.31\linewidth]{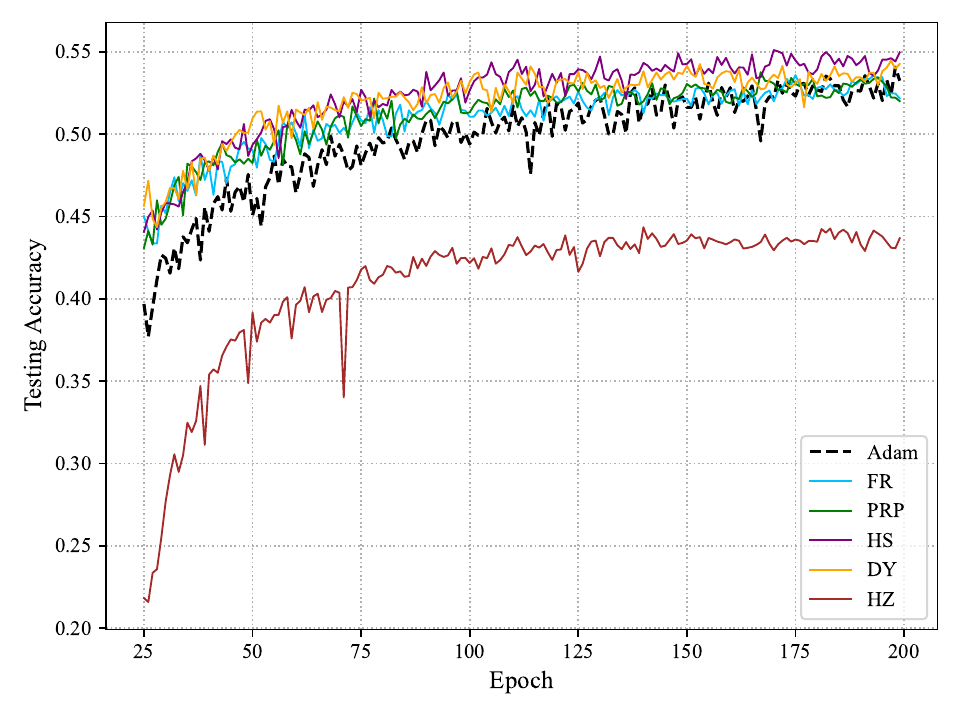}
}
	\caption{Adam V.S. CG-like-Adam with different conjugate coefficient. Train VGG-19 on CIFAR-100.}
	\label{fig13}
\end{figure}

\section{Conclusion}
\label{}
In this paper, for the purpose of accelerating deep neural networks training and 
helping the network find more optimal parameters, the conjugate-gradient-like is 
incorporated into the generic Adam, which is named CG-like-Adam. 
The conjugate-gradient-like is modified from vanilla conjugate gradient, via scaling 
the conjugate coefficient by using a decreasing sequence over time step.
The first-order and the second-order moment estimation of CG-like-Adam 
are both adopting conjugate-gradient-like. We theoretically prove the convergence
of our algorithm and manage to not only provide convergence for the non-convex but 
also deal with the cases where the exponential moving average coefficient of the 
first-order moment estimation is constant and the first-order moment estimation is unbiased.
Numerical experiments of training VGG-19/ResNet-34 on CIFAR-10/100 for image classification 
demonstrate effectiveness and better performance. Our algorithm performs more stable and 
arrives at 100\% training accuracy faster than Adam. Higher testing accuracy provides strong evidence 
that our algorithm obtains more optimal parameters of deep neural networks.
More future work includes conducting various experiments for deep learning task, 
applying variance reduction technique for the conjugate-gradient-like, exploring 
line-search for finding a suitable stepsize, etc.

\newpage

\bibliographystyle{elsarticle-num}
\bibliography{references.bib}
\biboptions{sort&compress}

\newpage


\appendix

\appendix
\section{Proof of Some Lemmas} 
\label{prlem}
Before beginning the proof of theorem \ref{th3.1}, \ref{th3.2} and corollary \ref{cor31},
all the necessary lemmas as well as the proof of them are firstly provided at once.
For the convenience of the following proof and without the loss of generality, let 
$\left(\frac{\alpha_{1}}{\sqrt{\hat{V}_{1}}}-\frac{\alpha_{0}}{\sqrt{\hat{V}_{0}}}\right) \hat{m}_{0}=0$.

\begin{myLem}\label{lem8}
    Suppose $\forall t \in \mathcal{T}$, $\beta_{1 t} \in[0,1)$, 
    $\beta_{1(t+1)} \leqslant \beta_{1 t}$. 
    Defining $\eta_{t}=\frac{\beta_{1 t}\left(1-\beta_{11}^{t-1}\right)}{\xi_{t}}$ 
    and $\xi_{t}=\left(1-\beta_{11}^{t}\right)\left[1-\frac{\beta_{1t}\left(1-\beta_{11}^{t-1}\right)}{1-\beta_{11}^{t}}\right]$,
    then $\forall t \in \mathcal{T}$, the following holds:
    \begin{equation}\label{eqA42}
        \beta_{1t}<\frac{1-\beta_{11}^{t}}{1-\beta_{11}^{t-1}}, \
        \left(1-\beta_{11}\right)^{2} \leqslant \xi_{t} \leqslant 1, \
        0 \leqslant \eta_{t} \leqslant \frac{1}{1-\beta_{11}}.
    \end{equation}
\end{myLem}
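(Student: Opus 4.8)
The plan is to reduce everything to the single structural consequence of the hypothesis $\beta_{1(t+1)} \leqslant \beta_{1t}$, namely that the sequence is non-increasing and hence $\beta_{1t} \leqslant \beta_{11}$ for every $t \in \mathcal{T}$, combined with $\beta_{11} \in [0,1)$ and the elementary cancellation identity
\[
  (1-\beta_{11}^{t}) - \beta_{11}\left(1-\beta_{11}^{t-1}\right) = 1-\beta_{11},
\]
which is just $\beta_{11}^{t}-\beta_{11}^{t}=0$ rearranged. Throughout I will work with the expanded form $\xi_t = (1-\beta_{11}^{t}) - \beta_{1t}(1-\beta_{11}^{t-1})$, which matches the definition in the statement.

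First, for the inequality $\beta_{1t} < \frac{1-\beta_{11}^{t}}{1-\beta_{11}^{t-1}}$: for $t \geqslant 2$ this is, after clearing the positive denominator $1-\beta_{11}^{t-1}$, exactly the assertion $\xi_t > 0$, while for $t=1$ the right-hand side is $+\infty$ (since $1-\beta_{11}^{0}=0$) and nothing is to be proved; so it suffices to establish $\xi_t > 0$, which I fold into the bounds on $\xi_t$. For those bounds, $\xi_t \leqslant 1$ is immediate because $\beta_{1t}(1-\beta_{11}^{t-1}) \geqslant 0$ and $1-\beta_{11}^{t} \leqslant 1$. For the lower bound I substitute $\beta_{1t} \leqslant \beta_{11}$ into the expanded form and invoke the cancellation identity:
\[
  \xi_t \;\geqslant\; (1-\beta_{11}^{t}) - \beta_{11}\left(1-\beta_{11}^{t-1}\right) \;=\; 1-\beta_{11} \;\geqslant\; (1-\beta_{11})^{2},
\]
where the last step uses $0 < 1-\beta_{11} \leqslant 1$. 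In particular $\xi_t \geqslant (1-\beta_{11})^{2} > 0$, which simultaneously proves the first inequality and guarantees that $\eta_t$ is well defined.

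For the bounds on $\eta_t$: nonnegativity is clear since $\eta_t = \beta_{1t}(1-\beta_{11}^{t-1})/\xi_t$ has a nonnegative numerator and a strictly positive denominator. For the upper bound I write $\eta_t = A/(B-A)$ with $A := \beta_{1t}(1-\beta_{11}^{t-1})$ and $B := 1-\beta_{11}^{t} > 0$, note that $x \mapsto x/(B-x)$ is increasing on $[0,B)$, and replace $A$ by the larger quantity $A' := \beta_{11}(1-\beta_{11}^{t-1})$; the cancellation identity gives $B - A' = 1-\beta_{11} > 0$, so $A' < B$ and
\[
  \eta_t \;\leqslant\; \frac{A'}{B-A'} \;=\; \frac{\beta_{11}(1-\beta_{11}^{t-1})}{1-\beta_{11}} \;\leqslant\; \frac{1}{1-\beta_{11}},
\]
the final estimate because $\beta_{11}(1-\beta_{11}^{t-1}) \leqslant 1$.

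I expect no real obstacle: once the cancellation identity and the monotonicity of $x \mapsto x/(B-x)$ are recorded, each inequality is a one- or two-line estimate. The only points needing a moment's care are the degenerate case $t=1$ (where $1-\beta_{11}^{t-1}=0$ makes the first displayed fraction formally infinite and all expressions collapse gracefully) and checking that the substitution $\beta_{1t}\mapsto\beta_{11}$ keeps the argument inside the interval $[0,B)$ on which the monotonicity is used; both are dispatched by the remarks above.
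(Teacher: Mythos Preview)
Your proof is correct and follows the same elementary strategy as the paper (reduce to $\beta_{1t}\leqslant\beta_{11}$ and manipulate), with only minor algebraic differences. The paper establishes the first inequality directly via $\frac{1-\beta_{11}^{t}}{1-\beta_{11}^{t-1}}\geqslant 1>\beta_{1t}$ and bounds $\xi_t$ below through the factorization $\xi_t\geqslant(1-\beta_{1t})(1-\beta_{11}^{t-1})$, then cancels $1-\beta_{11}^{t-1}$ in $\eta_t$ to obtain $\eta_t\leqslant\beta_{1t}/(1-\beta_{1t})$; your route through the identity $(1-\beta_{11}^{t})-\beta_{11}(1-\beta_{11}^{t-1})=1-\beta_{11}$ yields the slightly sharper intermediate bound $\xi_t\geqslant 1-\beta_{11}$ and treats $t=1$ uniformly.
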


\begin{proof}
  It is obvious that $\frac{1-\beta_{11}^{t}}{1-\beta_{11}^{t-1}} \geqslant \frac{1-\beta_{11}^{t-1}}{1-\beta_{11}^{t-1}}=1$.
  Because of $\beta_{1 t} \in[0,1)$, hence $\forall t \in \mathcal{T}$, $\beta_{1 t}<\frac{1-\beta_{11}^{t}}{1-\beta_{11} ^{t-1}}$,
  which is equivalent to $\frac{\beta_{1 t}\left(1-\beta_{11}^{t-1}\right)}{1-\beta_{11}^{t}}<1$.
  So then $\xi_{t} \leqslant 1-\beta^{t}_{11} \leqslant 1$.

  If $\forall t \in \mathcal{T}$, $\beta_{1t}=0$, obviously, $\xi_{t}=1$, $\eta_{t}=0$.

  If $\beta_{1t} \in (0,1)$,
  \begin{equation*}
      \begin{aligned}
          \xi_{t}
          & =\left(1-\beta_{11}^{t}\right)\left[1-\frac{\beta_{1t}\left(1-\beta_{11}^{t-1}\right)}{1-\beta_{11}^{t}}\right]
          \geqslant\left(1-\beta_{11}^{t-1}\right)-\beta_{1 t}\left(1-\beta_{11}^{t-1}\right) \\
          & =\left(1-\beta_{1t}\right)\left(1-\beta_{11}^{t-1}\right) \geqslant\left(1-\beta_{11}\right)^{2} , 
      \end{aligned}
  \end{equation*}

  \begin{equation*}
      \begin{aligned}
          0 < \eta_{t} \leqslant \frac{\beta_{1 t}\left(1-\beta_{11}^{t-1}\right)}{\left(1-\beta_{1 t}\right)\left(1-\beta_{11}^{t-1}\right)}=\frac{\beta_{1 t}}{1-\beta_{1 t}} \leqslant \frac{1}{1-\beta_{11}}.        
      \end{aligned}
  \end{equation*}
    
  The proof is over.
\end{proof}
\begin{myLem}\label{lem9}
  Suppose $\forall t \in \mathcal{T}$, $\beta_{1 t} \in[0,1)$, 
  $\beta_{1(t+1)} \leqslant \beta_{1 t}$. 
  Define $h(t)=\frac{\left(1-\beta_{11}^{t-1}\right)\left(1-\beta_{11}^{t+1}\right)}{\left(1-\beta_{11}^{t}\right)^{2}}$.
  $\forall t \in \mathcal{T}$, if ${\beta_{1(t+1)} \leqslant(\geqslant) \beta_{1 t} h(t)}$, 
  then $\eta_{t+1} \leqslant(\geqslant) \eta_{t}$ holds.
\end{myLem}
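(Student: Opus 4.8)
The plan is to reduce the claimed monotonicity of $(\eta_t)_{t\in\mathcal{T}}$ to the hypothesis on $\beta_{1(t+1)}$ by one sign-controlled cross-multiplication, the point being that a mixed quadratic term will cancel and leave exactly $h(t)$. Two preliminary observations drive everything. First, by Lemma \ref{lem8} we have $\xi_t \geq (1-\beta_{11})^2 > 0$ for every $t$, so each $\eta_t = \frac{\beta_{1t}(1-\beta_{11}^{t-1})}{\xi_t}$ is well defined and nonnegative, and multiplying an inequality through by $\xi_t$ and $\xi_{t+1}$ never reverses it. Second, the trivial identity $1-\beta_{11}^{(t+1)-1} = 1-\beta_{11}^{t}$ means the numerator factor of $\eta_{t+1}$ equals the leading factor of $\xi_t$; this structural coincidence is precisely what makes the lemma true.

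Writing $\xi_t = (1-\beta_{11}^{t}) - \beta_{1t}(1-\beta_{11}^{t-1})$ and $\xi_{t+1} = (1-\beta_{11}^{t+1}) - \beta_{1(t+1)}(1-\beta_{11}^{t})$, I would cross-multiply: since $\xi_t,\xi_{t+1}>0$,
\[ \eta_{t+1} \leq \eta_t \iff \beta_{1(t+1)}(1-\beta_{11}^{t})\,\xi_t \;\leq\; \beta_{1t}(1-\beta_{11}^{t-1})\,\xi_{t+1}. \]
Expanding both sides with the formulas for $\xi_t,\xi_{t+1}$, the term $\beta_{1t}\beta_{1(t+1)}(1-\beta_{11}^{t-1})(1-\beta_{11}^{t})$ appears on each side and cancels, leaving
\[ \eta_{t+1} \leq \eta_t \iff \beta_{1(t+1)}(1-\beta_{11}^{t})^{2} \;\leq\; \beta_{1t}(1-\beta_{11}^{t-1})(1-\beta_{11}^{t+1}), \]
and dividing by $(1-\beta_{11}^{t})^{2}>0$ turns the right-hand condition into exactly $\beta_{1(t+1)} \leq \beta_{1t}h(t)$. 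This is the whole argument for the ``$\leq$'' direction.

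The ``$\geq$'' case is obtained verbatim with every ``$\leq$'' replaced by ``$\geq$'', since each manipulation above is multiplication or division by a strictly positive quantity. Boundary situations need no separate treatment: when $t=1$ one has $1-\beta_{11}^{t-1}=0$, hence $\eta_1=0$ and $h(1)=0$, and when $\beta_{1t}=0$ the hypothesis $\beta_{1(t+1)}\leq\beta_{1t}$ forces $\beta_{1(t+1)}=0$, so $\eta_{t+1}=\eta_t=0$; in both cases the cross-multiplied equivalence above still holds because it only uses positivity of $\xi_t,\xi_{t+1}$ and nonnegativity of the numerators. I do not anticipate a genuine obstacle here; the only care required is correct bookkeeping in the expansion, and the single non-routine step — the cancellation of the mixed quadratic term $\beta_{1t}\beta_{1(t+1)}(1-\beta_{11}^{t-1})(1-\beta_{11}^{t})$ — is exactly what exposes the factor $h(t)$.
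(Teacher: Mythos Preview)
Your proposal is correct and follows essentially the same approach as the paper: both compute the sign of $\eta_{t+1}-\eta_t$ by cross-multiplying (the paper writes this as $\delta_t=\beta_{1(t+1)}(1-\beta_{11}^t)\xi_t-\beta_{1t}(1-\beta_{11}^{t-1})\xi_{t+1}$ and notes $\eta_{t+1}-\eta_t=\delta_t/(\xi_t\xi_{t+1})$), expand using the explicit formulas for $\xi_t,\xi_{t+1}$, observe the mixed term $\beta_{1t}\beta_{1(t+1)}(1-\beta_{11}^{t-1})(1-\beta_{11}^{t})$ cancels, and arrive at the condition $\beta_{1(t+1)}(1-\beta_{11}^t)^2 \lessgtr \beta_{1t}(1-\beta_{11}^{t-1})(1-\beta_{11}^{t+1})$. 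Your handling of the degenerate cases $t=1$ and $\beta_{1t}=0$ is also equivalent to the paper's case discussion.
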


\begin{proof}
  $\forall t \in \mathcal{T}$, $h(t) > 0$, 
  \begin{equation*}
      \begin{aligned}
          \left(1-\beta_{11}^{t-1}\right)\left(1-\beta_{11}^{t+1}\right)-\left(1-\beta_{11}^{t}\right)^{2} 
          & =\beta_{11}^{t-1}\left(2 \beta_{11}-1-\beta_{11}^{2}\right).
      \end{aligned}
  \end{equation*}

  Because of $\beta_{11} \in [0,1)$, there is $-1 \leqslant 2 \beta_{11}-1-\beta_{11}^{2}<0$. 
  Such that
  \begin{align}
    \left(1-\beta_{11}^{t-1}\right)\left(1-\beta_{11}^{t+1}\right) \leqslant\left(1-\beta_{11}^{t}\right)^{2} , \notag
  \end{align}
  which is equivalent to $h(t) \leqslant 1$.

  From the lemma \ref{lem8}, $\forall t \in \mathcal{T}$, $\xi_{t} > 0$.
  Let $\delta_{t}=\beta_{1(t+1)}\left(1-\beta_{11}^{t}\right) \xi_{t}-\beta_{1 t}\left(1-\beta_{11}^{t-1}\right) \xi_{t+1}$,
  then 
  \begin{equation*}
      \begin{aligned}
          \eta_{t+1}-\eta_{t}
          & =\frac{\delta_{t}}{\xi_{t} \xi_{t+1}},
      \end{aligned}
  \end{equation*}
  \begin{equation*}
      \begin{aligned}
          \delta_{t}
          & =\beta_{1(t+1)}\left(1-\beta_{11}^{t}\right)^{2}-\beta_{1t}\left(1-\beta_{11}^{t-1}\right)\left(1-\beta_{11}^{t+1}\right).
      \end{aligned}
  \end{equation*}

  Then $\forall t \in \mathcal{T}$, different situations will be discussed below:

  (1) If $\beta_{1t}=0$, then $\delta_{t}=0$, $\eta_{t+1}-\eta_{t}=0$.

  (2) If $\beta_{1t}=\beta_{1(t+1)} \neq 0$, then $h(t) \leqslant 1=\frac{\beta_{1(t+1)}}{\beta_{1t}}$,
  $\delta_{t} \geqslant 0$, hence $\eta_{t+1} \geqslant \eta_{t}$.

  (3) If $\beta_{1t} \in (0,1)$, 
  because of ${\beta_{1(t+1)} \leqslant(\geqslant) \beta_{1 t} h(t)}$, 
  such that 
  $\delta_{t} \leqslant(\geqslant)0$,
  which leads to $\eta_{t+1} \leqslant(\geqslant) \eta_{t}$.

  The proof is over.
\end{proof}
\begin{myLem}\label{lem10}
  Suppose the assumption \ref{ass34} is satisfied 
  and $\exists \bar{\gamma} \in \mathbb{R}^{+}$, $\forall t \in \mathcal{T}$, 
  $\left|\gamma_{t}\right| \leqslant \bar{\gamma}$.
  Further suppose $\exists t_{0} \in \mathcal{T}$, $\exists \bar{H} \in \mathbb{R}^{+}$,
  such that $\bar{H}=\max \left\{2 H, \ \underset{t \in\left\{1, \cdots, t_{0}-1\right\}}{\max} \left\|d_{t}\right\|\right\}$.
  Then $\forall t \in \mathcal{T}$, the following holds:
  \begin{equation}\label{eqA43}
      \left\|d_{t}\right\| \leqslant \bar{H}.
  \end{equation}
\end{myLem}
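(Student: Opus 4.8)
The plan is to exploit the fact that the rescaled conjugate coefficient $\gamma_t/t^a$ decays to zero, so that for large $t$ the recursion $d_t = g_t - (\gamma_t/t^a)\,d_{t-1}$ behaves like a contraction with a bounded forcing term, and then to run an induction on $t$. First I would use the bound $|\gamma_t| \le \bar{\gamma}$ together with $a \ge \frac12 > 0$ to fix a threshold index $t_0 \in \mathcal{T}$ with $t_0 \ge (2\bar{\gamma})^{1/a}$, so that
\begin{equation*}
\frac{|\gamma_t|}{t^{a}} \;\le\; \frac{\bar{\gamma}}{t^{a}} \;\le\; \frac12 \qquad \text{for all } t \ge t_0 ;
\end{equation*}
this is exactly the $t_0$ one uses in the definition $\bar{H}=\max\{2H,\ \max_{t\in\{1,\dots,t_0-1\}}\|d_t\|\}$.

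Next I would show $\|d_t\| \le \bar{H}$ for all $t \in \mathcal{T}$ by induction. For the base, any $t \in \{1,\dots,t_0-1\}$ is covered directly: $\|d_t\| \le \max_{1\le s\le t_0-1}\|d_s\| \le \bar{H}$ by the definition of $\bar{H}$; in the degenerate case $t_0 = 1$ this index range is empty, and then $d_1 = g_1$ (since $\gamma_1 = 0$ and $d_0 = 0$), so $\|d_1\| = \|g_1\| \le H \le 2H = \bar{H}$ by Assumption A\ref{ass34}. For the inductive step, fix $t \ge t_0$ and suppose $\|d_{t-1}\| \le \bar{H}$. Applying the triangle inequality to the update rule, then $\|g_t\|\le H$, the choice of $t_0$, and finally $\bar{H}\ge 2H$ (hence $H \le \frac12\bar{H}$):
\begin{equation*}
\|d_t\| \;\le\; \|g_t\| + \frac{|\gamma_t|}{t^{a}}\|d_{t-1}\| \;\le\; H + \frac12\bar{H} \;\le\; \frac12\bar{H} + \frac12\bar{H} \;=\; \bar{H},
\end{equation*}
which closes the induction and yields \eqref{eqA43}.

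I do not expect a genuine obstacle here. The only points requiring care are the correct identification of $t_0$ as the index past which the decay factor $\gamma_t/t^a$ drops below $\frac12$, and arranging the induction so that its base case covers the whole initial segment $\{1,\dots,t_0-1\}$ (including the degenerate case $t_0=1$), so that the contraction estimate is only ever invoked for $t \ge t_0$, where it is actually available.
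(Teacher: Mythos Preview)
Your proposal is correct and follows essentially the same approach as the paper: fix a threshold $t_0$ where $|\gamma_t|/t^a \le \tfrac12$, use the definition of $\bar H$ for the initial segment, then induct via the triangle inequality and $H \le \tfrac12\bar H$. Your version is slightly more explicit (you give a concrete formula $t_0 \ge (2\bar\gamma)^{1/a}$ and handle the degenerate case $t_0=1$, and you spell out the inequality $H \le \tfrac12\bar H$), but the argument is the same.
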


\begin{proof}
  Due to $\underset{t \rightarrow+\infty}{\lim} \frac{\left|\gamma_{t}\right|}{t^{a}}=0$, 
  thus $\exists t_{0} \in \mathcal{T}$, $\forall t \geqslant t_{0}$, the following holds:
  \begin{equation*}
      \frac{\left|\gamma_{t}\right|}{t^{a}} \leqslant \frac{1}{2}.
  \end{equation*}

  By the definition of $\bar{H}$, it is apparent that $\forall t<t_{0}$, 
  $\left\|d_{t}\right\| \leqslant \bar{H}$.

  When $ t=t_{0}$, $\left\|d_{t_{0}-1}\right\| \leqslant \bar{H}$.
  From the update rule of $d_{t_{0}}$ and triangular inequality, 
  $\left\|d_{t_{0}}\right\|$ is bounded as follows:
  \begin{equation*}
      \begin{aligned}
          \left\|d_{t_{0}}\right\| 
          & \leqslant\left\|g_{t_{0}}\right\|+\frac{\left|\gamma_{t_{0}}\right|}{t_{0}^{a}}\left\|d_{t_{0}-1}\right\| \\
          & \leqslant\left\|g_{t_{0}}\right\|+\frac{1}{2}\left\|d_{t_{0}-1}\right\| \\
          & \leqslant \bar{H}.
      \end{aligned}
  \end{equation*}

  Suppose $\exists j > t_{0}$, $\left\|d_{j-1}\right\| \leqslant \bar{H}$.
  $\left\|d_{j}\right\|$ is bounded as follows:
  \begin{equation*}
    \begin{aligned}
      \left\|d_{j}\right\| 
      & \leqslant\left\|g_{j}\right\|+\frac{\left|\gamma_{j}\right|}{j^{a}}\left\|d_{j-1}\right\| \\
      & \leqslant\left\|g_{j}\right\|+\frac{1}{2}\left\|d_{j-1}\right\| \\
          & \leqslant \bar{H}.
    \end{aligned}
  \end{equation*}
  
  By the mathematical induction, it completes the proof.
\end{proof}
\begin{myLem}\label{lem11}
  Suppose $\forall t \in \mathcal{T}$, $a_{t} \geqslant 0$ and $\beta_{11} \in [0,1)$.
  Let $b_{t}=\sum_{i=1}^{t} \beta_{11}^{t-i} \sum_{l=i+1}^{t} a_{l}$.
  Then the following inequality holds:
  \begin{equation}\label{eqA44}
      \begin{aligned}
          \sum_{t=1}^{T-1} b_{t}^{2} \leqslant \frac{1}{\left(1-\beta_{11}\right)^{4}} \sum_{t=2}^{T-1} a_{t}^{2}.
      \end{aligned}
  \end{equation}
\end{myLem}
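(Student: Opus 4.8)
The plan is to expand $b_t^2$, swap the order of summation so that a fixed $a_l^2$ (together with cross terms) is collected across all $t$, and then bound the resulting geometric-type weights by $1/(1-\beta_{11})^2$ twice — once for the "inner" double sum $\sum_{l=i+1}^t a_l$ and once for the "outer" sum over $i$ with weights $\beta_{11}^{t-i}$. More concretely, first I would rewrite $b_t = \sum_{i=1}^{t}\beta_{11}^{t-i}\sum_{l=i+1}^{t}a_l = \sum_{l=2}^{t} a_l \sum_{i=1}^{l-1}\beta_{11}^{t-i} = \sum_{l=2}^{t} a_l\, \beta_{11}^{t-l}\cdot\frac{\beta_{11}(1-\beta_{11}^{l-1})}{1-\beta_{11}}$ after interchanging the $i$ and $l$ sums. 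Hence $b_t = \sum_{l=2}^t c_{t,l} a_l$ with $0 \le c_{t,l} \le \frac{\beta_{11}^{t-l}}{1-\beta_{11}} \le \frac{\beta_{11}^{t-l}}{1-\beta_{11}}$; the key point is that the weights $c_{t,l}$ decay geometrically in $t-l$ with a common constant factor $\frac{1}{1-\beta_{11}}$.

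Next I would apply Cauchy–Schwarz to $b_t^2 = \left(\sum_{l=2}^t c_{t,l}a_l\right)^2$. Writing $c_{t,l} = \sqrt{c_{t,l}}\cdot\sqrt{c_{t,l}}\,a_l$ and using $\sum_{l} c_{t,l} \le \frac{1}{1-\beta_{11}}\sum_{l\le t}\beta_{11}^{t-l} \le \frac{1}{(1-\beta_{11})^2}$, we get
\begin{equation*}
b_t^2 \le \left(\sum_{l=2}^t c_{t,l}\right)\left(\sum_{l=2}^t c_{t,l} a_l^2\right) \le \frac{1}{(1-\beta_{11})^2}\sum_{l=2}^{t} c_{t,l} a_l^2 .
\end{equation*}
Then I would sum over $t$ from $1$ to $T-1$, interchange the order of summation once more to fix $l$, and use $\sum_{t=l}^{T-1} c_{t,l} \le \frac{1}{1-\beta_{11}}\sum_{t\ge l}\beta_{11}^{t-l} \le \frac{1}{(1-\beta_{11})^2}$. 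This yields
\begin{equation*}
\sum_{t=1}^{T-1} b_t^2 \le \frac{1}{(1-\beta_{11})^2}\sum_{l=2}^{T-1} a_l^2 \sum_{t=l}^{T-1} c_{t,l} \le \frac{1}{(1-\beta_{11})^4}\sum_{l=2}^{T-1} a_l^2 ,
\end{equation*}
which is exactly \eqref{eqA44} after renaming $l$ as $t$.

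The main obstacle, and the step that needs the most care, is the double interchange of summation order: the index ranges $1 \le i \le t$, $i+1 \le l \le t$, $1 \le t \le T-1$ must be handled so that no terms are dropped or double-counted, and in particular the lower limit $l \ge 2$ (since $l \ge i+1 \ge 2$) must be tracked consistently so that the final sum starts at $t=2$ as claimed. A cleaner alternative that avoids one interchange is to bound $b_t$ directly: since $\sum_{l=i+1}^{t} a_l \le \sum_{l=i+1}^{t} a_l$ and one can apply Cauchy–Schwarz to the $i$-sum using $\sum_i \beta_{11}^{t-i} \le \frac{1}{1-\beta_{11}}$, giving $b_t^2 \le \frac{1}{1-\beta_{11}}\sum_{i=1}^t \beta_{11}^{t-i}\left(\sum_{l=i+1}^t a_l\right)^2$, then bound $\left(\sum_{l=i+1}^t a_l\right)^2 \le (t-i)\sum_{l=i+1}^t a_l^2$ or, better, iterate the same Cauchy–Schwarz trick — but the geometric weights make the telescoping-style argument above the most transparent route to the sharp constant $(1-\beta_{11})^{-4}$.
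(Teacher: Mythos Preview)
Your proof is correct and follows essentially the same approach as the paper: both swap the $i$ and $l$ sums to rewrite $b_t=\sum_{l=2}^{t} c_{t,l}\,a_l$ with $c_{t,l}\le \beta_{11}^{t-l}/(1-\beta_{11})$, square, pick up two factors of $(1-\beta_{11})^{-1}$ from the sum over $l$, then sum over $t$, swap once more, and pick up the remaining two factors. The only cosmetic difference is that you invoke the weighted Cauchy--Schwarz inequality $\bigl(\sum c_{t,l}a_l\bigr)^2\le\bigl(\sum c_{t,l}\bigr)\bigl(\sum c_{t,l}a_l^2\bigr)$ directly, whereas the paper expands the square and uses $a_la_m\le\tfrac12(a_l^2+a_m^2)$ together with symmetry in $l,m$ --- these are equivalent maneuvers.
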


\begin{proof}
  \begin{equation*}
      \begin{aligned}
          \sum_{t=1}^{T-1} b_{t}^{2}
          & =\sum_{t=1}^{T-1}\left(\sum_{i=1}^{t} \beta_{11}^{t-i} \sum_{l=i+1}^{t} a_{l}\right)^{2} =\sum_{t=1}^{T-1}\left(\sum_{l=2}^{t} \sum_{i=1}^{l} \beta_{11}^{t-i} a_{l}\right)^{2} \\
          & =\sum_{t=1}^{T-1}\left(\sum_{l=2}^{t} \sum_{i=1}^{l} \beta_{11}^{t-l} \beta_{11}^{l-i} a_{l}\right)^{2}
          =\sum_{t=1}^{T-1}\left(\sum_{l=2}^{t} \beta_{11}^{t-l} a_{l} \sum_{i=1}^{l} \beta_{11}^{l-i}\right)^{2} \\
          & \leqslant \frac{1}{\left(1-\beta_{11}\right)^{2}} \sum_{t=1}^{T-1}\left(\sum_{l=2}^{t} \beta_{11}^{t-l} a_{l}\right)^{2} \\
          & =\frac{1}{\left(1-\beta_{11}\right)^{2}} \sum_{t=1}^{T-1}\left(\sum_{l=2}^{t} \beta_{11}^{t-l} a_{l}\right)\left(\sum_{m=2}^{t} \beta_{11}^{t-m} a_{m}\right) \\
          & =\frac{1}{\left(1-\beta_{11}\right)^{2}} \sum_{t=1}^{T-1}\left(\sum_{l=2}^{t} \sum_{m=2}^{t} \beta_{11}^{t-l} a_{l} \beta_{11}^{t-m} a_{m}\right) \\
          & \leqslant \frac{1}{\left(1-\beta_{11}\right)^{2}} \sum_{t=1}^{T-1}\left[\sum_{l=2}^{t} \sum_{m=2}^{t} \beta_{11}^{t-l} \beta_{11}^{t-m} \cdot \frac{1}{2}\left(a_{l}^{2}+a_{m}^{2}\right)\right] \\
          & =\frac{1}{\left(1-\beta_{11}\right)^{2}} \sum_{t=1}^{T-1}\left(\sum_{l=2}^{t} \sum_{m=2}^{t} \beta_{11}^{t-l} \beta_{11}^{t-m} a_{l}^{2}\right) \\
          & \leqslant \frac{1}{\left(1-\beta_{11}\right)^{3}} \sum_{t=1}^{T-1} \sum_{l=2}^{t} \beta_{11}^{t-l} a_{l}^{2}
          =\frac{1}{\left(1-\beta_{11}\right)^{3}} \sum_{l=2}^{T-1} \sum_{t=l}^{T-1} \beta_{11}^{t-l} a_{l}^{2} \\
          & \leqslant \frac{1}{\left(1-\beta_{11}\right)^{4}} \sum_{l=2}^{T-1} a_{l}^{2}.
      \end{aligned}
  \end{equation*}
  The first, the third and the last inequality sign are both due to 
  $\sum_{k=0}^{K} \beta_{11}^{k} \leqslant \frac{1}{1-\beta_{11}}$,
  and the second inequality sign is due to $ab \leqslant \frac{1}{2} (a^{2}+b^{2})$.
  The last equal sign is because of the symmetry of $t$ and  $l$ in the summation.

  The proof is over.
\end{proof}
\begin{myLem}\label{lem1}
  Let $x_{0} \triangleq x_{1}$, $\beta_{11} \neq \frac{1}{2}$ in the CG-like-Adam(Alg.\ref{alg:CG-like-Adam}).
  Consider the sequence
  $z_{t}=x_{t}+\eta_{t}\left(x_{t}-x_{t-1}\right), \forall t \in \mathcal{T}$,
  then the following holds:
  \begin{equation}
      \begin{aligned}\label{eqA1}
          z_{t+1}-z_{t}=
          &-\left(\eta_{t+1}-\eta_{t}\right) \alpha_{t} \hat{V}_{t}^{-\frac{1}{2}} \hat{m}_{t}\\
          & -\eta_{t}\left(\alpha_{t} \hat{V}_{t}^{-\frac{1}{2}}-\alpha_{t-1} \hat{V}_{t-1}^{-\frac{1}{2}}\right) \hat{m}_{t-1} \\
          &-\frac{\alpha_{t}\left(1-\beta_{1t}\right)}{\xi_{t}} \hat{V}_{t}^{-\frac{1}{2}} d_{t}    
      \end{aligned}
  \end{equation}
  and 
  \begin{equation*}
      \begin{aligned}
          z_{2}-z_{1}=
          -\left(\eta_{2}-\eta_{1}\right) \alpha_{1} \hat{V}_{1}^{-\frac{1}{2}} \hat{m}_{1}
          -\alpha_{1} \hat{V}_{1}^{-\frac{1}{2}} d_{1} ,
      \end{aligned}
  \end{equation*}
  where 
  $\eta_{t}=\frac{\beta_{1 t}\left(1-\beta_{11}^{t-1}\right)}{\xi_{t}}$,
  $\xi_{t}=\left(1-\beta_{11}^{t}\right)\left[1-\frac{\beta_{1t}\left(1-\beta_{11}^{t-1}\right)}{1-\beta_{11}^{t}}\right]$.
\end{myLem}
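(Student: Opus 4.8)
The plan is to expand $z_{t+1}-z_t$ directly from the parameter update of Alg.\ref{alg:CG-like-Adam} (with the $\epsilon I$ term dropped, as agreed for the analysis) and then to eliminate $\hat m_t$ in favour of $\hat m_{t-1}$ and $d_t$ through a one-step recursion for the bias-corrected first moment. First I would record that $x_{s+1}-x_s=-\alpha_s\hat V_s^{-\frac{1}{2}}\hat m_s$, so that, since $z_s=x_s+\eta_s(x_s-x_{s-1})$,
\[
z_{t+1}-z_t=(1+\eta_{t+1})(x_{t+1}-x_t)-\eta_t(x_t-x_{t-1})=-(1+\eta_{t+1})\alpha_t\hat V_t^{-\frac{1}{2}}\hat m_t+\eta_t\alpha_{t-1}\hat V_{t-1}^{-\frac{1}{2}}\hat m_{t-1}.
\]
From $m_t=\beta_{1t}m_{t-1}+(1-\beta_{1t})d_t$ together with $\hat m_t=m_t/(1-\beta_{11}^{t})$ and $m_{t-1}=(1-\beta_{11}^{t-1})\hat m_{t-1}$ (valid for $t\ge 2$) one obtains the recursion
\[
\hat m_t=\frac{\beta_{1t}(1-\beta_{11}^{t-1})}{1-\beta_{11}^{t}}\,\hat m_{t-1}+\frac{1-\beta_{1t}}{1-\beta_{11}^{t}}\,d_t .
\]

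The crux is the pair of algebraic identities $\eta_t\xi_t=\beta_{1t}(1-\beta_{11}^{t-1})$ and $(1+\eta_t)\xi_t=1-\beta_{11}^{t}$, both immediate from the definitions of $\eta_t$ and $\xi_t$ (and which also show $\xi_t>0$ since $\beta_{1t}<1$). Using them, the recursion becomes $(1+\eta_t)\hat m_t=\eta_t\hat m_{t-1}+\frac{1-\beta_{1t}}{\xi_t}d_t$; writing $(1+\eta_{t+1})\hat m_t=(\eta_{t+1}-\eta_t)\hat m_t+(1+\eta_t)\hat m_t$ and substituting then gives
\[
(1+\eta_{t+1})\hat m_t=(\eta_{t+1}-\eta_t)\hat m_t+\eta_t\hat m_{t-1}+\frac{1-\beta_{1t}}{\xi_t}d_t .
\]
Plugging this into the displayed expression for $z_{t+1}-z_t$ and combining the two $\hat m_{t-1}$-terms via $-\eta_t\alpha_t\hat V_t^{-\frac{1}{2}}\hat m_{t-1}+\eta_t\alpha_{t-1}\hat V_{t-1}^{-\frac{1}{2}}\hat m_{t-1}=-\eta_t(\alpha_t\hat V_t^{-\frac{1}{2}}-\alpha_{t-1}\hat V_{t-1}^{-\frac{1}{2}})\hat m_{t-1}$ yields exactly \eqref{eqA1}. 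For the base case $t=1$, the convention $x_0\triangleq x_1$ gives $z_1=x_1$; since $m_0=0$ we have $m_1=(1-\beta_{11})d_1$, hence $\hat m_1=d_1$; and $\eta_1=0$ because $1-\beta_{11}^{0}=0$. Substituting into $z_2-z_1=(1+\eta_2)(x_2-x_1)=-(1+\eta_2)\alpha_1\hat V_1^{-\frac{1}{2}}\hat m_1$ and splitting $1+\eta_2=(\eta_2-\eta_1)+1$ reproduces the stated formula for $z_2-z_1$.

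The only real work here is the coefficient bookkeeping: verifying $(1+\eta_t)\xi_t=1-\beta_{11}^{t}$ so that the $d_t$-coefficient collapses to $\frac{1-\beta_{1t}}{\xi_t}$, and checking that the coefficient of $\hat m_{t-1}$ in the recursion for $\hat m_t$ is precisely $\eta_t/(1+\eta_t)$. Once these are in hand the telescoping identity drops out with no analytic obstacle; the standing hypothesis $\beta_{11}\neq\frac{1}{2}$ plays no role in this lemma and is carried along only for later estimates built on $z_t$.
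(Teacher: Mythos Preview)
Your argument is correct and is essentially the same computation as the paper's proof, just organized more cleanly: the paper begins from $x_{t+1}-x_t$, substitutes $m_{t-1}$ via $x_t-x_{t-1}$, and then massages the result into the $z_t$ form, whereas you go straight to $z_{t+1}-z_t$ and use the recursion $(1+\eta_t)\hat m_t=\eta_t\hat m_{t-1}+\frac{1-\beta_{1t}}{\xi_t}d_t$, which is the same algebraic content expressed more transparently. Your observation that the hypothesis $\beta_{11}\neq\frac12$ is not actually used here is also correct; the paper invokes it to ensure $\xi_t\neq 0$, but as you note (and as Lemma~\ref{lem8} already shows), $\xi_t>0$ follows from $\beta_{1t}\in[0,1)$ alone.
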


\begin{proof}
  From the update rule of the algorithm, for all $t>1$, there is
  \begin{equation}
      \begin{aligned}\label{eqA2}
          x_{t+1}-x_{t} 
          & =-\alpha_{t} \hat{V}_{t}^{-\frac{1}{2}} \hat{m}_{t} 
          =-\frac{\alpha_{t}}{1-\beta_{11}^{t}} \hat{V}_{t}^{-\frac{1}{2}} m_{t} \\
          & =-\frac{\alpha_{t}}{1-\beta_{11}^{t}} \hat{V}_{t}^{-\frac{1}{2}}\left[\beta_{1 t} m_{t-1}+\left(1-\beta_{1 t}\right) d_{t}\right] \\
          & =\frac{\beta_{1 t}\left(1-\beta_{11}^{t-1}\right)}{1-\beta_{11}^{t}}\left(x_{t}-x_{t-1}\right)-\frac{\alpha_{t}\left(1-\beta_{1t}\right)}{1-\beta_{11}^{t}} \hat{V}_{t}^{-\frac{1}{2}} d_{t}\\            
          & \quad\, +\frac{\beta_{1t}\left(1-\beta_{11}^{t-1}\right)}{1-\beta_{11}^{t}}\left(\frac{\alpha_{t} \hat {V}_{t}^{-\frac{1}{2}} \hat {V}_{t-1}^{\frac{1}{2}}}{\alpha_{t-1}} -I\right)\left(x_{t}-x_{t-1}\right) \\
          & =\frac{\beta_{1 t}\left(1-\beta_{11}^{t-1}\right)}{1-\beta_{11}^{t}}\left(x_{t}-x_{t-1}\right)-\frac{\alpha_{t}\left(1-\beta_{1t}\right)}{1-\beta_{11}^{t}} \hat{V}_{t}^{-\frac{1}{2}} d_{t} \\
          & \quad\, -\frac{\beta_{1t}\alpha_{t-1}}{1-\beta_{11}^{t}} \left(\frac{\alpha_{t} \hat {V}_{t}^{-\frac{1}{2}} \hat {V}_{t-1}^{\frac{1}{2}}}{\alpha_{t-1}} -I\right) \hat{V}_{t-1}^{-\frac{1}{2}} m_{t-1} \\
          & = \frac{\beta_{1 t}\left(1-\beta_{11}^{t-1}\right)}{1-\beta_{11}^{t}}\left(x_{t}-x_{t-1}\right)-\frac{\alpha_{t}\left(1-\beta_{1t}\right)}{1-\beta_{11}^{t}} \hat{V}_{t}^{-\frac{1}{2}} d_{t}\\ 
          & \quad\, -\frac{\beta_{1t}}{1-\beta_{11}^{t}}\left(\alpha_{t} \hat {V}_{t}^{-\frac{1}{2}}-\alpha_{t-1} \hat{V}_{t-1}^{-\frac{1}{2}}\right) m_{t-1}. \\
      \end{aligned}
  \end{equation}
  
  Since $\left[1-\frac{\beta_{1t}\left(1-\beta_{11}^{t+1}\right)}{1-\beta_{11}^{t}}\right] x_{t+1}
  +\frac{\beta_{1t}\left(1-\beta_{11}^{t+1}\right)}{1-\beta_{11}^{t}}\left(x_{t+1}-x_{t}\right)
  =\left[1-\frac{\beta_{1t}\left(1-\beta_{11}^{t+1}\right)}{1-\beta_{11}^{t}}\right] x_{t}+\left(x_{t+1}-x_{t}\right)$,
  combining with Eq.\eqref{eqA2} we have 
  \begin{equation}\label{eqA3}
      \begin{aligned}
         & \left[1-\frac{\beta_{1 t}\left(1-\beta_{11}^{t-1}\right)}{1-\beta_{11}^{t}}\right] x_{t+1}+\frac{\beta_{1t}\left(1-\beta_{11}^{t-1}\right)}{1-\beta_{11}^{t}}\left(x_{t+1}-x_{t}\right)\\
         & =\left[1-\frac{\beta_{1 t}\left(1-\beta_{11}^{t-1}\right)}{1-\beta_{11}^{t}}\right] x_{t}+\frac{\beta_{1 t}\left(1-\beta_{11}^{t-1}\right)}{1-\beta_{11}^{t}}\left(x_{t}-x_{t-1}\right) \\
         & \quad\, -\frac{\beta_{1 t}}{1-\beta_{11}^{t}}\left(\alpha_{t} \hat {V}_{t}^{-\frac{1}{2}}-\alpha_{t-1} \hat{V}_{t-1}^{-\frac{1}{2}}\right) m_{t-1}-\frac{\alpha_{t}\left(1-\beta_{1 t}\right)}{1-\beta_{11}^{t}} \hat{V}_{t}^{-\frac{1}{2}} d_{t}.
      \end{aligned}
  \end{equation}
  
  From the lemma \ref{lem8},
  $\forall t \in \mathcal{T}, \ \beta_{1 t}<\frac{1-\beta_{11}^{t}}{1-\beta_{11}^{t-1}}$.
  So as long as $\beta_{11} \neq \frac{1}{2}$, there is 
  $\forall t \in \mathcal{T},\ 1-\frac{\beta_{1t}\left(1-\beta_{11}^{t-1}\right)}{1-\beta_{11}^{t}} \neq 0$.
  Divide both sides of Eq.\eqref{eqA3} by $\left[1-\frac{\beta_{1t}(1-\beta_{11}^{t-1})}{1-\beta_{11}^{t}}\right]$, 
  the following holds:
  \begin{equation}\label{eqA4}
      \begin{aligned}
          & x_{t+1}+\eta_{t}\left(x_{t+1}-x_{t}\right) \\
          & =x_{t}+\eta_{t}\left(x_{t}-x_{t-1}\right)
          -\frac{\beta_{1t}}{\xi_{t}}\left(\alpha_{t} \hat{V}_{t}^{-\frac{1}{2}}-\alpha_{t-1} \hat{V}_{t-1}^{-\frac{1}{2}}\right) m_{t-1}
          -\frac{\alpha_{t}\left(1-\beta_{1 t}\right)}{\xi_{t}} \hat{V}_{t}^{-\frac{1}{2}} d_{t}.
      \end{aligned}
  \end{equation}

  Defining sequence $z_{t}=x_{t}+\eta_{t}(x_{t}-x_{t-1})$,
  $\eta_{t}=\frac{\beta_{1 t}\left(1-\beta_{11}^{t-1}\right)}{\xi_{t}}$ in which
  \begin{dmath*}
    \xi_{t}=\left(1-\beta_{11}^{t}\right)\left[1-\frac{\beta_{1 t}\left(1-\beta_{11}^{t-1}\right)}{1-\beta_{11}^{t}}\right] .
  \end{dmath*}
  Then the above Eq.\eqref{eqA4} can be converted into 
  \begin{equation}\label{eqA5}
      \begin{aligned}
          & x_{t+1}+\eta_{t}\left(x_{t+1}-x_{t}\right)+\eta_{t+1}\left(x_{t+1}-x_{t}\right)-\eta_{t+1}\left(x_{t+1}-x_{t}\right) \\
          & =z_{t+1}+\left(\eta_{t}-\eta_{t+1}\right)\left(x_{t+1}-x_{t}\right) \\ 
          & =z_{t}-\frac{\eta_{t}}{1-\beta_{11}^{t-1}}\left(\alpha_{t} \hat{V}_{t}^{-\frac{1}{2}}-\alpha_{t-1} \hat{V}_{t-1}^{\frac{1}{2}}\right) m_{t-1}\\
          & \quad\, -\frac{\alpha_{t}\left(1-\beta_{1t}\right)}{\xi_{t}} \hat{V}_{t}^{-\frac{1}{2}} d_{t}.
      \end{aligned}
  \end{equation}
  The Eq.\eqref{eqA5} can be written as 
  \begin{equation*}
      \begin{aligned}
          z_{t+1}-z_{t}=
          & -\left(\eta_{t+1}-\eta_{t}\right) \alpha_{t} \hat{V}_{t}^{-\frac{1}{2}} \hat{m}_{t}-\frac{\alpha_{t}\left(1-\beta_{1t}\right)}{\xi_{t}} \hat{V}_{t}^{-\frac{1}{2}} d_{t}  \\ 
          & -\eta_{t}\left(\alpha_{t} \hat{V}_{t}^{-\frac{1}{2}}-\alpha_{t-1} \hat{V}_{t-1}^{-\frac{1}{2}}\right) \hat{m}_{t-1}.
      \end{aligned}
  \end{equation*}

  When $t=1$, $z_{1}=x_{1}+\eta_{1}\left(x_{1}-x_{0}\right)=x_{1}$, there is
  \begin{equation*}
      \begin{aligned}
          z_{2}-z_{1}
          &=x_{2}+\eta_{2}\left(x_{2}-x_{1}\right)-x_{1} \\
          & =\left(\eta_{2}-\eta_{1}\right)\left(x_{2}-x_{1}\right)+\left(1+\eta_{1}\right)\left(x_{2}-x_{1}\right) \\
          & =-\left(\eta_{2}-\eta_{1}\right) \alpha_{1} \hat{V}_{1}^{-\frac{1}{2}} \hat{m}_{1} 
            -\alpha_{1} \hat{V}_{1}^{-\frac{1}{2}} d_{1}.
      \end{aligned}
  \end{equation*}

  The proof is over.
\end{proof}
\begin{myLem}\label{lem2}
  Suppose the conditions in theorem \ref{th3.1} are satisfied, then 
  \begin{equation}\label{eqA6}
      \begin{aligned}
          \mathbb{E}\left[f\left(z_{T+1}\right)-f\left(z_{1}\right)\right] \leqslant \sum_{i=1}^{6} T_{i} ,
      \end{aligned}
  \end{equation}    
  where 
  \begin{equation}\label{eqA7}
      \begin{aligned}
          T_{1}=-\mathbb{E}\left[\sum_{t=1}^{T}\left\langle\nabla f\left(z_{t}\right), \eta_{t}\left(\alpha_{t} \hat{V}_{t}^{-\frac{1}{2}}-\alpha_{t-1} \hat{V}_{t-1}^{-\frac{1}{2}}\right) \hat{m}_{t-1}\right\rangle\right],
      \end{aligned}
  \end{equation}  

  \begin{equation}\label{eqA8}
      \begin{aligned}
          T_{2}=-\mathbb{E}\left[\sum_{t=1}^{T}\left\langle\nabla f\left(z_{t}\right), \frac{\alpha_{t}\left(1-\beta_{1t}\right)}{\xi_{t}} \hat{V}_{t}^{-\frac{1}{2}} d_{t}\right\rangle\right],        
      \end{aligned}
  \end{equation}  

  \begin{equation}\label{eqA9}
      \begin{aligned}
          T_{3}=-\mathbb{E}\left[\sum_{t=1}^{T}\left\langle\nabla f\left(z_{t}\right),\left(\eta_{t+1}-\eta_{t}\right) \alpha_{t} \hat{V}_{t}^{-\frac{1}{2}} \hat{m}_{t}\right\rangle\right],        
      \end{aligned}
  \end{equation}  

  \begin{equation}\label{eqA10}
      \begin{aligned}
          T_{4}=\frac{3 L}{2} \mathbb{E}\left[\sum_{t=1}^{T}\left\|\left(\eta_{t+1}-\eta_{t}\right) \alpha_{t} \hat{V}_{t}^{-\frac{1}{2}} \hat{m}_{t}\right\|^{2}\right],        
      \end{aligned}
  \end{equation}  

  \begin{equation}\label{eqA11}
      \begin{aligned}
          T_{5}=\frac{3 L}{2} \mathbb{E}\left[\sum_{t=1}^{T}\left\|\eta_{t}\left(\alpha_{t} \hat{V}_{t}^{-\frac{1}{2}}-\alpha_{t-1} \hat{V}_{t-1}^{-\frac{1}{2}}\right) \hat{m}_{t}\right\|^{2}\right],        
      \end{aligned}
  \end{equation}  

  \begin{equation}\label{eqA12}
      \begin{aligned}
          T_{6}=\frac{3 L}{2} \mathbb{E}\left[\sum_{t=1}^{T}\left\|\frac{\alpha_{t}\left(1-\beta_{1 t}\right)}{\xi_{t}} \hat{V}_{t}^{-\frac{1}{2}} d_{t}\right\|^{2}\right].    
      \end{aligned}
  \end{equation}  
\end{myLem}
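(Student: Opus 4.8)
The plan is to run the standard $L$-smooth descent argument, but along the auxiliary sequence $z_t$ introduced in Lemma~\ref{lem1} rather than along the iterates $x_t$ themselves, since Lemma~\ref{lem1} already packages $z_{t+1}-z_t$ into exactly three interpretable pieces. By Assumption~A\ref{ass31} ($L$-Lipschitz gradient), for every $t$
\begin{equation*}
f(z_{t+1}) \leqslant f(z_t) + \left\langle \nabla f(z_t),\, z_{t+1}-z_t \right\rangle + \frac{L}{2}\left\|z_{t+1}-z_t\right\|^2 .
\end{equation*}
Summing this over $t=1,\dots,T$ telescopes the left-hand side to $f(z_{T+1})-f(z_1)$, so it suffices to bound the accumulated linear term by $T_1+T_2+T_3$ and the accumulated quadratic term by $T_4+T_5+T_6$, and then take expectations.

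For the linear term I would substitute $z_{t+1}-z_t$ from Lemma~\ref{lem1}, i.e. the sum of $-(\eta_{t+1}-\eta_t)\alpha_t\hat{V}_t^{-\frac{1}{2}}\hat{m}_t$, $-\eta_t(\alpha_t\hat{V}_t^{-\frac{1}{2}}-\alpha_{t-1}\hat{V}_{t-1}^{-\frac{1}{2}})\hat{m}_{t-1}$ and $-\frac{\alpha_t(1-\beta_{1t})}{\xi_t}\hat{V}_t^{-\frac{1}{2}}d_t$, expand $\langle\nabla f(z_t),\cdot\rangle$ by linearity, sum over $t$, and take expectations; the three resulting sums are precisely $T_3$, $T_1$ and $T_2$ (the signs already agree, since each of $T_1,T_2,T_3$ carries the leading minus sign). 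At $t=1$ one uses the companion identity $z_2-z_1=-(\eta_2-\eta_1)\alpha_1\hat{V}_1^{-\frac{1}{2}}\hat{m}_1-\alpha_1\hat{V}_1^{-\frac{1}{2}}d_1$ from the same lemma; this coincides with the general formula once the stated convention $\left(\alpha_1\hat{V}_1^{-\frac{1}{2}}-\alpha_0\hat{V}_0^{-\frac{1}{2}}\right)\hat{m}_0=0$ is invoked, so the middle term simply vanishes at $t=1$ and the sums may be written uniformly from $t=1$ with no special-casing.

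For the quadratic term I would apply $\left\|a+b+c\right\|^2\leqslant 3\left(\left\|a\right\|^2+\left\|b\right\|^2+\left\|c\right\|^2\right)$ to the same three-term decomposition, obtaining $\frac{L}{2}\left\|z_{t+1}-z_t\right\|^2\leqslant\frac{3L}{2}\left(\left\|a_t\right\|^2+\left\|b_t\right\|^2+\left\|c_t\right\|^2\right)$; summing over $t$ and taking expectations produces $T_4$, $T_5$ and $T_6$. The middle summand that literally appears is $\sum_t\|\eta_t(\alpha_t\hat{V}_t^{-\frac{1}{2}}-\alpha_{t-1}\hat{V}_{t-1}^{-\frac{1}{2}})\hat{m}_{t-1}\|^2$, i.e. $T_5$ with $\hat{m}_{t-1}$ in place of $\hat{m}_t$; since $\left\|m_t\right\|\leqslant\max_{s\leqslant t}\left\|d_s\right\|\leqslant\bar{H}$ by convexity of the moment recursion together with Lemma~\ref{lem10}, hence $\left\|\hat{m}_t\right\|\leqslant\bar{H}/(1-\beta_{11})$ for every $t$, this subscript distinction costs at most a constant factor and is immaterial for the estimates later built on $T_5$. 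Combining the linear and quadratic bounds and using monotonicity of expectation on the telescoped inequality yields $\mathbb{E}[f(z_{T+1})-f(z_1)]\leqslant\sum_{i=1}^{6}T_i$.

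The argument is essentially bookkeeping once Lemma~\ref{lem1} is in hand; the only points requiring care are the boundary contribution at $t=1$ (cleanly absorbed by the convention above) and reconciling the middle quadratic summand with the stated form of $T_5$. I would stress that, unlike the proof of Theorem~\ref{th3.1} that builds on this lemma, no conditional-expectation or martingale reasoning enters here: the expectation is applied only at the very end, and only its monotonicity is used.
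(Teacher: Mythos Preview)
Your proposal is correct and follows essentially the same route as the paper: apply the $L$-smooth descent inequality along $z_t$, substitute the three-term expression for $z_{t+1}-z_t$ from Lemma~\ref{lem1}, expand the inner product by linearity to obtain $T_1+T_2+T_3$, and use $\|a+b+c\|^2\leqslant 3(\|a\|^2+\|b\|^2+\|c\|^2)$ on the quadratic part to obtain $T_4+T_5+T_6$. You also correctly spot that the statement of $T_5$ carries $\hat{m}_t$ whereas the decomposition from Lemma~\ref{lem1} gives $\hat{m}_{t-1}$; the paper's own proof (its Eq.~\eqref{eqA16}) in fact writes $\hat{m}_{t-1}$ there, so this is a typographical slip in the lemma statement, and as you note it is harmless for the subsequent bound in Lemma~\ref{lem6} since only $\|\hat{m}_{\cdot}\|\leqslant\bar{M}$ is used.
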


\begin{proof}
  Since $\nabla f$ is Lipschitz smooth, then 
  \begin{equation}\label{eqA13}
      f\left(z_{t+1}\right) \leqslant f\left(z_{t}\right)+\left\langle\nabla f\left(z_{t}\right), r_{t}\right\rangle+\frac{L}{2}\left\|r_{t}\right\|^{2} ,
  \end{equation}
  where $r_{t}=z_{t+1}-z_{t}$. By the lemma \ref{lem1},
  \begin{equation}\label{eqA14}
      \begin{aligned}
      r_{t}&=z_{t+1}-z_{t}\\
      & =-\left(\eta_{t+1}-\eta_{t}\right) \alpha_{t} \hat{V}_{t}^{-\frac{1}{2}} \hat{m}_{t}
      -\frac{\alpha_{t}\left(1-\beta_{1t}\right)}{\xi_{t}} \hat{V}_{t}^{-\frac{1}{2}} d_{t}  \\
      & \quad\, -\eta_{t}\left(\alpha_{t} \hat{V}_{t}^{-\frac{1}{2}}-\alpha_{t-1} \hat{V}_{t-1}^{-\frac{1}{2}}\right) \hat{m}_{t-1}.
      \end{aligned}
  \end{equation}
  Combining Eq.\eqref{eqA13} and Eq.\eqref{eqA14} gets 
  \begin{equation}\label{eqA15}
      \begin{aligned}
         & \mathbb{E}\left[f\left(z_{T+1}\right)-f\left(z_{1}\right)\right] \\
         & \leqslant \mathbb{E}\left[\sum_{t=1}^{T}\left\langle\nabla f\left(z_{t}\right), r_{t}\right\rangle\right]+\frac{L}{2} \mathbb{E}\left[\sum_{t=1}^{T} \| r_{t}||^{2}\right] \\
         & =-\mathbb{E}\left[\sum_{t=1}^{T}\left\langle\nabla f\left(z_{t}\right), \eta_{t}\left(\alpha_{t} \hat{V}_{t}^{-\frac{1}{2}}-\alpha_{t-1} \hat{V}_{t-1}^{-\frac{1}{2}}\right) \hat{m}_{t-1}\right\rangle\right] \\
         &\quad\, -\mathbb{E}\left[\sum_{t=1}^{T}\left\langle\nabla f\left(z_{t}\right), \frac{\alpha_{t}\left(1-\beta_{1t}\right)}{\xi_{t}} \hat{V}_{t}^{-\frac{1}{2}} d_{t}\right\rangle\right] \\
         & \quad\,-\mathbb{E}\left[\sum_{t=1}^{T}\left\langle\nabla f\left(z_{t}\right),\left(\eta_{t+1}-\eta_{t}\right) \alpha_{t} \hat{V}_{t}^{-\frac{1}{2}} \hat{m}_{t}\right\rangle\right] \\ 
         &\quad\, +\frac{L}{2} \mathbb{E}\left[\sum_{t=1}^{T}\left\|r_{t}\right\|^{2}\right] \\
         & =T_{1}+T_{2}+T_{3}+\frac{L}{2} \mathbb{E}\left[\sum_{i=1}^{T}\left\|r_{t}\right\|^{2}\right]. \\
      \end{aligned}
  \end{equation}
  
  Further more, by using the inequality $\|a+b+c\|^{2} \leq 3\|a\|^{2}+3\|b\|^{2}+3\|c\|^{2}$, 
  the last term of RHS of Eq.\eqref{eqA15} can be bounded as follows:
  \begin{equation}\label{eqA16}
      \begin{aligned}
          \mathbb{E}\left[\sum_{t=1}^{T}\left\|r_{t}\right\|^{2}\right] 
          & \leq 3 \mathbb{E}\left[\sum_{t=1}^{T}\left\|\left(\eta_{t+1}-\eta_{t}\right) \alpha_{t} \hat{V}_{t}^{-\frac{1}{2}} \hat{m}_{t}\right\|^{2}\right] \\
          & +3 \mathbb{E}\left[\sum_{t=1}^{T}\left\|\eta_{t}\left(\alpha_{t} \hat{V}_{t}^{-\frac{1}{2}}-\alpha_{t-1} \hat{V}_{t-1}^{-\frac{1}{2}}\right) \hat{m}_{t-1}\right\|^{2}\right] \\
          & +3 \mathbb{E}\left[\sum_{t=1}^{T}\left\|\frac{\alpha_{t}\left(1-\beta_{1t}\right)}{\xi_{t}} \hat{V}_{t}^{-\frac{1}{2}} d_{t}\right\|^{2}\right].
      \end{aligned}
  \end{equation}
  
  Combining Eq.\eqref{eqA15} and Eq.\eqref{eqA16} leads to Eq.\eqref{eqA6}, which completes the proof.
\end{proof}
The following lemmas \ref{lem3}-\ref{lem7} separately bound the terms Eq.\eqref{eqA7}-Eq.\eqref{eqA11}.
\begin{myLem}\label{lem3}
  Suppose the conditions in theorem \ref{th3.1} are satisfied, then the term $T_{1}$(see Eq.\eqref{eqA7}) 
  in the lemma \ref{lem2} satisfies the following inequality:
  \begin{equation}\label{eqA17}
      \begin{aligned}
          T_{1}
          & =-\mathbb{E}\left[\sum_{t=1}^{T}\left\langle\nabla f\left(z_{t}\right), \eta_{t}\left(\alpha_{t} \hat{V}_{t}^{-\frac{1}{2}}-\alpha_{t-1} \hat{V}_{t-1}^{-\frac{1}{2}}\right) \hat{m}_{t-1}\right\rangle\right] \\
          & \leqslant \frac{\alpha_{1} H \bar{M}}{1-\beta_{11}} E\left[\sum_{i=1}^{d} \hat{v}_{1, i}^{-\frac{1}{2}}\right] ,
      \end{aligned}
  \end{equation}
  where $\bar{M}$ is a constant independent of $T$.
\end{myLem}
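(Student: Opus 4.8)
The plan is to bound $T_{1}$ by passing to absolute values coordinatewise, factoring out $T$-independent bounds on $\nabla f(z_{t})$, on $\eta_{t}$, and on $\hat{m}_{t-1}$, and then telescoping the remaining coordinatewise sum of $\left|\alpha_{t}\hat{v}_{t,k}^{-1/2}-\alpha_{t-1}\hat{v}_{t-1,k}^{-1/2}\right|$. First I would observe that, by the appendix convention $\left(\frac{\alpha_{1}}{\sqrt{\hat{V}_{1}}}-\frac{\alpha_{0}}{\sqrt{\hat{V}_{0}}}\right)\hat{m}_{0}=0$, the $t=1$ summand of $T_{1}$ vanishes (which also sidesteps the undefined $\hat{m}_{0}=m_{0}/(1-\beta_{11}^{0})$), so $T_{1}=-\mathbb{E}\big[\sum_{t=2}^{T}\langle\nabla f(z_{t}),\eta_{t}(\alpha_{t}\hat{V}_{t}^{-1/2}-\alpha_{t-1}\hat{V}_{t-1}^{-1/2})\hat{m}_{t-1}\rangle\big]$. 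Writing this inner product out in coordinates (all matrices involved are diagonal) and using $-x\le|x|$ gives $T_{1}\le\mathbb{E}\big[\sum_{t=2}^{T}\sum_{i=1}^{d}|[\nabla f(z_{t})]_{i}|\,\eta_{t}\,|\alpha_{t}\hat{v}_{t,i}^{-1/2}-\alpha_{t-1}\hat{v}_{t-1,i}^{-1/2}|\,|\hat{m}_{t-1,i}|\big]$.

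Next I would install the three uniform bounds. For the gradient factor, $|[\nabla f(z_{t})]_{i}|\le\|\nabla f(z_{t})\|\le H$, invoking A\ref{ass34} (and, if one wants to be careful that $z_{t}$ is not an iterate, A\ref{ass31} together with the step bound $\|x_{t}-x_{t-1}\|\le G$ from Th.\ref{th3.1}, so that $\|\nabla f(z_{t})\|\le H+\tfrac{LG}{1-\beta_{11}}$, the $O(1)$ correction being absorbed into the constant $\bar{M}$). For the $\eta_{t}$ factor, Lemma \ref{lem8} yields $0\le\eta_{t}\le\frac{1}{1-\beta_{11}}$. For the momentum factor, unrolling $m_{t}=\beta_{1t}m_{t-1}+(1-\beta_{1t})d_{t}$ writes $m_{t}$ as a sum of $d_{1},\dots,d_{t}$ with nonnegative weights summing to at most $1$, so $\|m_{t}\|\le\max_{j\le t}\|d_{j}\|\le\bar{H}$ by Lemma \ref{lem10}; since $1-\beta_{11}^{t}\ge1-\beta_{11}$, we get $|\hat{m}_{t-1,i}|\le\|\hat{m}_{t-1}\|\le\frac{\bar{H}}{1-\beta_{11}}$, a constant independent of $T$, which I fold into $\bar{M}$.

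The crux is the telescoping. Because $\hat{v}_{t,i}$ is non-decreasing in $t$ (the maximum in \eqref{eq15}) and $\alpha_{t}$ is non-increasing, the sequence $\alpha_{t}\hat{v}_{t,i}^{-1/2}$ is non-increasing in $t$; hence for each coordinate $i$, $\sum_{t=2}^{T}|\alpha_{t}\hat{v}_{t,i}^{-1/2}-\alpha_{t-1}\hat{v}_{t-1,i}^{-1/2}|=\sum_{t=2}^{T}(\alpha_{t-1}\hat{v}_{t-1,i}^{-1/2}-\alpha_{t}\hat{v}_{t,i}^{-1/2})=\alpha_{1}\hat{v}_{1,i}^{-1/2}-\alpha_{T}\hat{v}_{T,i}^{-1/2}\le\alpha_{1}\hat{v}_{1,i}^{-1/2}$. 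Substituting the three bounds and interchanging the (finite) sums over $t$ and $i$ then gives $T_{1}\le\frac{H\bar{M}}{1-\beta_{11}}\mathbb{E}\big[\sum_{i=1}^{d}\alpha_{1}\hat{v}_{1,i}^{-1/2}\big]=\frac{\alpha_{1}H\bar{M}}{1-\beta_{11}}\mathbb{E}\big[\sum_{i=1}^{d}\hat{v}_{1,i}^{-1/2}\big]$, as claimed. The main obstacle here is precisely that one must avoid any factor of $T$: this is exactly what the $\max$-stabilized $\hat{v}_{t}$ (in the spirit of AMSGrad) buys, and the delicate points are ensuring $\alpha_{t}\hat{v}_{t,i}^{-1/2}$ is genuinely monotone so the telescoped sum lands on $\hat{v}_{1}$ (not $\hat{v}_{T}$) on the right-hand side, handling the $t=1$ boundary term via the stated convention, and securing the $T$-independent bound on $\|\hat{m}_{t-1}\|$, which relies on the uniform bound $\|d_{t}\|\le\bar{H}$ from Lemma \ref{lem10}.
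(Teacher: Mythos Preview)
Your proposal is correct and follows essentially the same route as the paper: drop the $t=1$ term by the stated convention, pull out the uniform bounds $\eta_{t}\le\frac{1}{1-\beta_{11}}$ (Lemma~\ref{lem8}) and $\|\hat{m}_{t-1}\|\le\bar{M}$ (via Lemma~\ref{lem10} and induction on $m_{t}$), then telescope $\sum_{t=2}^{T}(\alpha_{t-1}\hat{v}_{t-1,i}^{-1/2}-\alpha_{t}\hat{v}_{t,i}^{-1/2})$ using the monotonicity of $\alpha_{t}\hat{v}_{t,i}^{-1/2}$. The only cosmetic difference is that the paper first applies Cauchy--Schwarz and then bounds the diagonal operator by $\sqrt{\sum_{i}(\cdot)^{2}}\le\sum_{i}(\cdot)$, whereas you expand coordinatewise from the start; both arrive at the same telescoping sum. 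You are in fact more careful than the paper on one point: $T_{1}$ involves $\nabla f(z_{t})$, not $\nabla f(x_{t})$, and A\ref{ass34} only bounds the gradient at iterates, so your Lipschitz detour $\|\nabla f(z_{t})\|\le H+\frac{LG}{1-\beta_{11}}$ (absorbed into $\bar{M}$) is the honest fix---the paper silently writes $\nabla f(x_{t})$ in its display.
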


\begin{proof}
  By the lemma \ref{lem10}, $\forall t \in \mathcal{T}$, $\left\|d_{t}\right\| \leqslant \bar{H}$.
  Letting ${M \triangleq \max \left\{H, \bar{H}\right\}=\bar{H}}$ and supposing $\left\|m_{t-1}\right\| \leqslant M$,
  then by the update rule of the algorithm,

  \begin{equation*}
    \begin{gathered}
        \left\|m_{t}\right\| = \left\|\beta_{1 t} m_{t-1}+\left(1-\beta_{1 t}\right) d_{t}\right\| , \\
        \max \left\{\left\|d_{t}\right\|,\left\|m_{t-1}\right\|\right\} \leqslant M.    
    \end{gathered}
  \end{equation*}

  By $m_{0} = 0$, $\left\|m_{0}\right\|=0 \leqslant M$ and mathematical induction, there is 
  $\forall t \in \mathcal{T}$, $\left\|m_{t}\right\| \leqslant M$. 
  Further more, $\left\|\hat{m}_{t}\right\|$ can be bounded as follows:
  \begin{equation*}
      \left\|\hat{m}_{t}\right\|=\left\|\frac{m_{t}}{1-\beta_{11}^{t}}\right\|=\frac{\left\|m_{t}\right\|}{1-\beta_{11}^{t}} \leqslant \frac{\left\|m_{t}\right\|}{1-\beta_{11}} \leqslant \frac{M}{1-\beta_{11}} \triangleq  \bar{M}.
  \end{equation*}

  Using Cauchy-Schwarz inequality and assumption 
  $\left(\frac{\alpha_{1}}{\sqrt{\hat{V}_{1}}}-\frac{\alpha_{0}}{\sqrt{\hat{V}_{0}}}\right) \hat{m}_{0}=0$,
  we further have
  \begin{equation}\label{eqA18}
      \begin{aligned}
          T_{1}
          &=-\mathbb{E}\left[\sum_{t=2}^{T}\left\langle\nabla f\left(x_{t}\right), \eta_{t}\left(\alpha_{t} \hat{V}_{t}^{-\frac{1}{2}}-\alpha_{t-1} \hat{V}_{t-1}^{-\frac{1}{2}}\right) \hat{m}_{t-1}\right\rangle\right] \\
          & \leqslant \mathbb{E}\left[\sum_{t=2}^{T}\left\|\nabla f\left(x_{t}\right)\right\| \cdot\left\|\eta_{t}\left(\alpha_{t} \hat{V}_{t}^{-\frac{1}{2}}-\alpha_{t-1} \hat{V}_{t-1}^{-\frac{1}{2}}\right) \hat{m}_{t-1}\right\|\right] \\
          & \leqslant \frac{1}{1-\beta_{11}} \mathbb{E}\left[\sum_{t=2}^{T}\left\|\nabla f\left(x_{t}\right)\right\| \cdot\left\|\left(\alpha_{t} \hat{V}_{t}^{-\frac{1}{2}}-\alpha_{t-1} \hat{V}_{t-1}^{-\frac{1}{2}}\right) \hat{m}_{t-1}\right\|\right] \\
          & \leqslant \frac{1}{1-\beta_{11}} \mathbb{E}\left[\sum_{t=2}^{T}\left\|\nabla f\left(x_{t}\right)\right\| \cdot\left\|\hat{m}_{t-1}\right\| \right.
           \\
          &\hspace{3.95cm} \cdot \left. \sqrt{\sum_{i=1}^{d}\left(\alpha_{t-1} \hat{v}^{-\frac{1}{2}}_{t-1,i}-\alpha_{t} \hat{v}_{t,i}^{-\frac{1}{2}}\right)^{2}}\right] \\
          & \leqslant \frac{H \bar{M}}{1-\beta_{11}} \mathbb{E}\left[\sum_{t=2}^{T} \sum_{i=1}^{d}\left(\alpha_{t-1} \hat{v}_{t-1, i}^{-\frac{1}{2}}-\alpha_{t} \hat{v}_{t, i}^{-\frac{1}{2}}\right)\right] \\
          & =\frac{H \bar{M}}{1-\beta_{11}} \mathbb{E}\left[\sum_{i=1}^{d}\left(\alpha_{1} \hat{v}_{1, i}^{-\frac{1}{2}}-\alpha_{T} \hat{v}_{T, i}^{-\frac{1}{2}}\right)\right]
          \leqslant \frac{\alpha_{1} H \bar{M}}{1-\beta_{11}} \mathbb{E}\left[\sum_{i=1}^{d} \hat{v}_{1, i}^{-\frac{1}{2}}\right].
      \end{aligned}
  \end{equation}
  The second inequality sign is because of the lemma \ref{lem8}. 
  The fourth inequality sign is due to the fact that $\forall t \in \mathcal{T}$, 
  $\forall i \in [d]$, 
  ${\alpha_{t-1} \hat{v}_{t-1, i}^{-\frac{1}{2}}-\alpha_{t} \hat{v}_{t, i}^{-\frac{1}{2}} \geq 0}$
  and the fact that when $a \geqslant 0$ and $b \geqslant 0$, $\left(a^{2}+b^{2}\right) \leqslant(a+b)^{2}$.

  The proof is over.
\end{proof}
\begin{myLem}\label{lem4}
  Suppose the conditions in theorem \ref{th3.1} are satisfied, then the term $T_{3}$(see Eq.\eqref{eqA9}) 
  in the lemma \ref{lem2} satisfies the following inequality:
  \begin{align}
    T_{3}
    =-\mathbb{E}\left[\sum_{t=1}^{T}\left\langle\nabla f\left(z_{t}\right),\left(\eta_{t+1}-\eta_{t}\right) \alpha_{t} \hat{V}_{t}^{-\frac{1}{2}} \hat{m}_{t}\right\rangle\right]
    \leq \frac{1}{2}\left(H^{2}+G^{2}\right)\left|\eta_{T}-\eta_{1}\right| \label{eqA19} .
  \end{align}
\end{myLem}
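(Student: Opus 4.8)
The plan is to control each summand in the definition of $T_{3}$ (see Eq.\eqref{eqA9}) by a nonnegative multiple of $\left|\eta_{t+1}-\eta_{t}\right|$ and then telescope, exploiting the fact that the increments $\eta_{t+1}-\eta_{t}$ never change sign. First I would apply Cauchy--Schwarz followed by the elementary inequality $\langle a,b\rangle\leqslant\frac{1}{2}\left(\|a\|^{2}+\|b\|^{2}\right)$ to the $t$-th term:
\begin{equation*}
  -\left\langle\nabla f\left(z_{t}\right),\left(\eta_{t+1}-\eta_{t}\right)\alpha_{t}\hat{V}_{t}^{-\frac{1}{2}}\hat{m}_{t}\right\rangle
  \leqslant\left|\eta_{t+1}-\eta_{t}\right|\cdot\left|\left\langle\nabla f\left(z_{t}\right),\alpha_{t}\hat{V}_{t}^{-\frac{1}{2}}\hat{m}_{t}\right\rangle\right|
  \leqslant\frac{\left|\eta_{t+1}-\eta_{t}\right|}{2}\left(\left\|\nabla f\left(z_{t}\right)\right\|^{2}+\left\|\alpha_{t}\hat{V}_{t}^{-\frac{1}{2}}\hat{m}_{t}\right\|^{2}\right).
\end{equation*}
Next I would insert the bound $\left\|\nabla f\left(z_{t}\right)\right\|\leqslant H$ coming from Assumption A\ref{ass34} and the hypothesis $\left\|\alpha_{t}\hat{V}_{t}^{-\frac{1}{2}}\hat{m}_{t}\right\|\leqslant G$ of Theorem \ref{th3.1}, so that each summand is at most $\frac{1}{2}\left(H^{2}+G^{2}\right)\left|\eta_{t+1}-\eta_{t}\right|$; summing over $t=1,\dots,T$ and noting that the resulting bound is deterministic gives $T_{3}\leqslant\frac{1}{2}\left(H^{2}+G^{2}\right)\sum_{t=1}^{T}\left|\eta_{t+1}-\eta_{t}\right|$.

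It then remains to evaluate $\sum_{t=1}^{T}\left|\eta_{t+1}-\eta_{t}\right|$. By Lemma \ref{lem9}, the conditions of Theorem \ref{th3.1} (i.e.\ $\beta_{1(t+1)}\leqslant\beta_{1t}h(t)$ for all $t$, or $\beta_{1t}h(t)\leqslant\beta_{1(t+1)}$ for all $t$) make $(\eta_{t})$ monotone, hence $\eta_{t+1}-\eta_{t}$ has a constant sign; therefore $\sum_{t=1}^{T}\left|\eta_{t+1}-\eta_{t}\right|=\left|\sum_{t=1}^{T}\left(\eta_{t+1}-\eta_{t}\right)\right|=\left|\eta_{T+1}-\eta_{1}\right|$, which (under the natural convention $\beta_{1(T+1)}=\beta_{1T}$ used when extending the sequence one step past the horizon) is the claimed $\left|\eta_{T}-\eta_{1}\right|$ up to relabelling, completing the argument.

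The one point that needs care is the bound $\left\|\nabla f\left(z_{t}\right)\right\|\leqslant H$: the auxiliary point $z_{t}=x_{t}+\eta_{t}\left(x_{t}-x_{t-1}\right)$ is not an iterate, so strictly one should either read Assumption A\ref{ass34} as covering the sequence $(z_{t})$, or estimate $\left\|\nabla f\left(z_{t}\right)\right\|\leqslant\left\|\nabla f\left(x_{t}\right)\right\|+L\left\|z_{t}-x_{t}\right\|\leqslant H+\frac{LG}{1-\beta_{11}}$, using $0\leqslant\eta_{t}\leqslant\frac{1}{1-\beta_{11}}$ from Lemma \ref{lem8} and $\left\|x_{t}-x_{t-1}\right\|=\left\|\alpha_{t-1}\hat{V}_{t-1}^{-\frac{1}{2}}\hat{m}_{t-1}\right\|\leqslant G$; this only changes the constant in front of $\left|\eta_{T}-\eta_{1}\right|$. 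Aside from this, the lemma is a direct consequence of Cauchy--Schwarz and a telescoping sum, so I do not anticipate any genuine difficulty.
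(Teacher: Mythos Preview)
Your proposal is correct and follows essentially the same route as the paper: bound each summand by $\tfrac{1}{2}\bigl(\|\nabla f(z_t)\|^{2}+\|\alpha_t\hat V_t^{-1/2}\hat m_t\|^{2}\bigr)\,|\eta_{t+1}-\eta_t|$, apply the $H$ and $G$ bounds, and telescope using the monotonicity of $(\eta_t)$ from Lemma~\ref{lem9}. Your caveat about $\|\nabla f(z_t)\|\leqslant H$ is well taken---the paper applies this bound without comment---and your indexing remark is also apt, since the telescoped sum naturally yields $|\eta_{T+1}-\eta_1|$ rather than the paper's $|\eta_T-\eta_1|$.
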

\begin{proof}
  By the triangular inequality, there is 
  \begin{align}
          T_{3} 
          & \leqslant \mathbb{E}\left[\sum_{t=1}^{T}\left|\eta_{t+1}-\eta_{t}\right| \cdot\frac{1}{2}\left(\left\|\nabla f\left(z_{t}\right)\right\|^{2}+\left\|\alpha_{t} \hat{V}_{t}^{-\frac{1}{2}} \hat{m}_{t}\right\|^{2}\right)\right] \notag \\
          & \leqslant \frac{1}{2} \mathbb{E}\left[\sum_{t=1}^{T}\left|\eta_{t+1}-\eta_{t}\right|\left(H^{2}+G^{2}\right)\right] \label{eqA20} \\
          & =\frac{1}{2}\left(H^{2}+G^{2}\right) \mathbb{E}\left[\sum_{t=1}^{T}\left|\eta_{t+1}-\eta_{t}\right|\right]. \notag
  \end{align}
  From the lemma \ref{lem9}, Eq.\eqref{eqA20} can be further bounded as follows:
  \begin{equation}\label{eqA21}
  T_{3} \leqslant\left\{\begin{array}{ll}
      \frac{1}{2}\left(H^{2}+G^{2}\right)\left(\eta_{T}-\eta_{1}\right), & \forall t \in \mathcal{T}, \beta_{1(t+1)} \leqslant h(t) \beta_{1t} , \\
      \frac{1}{2}\left(H^{2}+G^{2}\right)\left(\eta_{1}-\eta_{T}\right), & \forall t \in \mathcal{T}, \beta_{1(t+1)} \geqslant h(t) \beta_{1t} .
      \end{array}\right.
  \end{equation}
  Eq.\eqref{eqA21} is equivalent to $T_{3} \leqslant \frac{1}{2}\left(H^{2}+G^{2}\right)\left|\eta_{T}-\eta_{1}\right|$.

  The proof is over.
\end{proof}

What needs to remind readers is that, if $\exists T_{0}\in \mathcal{T}$, $\forall t \in [T_{0}]$,
$\beta_{1(t+1)} \leqslant (\geqslant) h(t) \beta_{1t}$, and $\forall t \in \mathcal{T} \setminus [T_{0}]$, 
$\beta_{1(t+1)} \geqslant (\leqslant) h(t) \beta_{1t}$, then it is obvious that  
\begin{equation*}
\begin{aligned}
  T_{3} 
  \leqslant \frac{1}{2}\left(H^{2}+G^{2}\right)\left|2 \eta_{T_{0}}-\eta_{T}-\eta_{1}\right|
  =\frac{1}{2}\left(H^{2}+G^{2}\right)\left|-2 \eta_{T_{0}}+\eta_{1}+\eta_{T}\right|.
\end{aligned}
\end{equation*}

\begin{myLem}\label{lem5}
  Suppose the conditions in theorem \ref{th3.1} are satisfied, then the term $T_{4}$(see Eq.\eqref{eqA10}) 
  in the lemma \ref{lem2} satisfies the following inequality:
  \begin{equation}\label{eqA22}
    \begin{aligned}
      T_{4}
      =\frac{3 L}{2} \mathbb{E}\left[\sum_{t=1}^{T}\left\|\left(\eta_{t+1}-\eta_{t}\right) \alpha_{t} \hat{V}_{t}^{-\frac{1}{2}} \hat{m}_{t}\right\|^{2}\right]
      \leqslant \frac{3 L G^{2}}{1-\beta_{11}}\left|\eta_{T}-\eta_{1}\right|.
    \end{aligned}
  \end{equation}
\end{myLem}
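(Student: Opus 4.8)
The plan is to separate the scalar increment $\eta_{t+1}-\eta_{t}$ from the vector $\alpha_{t}\hat{V}_{t}^{-\frac{1}{2}}\hat{m}_{t}$, invoke the uniform bounds already available for each, and then telescope. First I would pull out the bounded direction: by the hypothesis $\|\alpha_{t}\hat{V}_{t}^{-\frac{1}{2}}\hat{m}_{t}\|\leqslant G$ of Theorem~\ref{th3.1}, each summand of $T_{4}$ satisfies $\|(\eta_{t+1}-\eta_{t})\alpha_{t}\hat{V}_{t}^{-\frac{1}{2}}\hat{m}_{t}\|^{2}=(\eta_{t+1}-\eta_{t})^{2}\,\|\alpha_{t}\hat{V}_{t}^{-\frac{1}{2}}\hat{m}_{t}\|^{2}\leqslant G^{2}(\eta_{t+1}-\eta_{t})^{2}$, so that
\begin{equation*}
T_{4}\leqslant\frac{3LG^{2}}{2}\,\mathbb{E}\!\left[\sum_{t=1}^{T}(\eta_{t+1}-\eta_{t})^{2}\right].
\end{equation*}

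Next I would linearize the square and telescope. Lemma~\ref{lem8} gives $0\leqslant\eta_{t}\leqslant\frac{1}{1-\beta_{11}}$ for all $t$, hence $|\eta_{t+1}-\eta_{t}|\leqslant\eta_{t+1}+\eta_{t}\leqslant\frac{2}{1-\beta_{11}}$ and therefore $(\eta_{t+1}-\eta_{t})^{2}\leqslant\frac{2}{1-\beta_{11}}|\eta_{t+1}-\eta_{t}|$; since the $\eta_{t}$ are deterministic the expectation drops out, leaving $T_{4}\leqslant\frac{3LG^{2}}{1-\beta_{11}}\sum_{t=1}^{T}|\eta_{t+1}-\eta_{t}|$. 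Because Theorem~\ref{th3.1} assumes that either $\beta_{1(t+1)}\leqslant\beta_{1t}h(t)$ holds for all $t$ or $\beta_{1t}h(t)\leqslant\beta_{1(t+1)}$ holds for all $t$, Lemma~\ref{lem9} shows that $(\eta_{t})$ is monotone, so $\sum_{t=1}^{T}|\eta_{t+1}-\eta_{t}|$ is a telescoping sum controlled by its endpoints, which yields $|\eta_{T}-\eta_{1}|$ in the indexing used throughout the appendix (exactly as in the proof of Lemma~\ref{lem4}) and gives the stated bound.

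There is no genuine analytic difficulty here; the only step needing care is the telescoping, which is valid precisely because of the one-sided relation between $\beta_{1(t+1)}$ and $\beta_{1t}h(t)$ assumed in Theorem~\ref{th3.1}: without that monotonicity, $\sum_{t}|\eta_{t+1}-\eta_{t}|$ is a total-variation quantity that cannot be bounded by the endpoints alone. The constant $\frac{1}{1-\beta_{11}}$ (rather than the sharper $\frac{1}{2(1-\beta_{11})}$ one would obtain from $|\eta_{t+1}-\eta_{t}|\leqslant\frac{1}{1-\beta_{11}}$) is harmless, since any upper bound of this form suffices for the subsequent estimates. The template used here — bound the vector factor by $G$, linearize via Lemma~\ref{lem8}, telescope via Lemma~\ref{lem9} — is essentially the one already applied to $T_{3}$ in Lemma~\ref{lem4}, so this lemma is a close variant of that argument.
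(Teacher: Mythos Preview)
Your proposal is correct and follows essentially the same route as the paper's own proof: pull out $G^{2}$ using the hypothesis of Theorem~\ref{th3.1}, bound $(\eta_{t+1}-\eta_{t})^{2}\leqslant\frac{2}{1-\beta_{11}}|\eta_{t+1}-\eta_{t}|$ via Lemma~\ref{lem8}, and then telescope using the monotonicity from Lemma~\ref{lem9}, exactly as in Lemma~\ref{lem4}. Your observations about the deterministic nature of $\eta_{t}$ and the endpoint indexing mirror the paper's treatment.
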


\begin{proof}
  \begin{equation}\label{eqA23}
      \begin{aligned}
          T_{4}
          & =\frac{3 L}{2} \mathbb{E}\left[\sum_{t=1}^{T}\left(\eta_{t+1}-\eta_{t}\right)^{2}\left\|\alpha_{t} \hat{V}_{t}^{-\frac{1}{2}} \hat{m}_{t}\right\|^{2}\right]
          \leqslant \frac{3 L G^{2}}{2} \mathbb{E}\left[\sum_{t=1}^{T}\left(\eta_{t+1}-\eta_{t}\right)^{2}\right] \\
          & \leqslant \frac{3 L G^{2}}{2}  \cdot\frac{2}{1-\beta_{11}} \mathbb{E}\left[\sum_{t=1}^{T}\left|\eta_{t+1}-\eta_{t}\right|\right]
          \leqslant \frac{3 L G^{2}}{1-\beta_{11}}\left|\eta_{T}-\eta_{1}\right| ,
      \end{aligned}
  \end{equation}
  where the penultimate inequality sign is because of the lemma \ref{lem8} and $\left|\eta_{t+1}-\eta_{t}\right| \leqslant \eta_{t+1}+\eta_{t}$, 
  while the last one is similar to the proof of lemma \ref{lem4}.

  The proof is over.
\end{proof}
\begin{myLem}\label{lem6}
  Suppose the conditions in theorem \ref{th3.1} are satisfied, then the term $T_{5}$(see Eq.\eqref{eqA11}) 
  in the lemma \ref{lem2} satisfies the following inequality:
  \begin{equation}\label{eqA24}
    \begin{aligned}
      T_{5}
      =\frac{3 L}{2} \mathbb{E}\left[\sum_{t=1}^{T}\left\|\eta_{t}\left(\alpha_{t} \hat{V}_{t}^{-\frac{1}{2}}-\alpha_{t-1} \hat{V}_{t-1}^{-\frac{1}{2}}\right) \hat{m}_{t}\right\|^{2}\right] 
      \leqslant \frac{3 \alpha_{1}^{2} L \bar{M}^{2}}{2\left(1-\beta_{11}\right)^{2}} \mathbb{E}\left[\sum_{i=1}^{d} \hat{v}_{1, i}^{-1}\right].
    \end{aligned}
  \end{equation}
\end{myLem}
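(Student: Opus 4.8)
The argument will run exactly parallel to the proof of Lemma~\ref{lem3}, with the only new wrinkle being that here the difference of effective stepsizes enters squared rather than linearly; this is absorbed by the elementary trick $x^{2}\leqslant x\cdot(\text{an upper bound of }x)$ for the nonnegative increments, after which everything telescopes. First I would collect the three ingredients that are already available: (i) by Lemma~\ref{lem8}, $0\leqslant\eta_{t}\leqslant\frac{1}{1-\beta_{11}}$, and moreover $\eta_{1}=\frac{\beta_{11}(1-\beta_{11}^{0})}{\xi_{1}}=0$, so the $t=1$ summand of $T_{5}$ vanishes (if one reads $\hat m_{t-1}$ in place of $\hat m_{t}$ in Eq.\eqref{eqA11}, consistently with Eq.\eqref{eqA16}, the $t=1$ term is instead killed by the convention $\left(\frac{\alpha_{1}}{\sqrt{\hat V_{1}}}-\frac{\alpha_{0}}{\sqrt{\hat V_{0}}}\right)\hat m_{0}=0$, and the rest of the argument is unchanged); (ii) exactly as established inside the proof of Lemma~\ref{lem3}, an induction on $m_{t}=\beta_{1t}m_{t-1}+(1-\beta_{1t})d_{t}$ together with $\|d_{t}\|\leqslant\bar H$ (Lemma~\ref{lem10}) gives $\|\hat m_{t}\|\leqslant\bar M:=\frac{\bar H}{1-\beta_{11}}$; (iii) $\alpha_{t}\hat V_{t}^{-\frac12}-\alpha_{t-1}\hat V_{t-1}^{-\frac12}$ is diagonal with nonpositive entries, i.e. $0\leqslant\alpha_{t-1}\hat v_{t-1,i}^{-\frac12}-\alpha_{t}\hat v_{t,i}^{-\frac12}$ for all $t\in\mathcal{T},i\in[d]$ (the monotonicity already invoked for the fourth inequality of Lemma~\ref{lem3}), which in particular gives $\alpha_{t-1}\hat v_{t-1,i}^{-\frac12}\leqslant\alpha_{1}\hat v_{1,i}^{-\frac12}$ for every $t\geqslant2$.

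With these in hand, for each $t\geqslant2$ I would expand the norm coordinatewise, pull out $\eta_{t}^{2}\leqslant(1-\beta_{11})^{-2}$ and bound $\hat m_{t,i}^{2}\leqslant\|\hat m_{t}\|^{2}\leqslant\bar M^{2}$:
\begin{align*}
\left\|\eta_{t}\left(\alpha_{t}\hat{V}_{t}^{-\frac{1}{2}}-\alpha_{t-1}\hat{V}_{t-1}^{-\frac{1}{2}}\right)\hat{m}_{t}\right\|^{2}
&=\eta_{t}^{2}\sum_{i=1}^{d}\left(\alpha_{t-1}\hat{v}_{t-1,i}^{-\frac{1}{2}}-\alpha_{t}\hat{v}_{t,i}^{-\frac{1}{2}}\right)^{2}\hat{m}_{t,i}^{2}\\
&\leqslant\frac{\bar{M}^{2}}{(1-\beta_{11})^{2}}\sum_{i=1}^{d}\left(\alpha_{t-1}\hat{v}_{t-1,i}^{-\frac{1}{2}}-\alpha_{t}\hat{v}_{t,i}^{-\frac{1}{2}}\right)^{2}.
\end{align*}
Then, for each fixed $i$, using $0\leqslant\alpha_{t-1}\hat v_{t-1,i}^{-\frac12}-\alpha_{t}\hat v_{t,i}^{-\frac12}\leqslant\alpha_{t-1}\hat v_{t-1,i}^{-\frac12}\leqslant\alpha_{1}\hat v_{1,i}^{-\frac12}$ to dominate one of the two factors of the square, I would telescope:
\begin{align*}
\sum_{t=2}^{T}\left(\alpha_{t-1}\hat{v}_{t-1,i}^{-\frac{1}{2}}-\alpha_{t}\hat{v}_{t,i}^{-\frac{1}{2}}\right)^{2}
&\leqslant\alpha_{1}\hat{v}_{1,i}^{-\frac{1}{2}}\sum_{t=2}^{T}\left(\alpha_{t-1}\hat{v}_{t-1,i}^{-\frac{1}{2}}-\alpha_{t}\hat{v}_{t,i}^{-\frac{1}{2}}\right)\\
&=\alpha_{1}\hat{v}_{1,i}^{-\frac{1}{2}}\left(\alpha_{1}\hat{v}_{1,i}^{-\frac{1}{2}}-\alpha_{T}\hat{v}_{T,i}^{-\frac{1}{2}}\right)\leqslant\alpha_{1}^{2}\hat{v}_{1,i}^{-1}.
\end{align*}
Summing over $i\in[d]$, taking expectations, and multiplying by $\frac{3L}{2}$ then yields
$T_{5}\leqslant\frac{3L}{2}\cdot\frac{\bar M^{2}}{(1-\beta_{11})^{2}}\,\mathbb{E}\!\left[\sum_{i=1}^{d}\alpha_{1}^{2}\hat{v}_{1,i}^{-1}\right]=\frac{3\alpha_{1}^{2}L\bar M^{2}}{2(1-\beta_{11})^{2}}\,\mathbb{E}\!\left[\sum_{i=1}^{d}\hat{v}_{1,i}^{-1}\right]$, which is precisely the asserted bound.

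I do not expect a genuine obstacle here: this lemma is the ``squared'' companion of Lemma~\ref{lem3}, and all nontrivial facts ($\|\hat m_{t}\|\leqslant\bar M$, $\eta_{t}\leqslant(1-\beta_{11})^{-1}$, and the coordinatewise monotonicity of the effective stepsize) are already in place. The only two points demanding a line of care are the treatment of the $t=1$ boundary summand (handled by $\eta_{1}=0$, or equivalently by the stated convention in the $\hat m_{t-1}$ reading) and making sure the square is reduced to a telescoping sum via the correct one-sided bound $\alpha_{t-1}\hat v_{t-1,i}^{-1/2}\leqslant\alpha_{1}\hat v_{1,i}^{-1/2}$ rather than a pointwise bound that would lose the telescoping structure.
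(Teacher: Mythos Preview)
Your proof is correct and follows essentially the same route as the paper: bound $\eta_{t}^{2}\leqslant(1-\beta_{11})^{-2}$ via Lemma~\ref{lem8}, bound $\hat m_{t,i}^{2}\leqslant\|\hat m_{t}\|^{2}\leqslant\bar M^{2}$, and then telescope the coordinatewise squared increments. The only cosmetic difference is the telescoping trick: the paper uses $(a-b)^{2}\leqslant a^{2}-b^{2}$ for $a\geqslant b\geqslant 0$ (so $\sum_{t}(\alpha_{t-1}\hat v_{t-1,i}^{-1/2}-\alpha_{t}\hat v_{t,i}^{-1/2})^{2}\leqslant\alpha_{1}^{2}\hat v_{1,i}^{-1}-\alpha_{T}^{2}\hat v_{T,i}^{-1}$), whereas you use $(a-b)^{2}\leqslant\alpha_{1}\hat v_{1,i}^{-1/2}\,(a-b)$ and then telescope the linear sum; both land on the same $\alpha_{1}^{2}\hat v_{1,i}^{-1}$.
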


\begin{proof}
  \begin{align}
    T_{5}
    & \leqslant \frac{3 L}{2\left(1-\beta_{11}\right)^{2}} \mathbb{E}\left[\sum_{t=1}^{T}\left\|\left(\alpha_{t} \hat{V}_{t}^{-\frac{1}{2}}-\alpha_{t-1} \hat{V}_{t-1}^{-\frac{1}{2}}\right) \hat{m}_{t}\right\|^{2}\right] \notag \\
    & \leqslant \frac{3 L}{2\left(1-\beta_{11}\right)^{2}} \mathbb{E}\left[\sum_{t=2}^{T}\left(\sum_{i=1}^{d}\left(\alpha_{t} \hat{v}_{t, i}^{-\frac{1}{2}}-\alpha_{t-1} \hat{v}_{t-1, i}^{-\frac{1}{2}}\right)^{2}\right)\left\|\hat{m}_{t}\right\|^{2}\right] \notag \\
    & \leqslant \frac{3 L \bar{M}^{2}}{2\left(1-\beta_{11}\right)^{2}} \mathbb{E}\left[\sum_{t=2}^{T} \sum_{i=1}^{d}\left(\alpha_{t} \hat{v}_{t, i}^{-\frac{1}{2}}-\alpha_{t-1} \hat{v}_{t-1, i}^{-\frac{1}{2}}\right)^{2}\right] \label{eqA25} \\
    & \leqslant \frac{3 L \bar{M}^{2}}{2\left(1-\beta_{11}\right)^{2}} \mathbb{E}\left[\sum_{t=2}^{T} \sum_{i=1}^{d}\left(\alpha_{t-1}^{2} \hat{v}_{t-1, i}^{-1}-\alpha_{t}^{2} \hat{v}_{t, i}^{-1}\right)\right] \notag \\
    & =\frac{3 L \bar{M}^{2}}{2\left(1-\beta_{11}\right)^{2}} \mathbb{E}\left[\sum_{i=1}^{d}\left(\alpha_{1}^{2} \hat{v}_{1, i}^{-1}-\alpha_{T}^{2} \hat{v}_{T, i}^{-1}\right)\right]
    \leqslant \frac{3 \alpha_{1}^{2} L \bar{M}^{2}}{2\left(1-\beta_{11}\right)^{2}} \mathbb{E}\left[\sum_{i=1}^{d} \hat{v}_{1, i}^{-1}\right] , \notag
  \end{align}
  where the first inequality sign is because of the lemma \ref{lem8}.

  The proof is over.
\end{proof}
\begin{myLem}\label{lem7}
  Suppose the conditions in theorem \ref{th3.1} are satisfied, then the term $T_{2}$(see Eq.\eqref{eqA8}) 
  in the lemma \ref{lem2} satisfies the following inequality:
  \begin{align}
    T_{2}
    & =-\mathbb{E}\left[\sum_{t=1}^{T}\left\langle\nabla f\left(z_{t}\right), \frac{\alpha_{t}\left(1-\beta_{1t}\right)}{\xi_{t}} \hat{V}_{t}^{-\frac{1}{2}} d_{t}\right\rangle\right] \notag \\
    & \leqslant\frac{L^{2}}{\left(1-\beta_{11}\right)^{6}} \mathbb{E}\left[\sum_{t=1}^{T-1}\left\|\alpha_{t} \hat{V}_{t}^{-\frac{1}{2}} d_{t}\right\|^{2}\right] + \frac{\alpha_{1}^{2} L^{2} \bar{H}^{2}}{\left(1-\beta_{11}\right)^{8}} \mathbb{E}\left[\sum_{i=1}^{d} \hat{v}_{1, i}^{-1}\right] \notag \\
    & \quad\, +\frac{1}{2} \mathbb{E}\left[\sum_{t=2}^{T}\left\|\frac{\alpha_{t}\left(1-\beta_{1 t}\right)}{\xi_{t}} \hat{V}_{t}^{-\frac{1}{2}} d_{t}\right\|^{2}\right] + 2 \mu_{1} H^{2} \mathbb{E}\left[\sum_{i=1}^{d} \hat{v}^{-\frac{1}{2}}_{1,i}\right] \label{eqA26} \\
    & \quad\, +2 H^{2} \mathbb{E}\left[\sum_{t=2}^{T} \sum_{i=1}^{d}\left|\mu_{t} \hat{v}_{t, i}^{-\frac{1}{2}}-\mu_{t-1} \hat{v}_{t-1, i}^{-\frac{1}{2}}\right|\right] \notag \\
    & \quad\, -E\left[\sum_{t=1}^{T} \frac{\alpha_{t}\left(1-\beta_{1t}\right)}{\xi_{t}}\left\langle\nabla f\left(x_{t}\right), \hat{V}_{t}^{-\frac{1}{2}} c_{t}\right\rangle\right] , \notag 
  \end{align}
  where $\mu_{t}=\frac{\alpha_{t}\left(1-\beta_{1t}\right)}{\xi_{t}}$, 
  $c_{t}=\nabla f\left(x_{t}\right)-\frac{\gamma_{t}}{t^{a}} d_{t-1}$.
\end{myLem}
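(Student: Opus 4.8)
The plan is to reduce $T_2$ to the four error terms on the right of Eq.~\eqref{eqA26} by peeling off the stochastic part of $d_t$ and the discrepancy between $z_t$ and $x_t$. From the update rule $d_t = g_t - \frac{\gamma_t}{t^a} d_{t-1}$ and Assumption A\ref{ass33} ($g_t = \nabla f(x_t) + \zeta_t$) we have $d_t = c_t + \zeta_t$ with $c_t = \nabla f(x_t) - \frac{\gamma_t}{t^a} d_{t-1}$; moreover $\mu_t = \frac{\alpha_t(1-\beta_{1t})}{\xi_t}$ is a deterministic scalar. Writing $\nabla f(z_t) = \nabla f(x_t) + (\nabla f(z_t) - \nabla f(x_t))$ and substituting, I would split $T_2 = A + B + C$, where $A = -\mathbb{E}[\sum_{t=1}^T \mu_t \langle \nabla f(x_t), \hat V_t^{-1/2} c_t\rangle]$ is kept unchanged (it is exactly the last term of Eq.~\eqref{eqA26}), $B = -\mathbb{E}[\sum_{t=1}^T \mu_t \langle \nabla f(x_t), \hat V_t^{-1/2}\zeta_t\rangle]$ is the noise term, and $C = -\mathbb{E}[\sum_{t=1}^T \mu_t \langle \nabla f(z_t) - \nabla f(x_t), \hat V_t^{-1/2} d_t\rangle]$ is the curvature term. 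The remaining work is to bound $B$ and $C$.

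For $B$, the difficulty is that $\hat V_t$ depends on $g_t$, so one cannot take the expectation over $\zeta_t$ directly. I would introduce the past-measurable surrogate $\mu_{t-1}\hat V_{t-1}^{-1/2}$ and write, for $t \ge 2$, $\mu_t \langle \nabla f(x_t), \hat V_t^{-1/2}\zeta_t\rangle = \mu_{t-1}\langle \nabla f(x_t), \hat V_{t-1}^{-1/2}\zeta_t\rangle + \langle \nabla f(x_t), (\mu_t\hat V_t^{-1/2} - \mu_{t-1}\hat V_{t-1}^{-1/2})\zeta_t\rangle$. Since $x_t$ and $\hat V_{t-1}$ are functions of $g_1,\dots,g_{t-1}$ only and $\mathbb{E}[\zeta_t \mid g_1,\dots,g_{t-1}] = 0$ by A\ref{ass33}, the first piece has zero expectation; the correction is bounded coordinate-wise by $2H^2 \sum_{k=1}^d |\mu_t \hat v_{t,k}^{-1/2} - \mu_{t-1}\hat v_{t-1,k}^{-1/2}|$, using $\|\nabla f(x_t)\| \le H$ and $\|\zeta_t\| \le \|g_t\| + \|\nabla f(x_t)\| \le 2H$ from A\ref{ass34}. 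Summing over $t \ge 2$ produces the term $2H^2\mathbb{E}[\sum_{t=2}^T\sum_{k=1}^d|\mu_t \hat v_{t,k}^{-1/2} - \mu_{t-1}\hat v_{t-1,k}^{-1/2}|]$. The $t=1$ term is estimated directly, $|\mu_1\langle \nabla f(x_1), \hat V_1^{-1/2}\zeta_1\rangle| \le 2\mu_1 H^2 \sum_{k=1}^d \hat v_{1,k}^{-1/2}$, which is the term $2\mu_1 H^2\mathbb{E}[\sum_k\hat v_{1,k}^{-1/2}]$.

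For $C$, note $z_1 = x_1$, so only $t\ge 2$ contributes. By Cauchy--Schwarz and Young's inequality, $-\mu_t\langle \nabla f(z_t) - \nabla f(x_t), \hat V_t^{-1/2}d_t\rangle \le \tfrac12\|\nabla f(z_t) - \nabla f(x_t)\|^2 + \tfrac12\|\mu_t\hat V_t^{-1/2}d_t\|^2$; the second summand gives $\tfrac12\mathbb{E}[\sum_{t=2}^T\|\mu_t\hat V_t^{-1/2}d_t\|^2]$. For the first summand I would use A\ref{ass31}, Lemma~\ref{lem1} ($z_t - x_t = \eta_t(x_t - x_{t-1})$), the bound $\eta_t \le \frac{1}{1-\beta_{11}}$ from Lemma~\ref{lem8}, and $x_t - x_{t-1} = -\alpha_{t-1}\hat V_{t-1}^{-1/2}\hat m_{t-1}$, obtaining $\tfrac12\|\nabla f(z_t) - \nabla f(x_t)\|^2 \le \frac{L^2}{2(1-\beta_{11})^2}\|\alpha_{t-1}\hat V_{t-1}^{-1/2}\hat m_{t-1}\|^2$. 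Then I would unroll $m_{t-1} = \sum_{i=1}^{t-1}(\prod_{j=i+1}^{t-1}\beta_{1j})(1-\beta_{1i})d_i$, use $\prod_{j=i+1}^{t-1}\beta_{1j} \le \beta_{11}^{t-1-i}$ together with the monotonicity of the effective stepsize $\alpha_{t-1}\hat v_{t-1,k}^{-1/2} \le \alpha_i\hat v_{i,k}^{-1/2}$ (exactly as in Lemma~\ref{lem3}), and feed the resulting geometrically weighted double sum into Lemma~\ref{lem11} to convert it into $\mathbb{E}[\sum_{t=1}^{T-1}\|\alpha_t\hat V_t^{-1/2}d_t\|^2]$ with constant $\frac{L^2}{(1-\beta_{11})^6}$.

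The main obstacle is this last step: matching the algebraic bookkeeping so that the momentum average of the scaled increments collapses exactly onto $\sum_{t=1}^{T-1}\|\alpha_t\hat V_t^{-1/2}d_t\|^2$. Because Lemma~\ref{lem11} controls only indices $t\ge 2$ on its right-hand side, the earliest index is not absorbed by that estimate and must be handled crudely using $\|d_1\| \le \bar{H}$ from Lemma~\ref{lem10} and $\|\alpha_1\hat V_1^{-1/2}d_1\|^2 \le \alpha_1^2\bar{H}^2\sum_k\hat v_{1,k}^{-1}$; propagating the $(1-\beta_{11})^{-1}$ factors accumulated through Lemmas~\ref{lem8} and~\ref{lem11} gives the residual $\frac{\alpha_1^2 L^2\bar{H}^2}{(1-\beta_{11})^8}\mathbb{E}[\sum_k\hat v_{1,k}^{-1}]$. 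Adding the bounds for $A$, $B$ and $C$ yields Eq.~\eqref{eqA26}.
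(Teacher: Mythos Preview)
Your decomposition $T_2=A+B+C$ and your treatment of $B$ (replace $\mu_t\hat V_t^{-1/2}$ by the past-measurable surrogate $\mu_{t-1}\hat V_{t-1}^{-1/2}$, kill the martingale part, bound the correction coordinate-wise, treat $t=1$ separately) are exactly what the paper does; the paper simply packages $A+B$ as $T_{21}$ and $C$ as $T_{22}$. The Young-inequality split of $C$ that produces $\tfrac12\sum_{t\ge2}\|\mu_t\hat V_t^{-1/2}d_t\|^2$ is also identical.

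The gap is in how you extract the two ``curvature'' terms $\frac{L^2}{(1-\beta_{11})^6}\mathbb{E}\bigl[\sum_{t=1}^{T-1}\|\alpha_t\hat V_t^{-1/2}d_t\|^2\bigr]$ and $\frac{\alpha_1^2L^2\bar H^2}{(1-\beta_{11})^8}\mathbb{E}\bigl[\sum_k\hat v_{1,k}^{-1}\bigr]$ from $\tfrac12\sum\|\nabla f(z_t)-\nabla f(x_t)\|^2$. You claim Lemma~\ref{lem11} delivers the first term and that the second is a leftover from the index $t=1$ which that lemma misses. Neither is right. Lemma~\ref{lem11} is tailored to sums of the form $b_t=\sum_i\beta_{11}^{t-i}\sum_{l=i+1}^t a_l$, a geometric weight times a \emph{telescoping} inner sum; after unrolling $m_{t-1}$ you have a single geometric sum $\sum_i w_i\,\alpha_{t-1}\hat V_{t-1}^{-1/2}d_i$ with no telescope, so Lemma~\ref{lem11} simply does not apply to it.

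What the paper actually does is add and subtract $\alpha_i\hat V_i^{-1/2}d_i$ inside the unrolled momentum and use $\|a\|^2\le 2\|b\|^2+2\|a-b\|^2$ to split into $T_{71}$ (the piece carrying $\alpha_i\hat V_i^{-1/2}d_i$) and $T_{72}$ (the difference $(\alpha_{t-1}\hat V_{t-1}^{-1/2}-\alpha_i\hat V_i^{-1/2})d_i$). The main term comes from $T_{71}$ via $ab\le\tfrac12(a^2+b^2)$, symmetry in the double index, and two geometric-series bounds---Lemma~\ref{lem11} is \emph{not} used there. The residual comes from $T_{72}$: after $\|d_i\|\le\bar H$ one writes $|\alpha_{t-1}\hat v_{t-1,k}^{-1/2}-\alpha_i\hat v_{i,k}^{-1/2}|\le\sum_{l=i+1}^{t-1}|\alpha_l\hat v_{l,k}^{-1/2}-\alpha_{l-1}\hat v_{l-1,k}^{-1/2}|$, which is precisely the structure of Lemma~\ref{lem11}; a final telescope in $t$ then collapses the result to $\alpha_1^2\sum_k\hat v_{1,k}^{-1}$. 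So Lemma~\ref{lem11} is the engine for the \emph{residual}, not for the main term, and the residual has nothing to do with a missing $t=1$ index.

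As a side remark, your ``direct monotonicity'' idea for the curvature part is actually a cleaner alternative to the paper's add-and-subtract: working coordinate-wise with Cauchy--Schwarz on the convex combination and then $(\alpha_{t-1}\hat v_{t-1,k}^{-1/2})^2\le(\alpha_i\hat v_{i,k}^{-1/2})^2$ yields $\tfrac{L^2}{2}T_7\le\frac{L^2}{2(1-\beta_{11})^6}\sum_{t=1}^{T-1}\|\alpha_t\hat V_t^{-1/2}d_t\|^2$ with \emph{no} residual at all, which already implies Eq.~\eqref{eqA26}. If you take that route, drop the appeal to Lemma~\ref{lem11} and the ``earliest index'' paragraph entirely.
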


\begin{proof}
  Let 
  \begin{equation}\label{eqA27}
      T_{21}=-\mathbb{E}\left[\sum_{t=1}^{T}\left\langle\nabla f\left(x_{t}\right), \frac{\alpha_{t}\left(1-\beta_{1t}\right)}{\xi_{t}} \hat{V}_{t}^{-\frac{1}{2}} d_{t}\right\rangle\right],
  \end{equation}
  \begin{equation}\label{eqA28}
      T_{22}=-\mathbb{E}\left[\sum_{t=1}^{T}\left\langle\nabla f\left(z_{t}\right)-\nabla f\left(x_{t}\right), \frac{\alpha_{t}\left(1-\beta_{1t}\right)}{\xi_{t}} \hat{V}_{t}^{-\frac{1}{2}} d_{t}\right\rangle\right].
  \end{equation}
  Then $T_{2}$ can be written as 
  \begin{equation}\label{eqA29}
      T_{2}=T_{21}+T_{22}.
  \end{equation}

  The following is firstly to bound the term $T_{22}$(see Eq.\eqref{eqA28}). Since $z_{1}=x_{1}$, 
  $z_{t}-x_{t}=\eta_{t}\left(x_{t}-x_{t-1}\right)=-\eta_{t} \alpha_{t-1} \hat{V}_{t-1}^{-\frac{1}{2}} \hat{m}_{t-1}$
  and the triangular inequality, we get 
  \begin{equation}\label{eqA30}
      \begin{aligned}
          T_{22} 
          & \leqslant \frac{1}{2} \mathbb{E}\left[\sum_{t=2}^{T}\Bigg(\left\|\nabla f\left(z_{t}\right)-\nabla f\left(x_{t}\right)\right\|^{2}
          +\left\|\frac{\alpha_{t}\left(1-\beta_{1t}\right)}{\xi_{t}} \hat{V}_{t}^{-\frac{1}{2}} d_{t}\right\|^2\Bigg)\right] \\
          & \leqslant \frac{1}{2} \mathbb{E}\left[\sum_{t=2}^{T} L^{2}\left\|z_{t}-x_{t}\right\|^{2}\right]
          +\frac{1}{2} \mathbb{E}\left[\sum_{t=2}^{T}\left\|\frac{\alpha_{t}\left(1-\beta_{1 t}\right)}{\xi_{t}} \hat{V}_{t}^{-\frac{1}{2}} d_{t}\right\|^{2}\right] \\
          & =\frac{L^{2}}{2} \mathbb{E}\left[\sum_{t=2}^{T}\left\|\eta_{t} \alpha_{t-1} \hat{V}_{t-1}^{-\frac{1}{2}} \hat{m}_{t-1}\right\|^{2}\right]
          +\frac{1}{2} \mathbb{E}\left[\sum_{t=2}^{T}\left \|\frac{\alpha_{t}\left(1-\beta_{1t}\right)}{\xi_{t}} \hat{V}_{t}^{-\frac{1}{2}} d_{t}\right \|^{2}\right].
      \end{aligned}
  \end{equation}
  The second inequality sign is due to the assumption \ref{ass31}. We now define 
  \begin{dmath*} T_{7}=\mathbb{E}\left[\sum_{t=2}^{T}\left\|\eta_{t} \alpha_{t-1} \hat{V}_{t-1}^{-\frac{1}{2}} \hat{m}_{t-1}\right\|^{2}\right] \end{dmath*}
  and bound it as follows:
  \begin{align}
    T_{7} 
    & \leqslant \frac{1}{\left(1-\beta_{11}\right)^{2}} \mathbb{E}\left[\sum_{t=2}^{T}\left\|\alpha_{t-1} \hat{V}_{t-1}^{-\frac{1}{2}} \hat{m}_{t-1}\right\|^{2}\right] \notag \\
    & =\frac{1}{\left(1-\beta_{11}\right)^{2}} \mathbb{E}\left[\sum_{t=2}^{T}\Bigg\|\alpha_{t-1} \hat{V}_{t-1}^{-\frac{1}{2}}
    \frac{\left(\beta_{1(t-1)} m_{t-2}+\left(1-\beta_{1(t-1)}\right) d_{t-1}\right)}{1-\beta_{11}^{t-1}}\Bigg\|^{2}\right] \label{eqA31} \\
    & \leqslant \frac{1}{\left(1-\beta_{11}\right)^{4}} \mathbb{E}\left[\sum_{t=2}^{T}\Bigg\|\alpha_{t-1} \hat{V}_{t-1}^{-\frac{1}{2}}
    \left(\sum_{i=1}^{t-1}\left(\prod_{j=i+1}^{t-1} \beta_{1 j}\right)\left(1-\beta_{1 i}\right) d_{i}\right)\Bigg\|^{2}\right] \notag \\
    & \leqslant \frac{2}{\left(1-\beta_{11}\right)^{4}}(T_{71}+T_{72}) , \notag
  \end{align}
 where 
 \begin{align}
      &T_{71}=\mathbb{E}\left[\sum_{t=2}^{T}\left\|\sum_{i=1}^{t-1}\left(\prod_{j=i+1}^{t-1} \beta_{1 j}\right)
      \left(1-\beta_{1 i}\right)
      \alpha_{i} \hat{V}_{i}^{-\frac{1}{2}} d_{i}\right\|^{2}\right], \label{eqA32} \\ 
      T_{72}
      &=\mathbb{E}\left[\sum_{t=2}^{T} \left\|\sum_{i=1}^{t-1}\left(\prod_{j=i+1}^{t-1} \beta_{1 j}\right)\left(1-\beta_{1i}\right)
      \left(\alpha_{t-1} \hat{V}_{t-1}^{-\frac{1}{2}}-\alpha_{i} \hat{V}_{i}^{-\frac{1}{2}}\right) d_{i} \right\|^{2}\right]. \label{eqA33}
  \end{align}
  The first inequality sign is because of the lemma \ref{lem8} and the last is due to 
  $a^{2}=(b+a-b)^{2} \leqslant 2b^{2}+2(a-b)^{2}$.
  
  The following is next to bound the term $T_{71}$(see Eq.\eqref{eqA32}).
  \begin{align}
    T_{71}
    & =\mathbb{E}\left[\sum_{t=2}^{T} \sum_{k=1}^{d}\left(\sum_{i=1}^{t-1}\left(\prod_{j=i+1}^{t-1} \beta_{1 j}\right)\left(1-\beta_{1i}\right) \alpha_{i} \hat{v}_{i,k}^{-\frac{1}{2}} d_{i, k}\right) \right. \notag \\
    & \hspace{3cm} \cdot \left.  \left(\sum_{l=1}^{t-1}\left(\prod_{p=l+1}^{t-1} \beta_{1 p}\right)\left(1-\beta_{1l}\right) \alpha_{l} \hat{v}_{l, k}^{-\frac{1}{2}} d_{l, k}\right)\right] \notag \\
    & =\mathbb{E}\left[\sum_{t=2}^{T} \sum_{k=1}^{d} \sum_{i=1}^{t-1} \sum_{l=1}^{t-1}\left(\prod_{j=i+1}^{t-1} \beta_{1 j}\right)\left(1-\beta_{1i}\right)\left(\alpha_{i} \hat{v}_{i, k}^{-\frac{1}{2}} d_{i, k}\right) \right. \notag \\
    & \hspace{3cm} \cdot \left. \left(\prod_{p=l+1}^{t-1} \beta_{1p}\right)\left(1-\beta_{1l}\right)\left(\alpha_{l} \hat{v}_{l, k}^{-\frac{1}{2}} d_{l, k}\right)\right] \notag \\
    & \leqslant \mathbb{E}\left[\sum_{t=2}^{T} \sum_{k=1}^{d} \sum_{i=1}^{t-1} \sum_{l=1}^{t-1}\left(\prod_{j=i+1}^{t-1} \beta_{1 j}\right)\left(1-\beta_{1i}\right)\left(\prod_{p=l+1}^{t-1} \beta_{1 p}\right)  \right. \notag \\ 
    & \hspace{3cm} \cdot \left. \left(1-\beta_{1l}\right) \frac{1}{2}\left(\left(\alpha_{i} \hat{v}_{i, k}^{-\frac{1}{2}} d_{i, k}\right)^{2}+\left(\alpha_{l} \hat{v}_{l, k}^{-\frac{1}{2}} d_{l, k}\right)^{2}\right)\right] \label{eqA34} \\
    & \leqslant \mathbb{E}\left[\sum_{t=2}^{T} \sum_{k=1}^{d} \sum_{i=1}^{t-1} \beta_{11}^{t-i-1}\left(\alpha_{i} \hat{v}_{i, k}^{-\frac{1}{2}} d_{i, k}\right)^{2} \sum_{l=1}^{t-1} \beta_{11}^{t-l-1}\right] \notag \\
    & \leqslant \frac{1}{1-\beta_{11}} \mathbb{E}\left[\sum_{t=2}^{T} \sum_{k=1}^{d} \sum_{i=1}^{t-1} \beta_{11}^{t-i-1}\left(\alpha_{i} \hat{v}_{i, k}^{-\frac{1}{2}} d_{i, k}\right)^{2}\right] \notag \\
    & =\frac{1}{1-\beta_{11}} \mathbb{E}\left[\sum_{i=1}^{T-1} \sum_{k=1}^{d} \sum_{t=i+1}^{T} \beta_{11}^{t-i-1}\left(\alpha_{i} \hat{v}_{i, k}^{-\frac{1}{2}} d_{i, k}\right)^{2}\right] \notag \\
    & \leqslant \frac{1}{\left(1-\beta_{11}\right)^{2}} \mathbb{E}\left[\sum_{i=1}^{T-1} \sum_{k=1}^{d}\left(\alpha_{i} \hat{v}_{i, k}^{-\frac{1}{2}} d_{i, k}\right)^{2}\right] \notag \\
    & =\frac{1}{\left(1-\beta_{11}\right)^{2}} \mathbb{E}\left[\sum_{t=1}^{T-1}\left\|\alpha_{t} \hat{V}_{t}^{-\frac{1}{2}} d_{t}\right\|^{2}\right]. \notag 
  \end{align}

  The first, second, third inequality sign are because of 
  $ab \leqslant \frac{1}{2}(a^{2}+b^{2})$; 
  $\forall t \in \mathcal{T}$, $\beta_{1t} \leqslant \beta_{11}$;
  $\sum_{l=1}^{t-1} \beta_{11}^{t-l-1} \leqslant \frac{1}{1-\beta_{11}}$, respectively.
  The third, fourth equal sign are due to the symmetry of $i$ and $l$ in the summation;
  exchanging order of summation, respectively.

  The next is to bound the term $T_{72}$(see Eq.\eqref{eqA33}).
  \begin{align}
    &T_{72}
    =\mathbb{E}\left[\sum_{t=2}^{T}\Bigg\|\sum_{i=1}^{t-1}\left(\prod_{j=i+1}^{t-1} \beta_{1 j}\right)\left(1-\beta_{1i}\right)
    \left(\alpha_{t-1} \hat{V}_{t-1}^{-\frac{1}{2}}-\alpha_{i} \hat{V}_{i}^{-\frac{1}{2}}\right) d_{i}\Bigg\|^{2}\right] \notag \\
    & \leqslant \bar{H}^{2} \mathbb{E}\left[\sum_{t=2}^{T} \sum_{k=1}^{d}\Bigg(\sum_{i=1}^{t-1}\left(\prod_{j=i+1}^{t-1} \beta_{1j}\right)\left(1-\beta_{1i}\right)
    \left|\alpha_{t-1} \hat{v}_{t-1, k}^{-\frac{1}{2}}-\alpha_{i} \hat{v}_{i, k}^{-\frac{1}{2}}\right| \Bigg)^{2}\right] \label{eqA35} \\
    & \leqslant \bar{H}^{2} \mathbb{E}\left[\sum_{t=1}^{T-1} \sum_{k=1}^{d}\left(\sum_{i=1}^{t}  \beta^{t-i}_{11} \left|\alpha_{t} \hat{v}_{t, k}^{-\frac{1}{2}}-\alpha_{i} \hat{v}_{i, k}^{-\frac{1}{2}}\right| \right)^{2}\right] \notag \\
    & \leqslant \bar{H}^{2} \mathbb{E}\left[\sum_{t=1}^{T-1} \sum_{k=1}^{d}\left(\sum_{i=1}^{t}  \beta^{t-i}_{11}
    \sum_{l=i+1}^{t} \left| \alpha_{l} \hat{v}_{l, k}^{-\frac{1}{2}}-\alpha_{l-1} \hat{v}_{l-1, k}^{-\frac{1}{2}} \right| \right)^{2}\right]. \notag
  \end{align}
  The first and the last inequality signs are both because of triangular inequality. 
  By defining $a_{l}=\left|\alpha_{l} \hat{v}_{l, k}^{-\frac{1}{2}}-\alpha_{l-1} \hat{v}_{l-1,k}^{-\frac{1}{2}}\right|$ 
  and using the lemma \ref{lem11}, we have
  \begin{equation}\label{eqA36}
      \begin{aligned}
          T_{72} 
          & \leqslant  \frac{\bar{H}^{2}}{\left(1-\beta_{11}\right)^{4}} \mathbb{E}\left[\sum_{t=2}^{T-1} \sum_{k=1}^{d}\left(\alpha_{t} \hat{v}_{t, k}^{-\frac{1}{2}}-\alpha_{t-1} \hat{v}_{t-1, k}^{-\frac{1}{2}}\right)^{2}\right] \\
          & \leqslant \frac{\bar{H}^{2}}{\left(1-\beta_{11}\right)^{4}} \mathbb{E}\left[\sum_{t=2}^{T-1} \sum_{k=1}^{d}\left(\alpha_{t-1}^{2} \hat{v}_{t-1, k}^{-1}-\alpha_{t}^{2} \hat{v}_{t, k}^{-1}\right)\right] \\
          & =\frac{\bar{H}^{2}}{\left(1-\beta_{11}\right)^{4}} \mathbb{E}\left[\sum_{k=1}^{d}\left(\alpha_{1}^{2} \hat{v}_{1, k}^{-1}-\alpha_{T}^{2} \hat{v}_{T, k}^{-1}\right)\right]
          \leqslant \frac{\alpha_{1}^{2} \bar{H}^{2}}{\left(1-\beta_{11}\right)^{4}} \mathbb{E}\left[\sum_{k=1}^{d} \hat{v}_{1, k}^{-1}\right].
      \end{aligned}
  \end{equation}

  Combining Eq.\eqref{eqA30}-Eq.\eqref{eqA36}, the term $T_{22}$(see Eq.\eqref{eqA28}) can be bounded as follows:
  \begin{equation}\label{eqA37}
      \begin{aligned}
          T_{22}
          & \leqslant \frac{L^{2}}{2} T_{7} + \frac{1}{2} \mathbb{E}\left[\sum_{t=2}^{T} \left\|\frac{\alpha_{t}\left(1-\beta_{1t}\right)}{\xi_{t}} \hat{V}_{t}^{-\frac{1}{2}} d_{t} \right\|^{2}\right] \\
          & \leqslant \frac{L^{2}}{\left(1-\beta_{11}\right)^{4}}(T_{71}+T_{72})
          +\frac{1}{2} \mathbb{E}\left[\sum_{t=2}^{T} \left\|\frac{\alpha_{t}\left(1-\beta_{1t}\right)}{\xi_{t}} \hat{V}_{t}^{-\frac{1}{2}} d_{t} \right\|^{2}\right] \\
          & \leqslant \frac{1}{2} \mathbb{E}\left[\sum_{t=2}^{T} \left\|\frac{\alpha_{t}\left(1-\beta_{1t}\right)}{\xi_{t}} \hat{V}_{t}^{-\frac{1}{2}} d_{t} \right\|^{2}\right]
          +\frac{L^{2}}{\left(1-\beta_{11}\right)^{6}} \mathbb{E}\left[\sum_{t=1}^{T-1}\left\|\alpha_{t} \hat{V}_{t}^{-\frac{1}{2}} a_{t}\right\|^{2}\right] \\
          & \quad\, +\frac{\alpha_{1}^{2} L^{2} \bar{H}^{2}}{\left(1-\beta_{11}\right)^{8}} \mathbb{E}\left[\sum_{k=1}^{d} v_{1, k}^{-1}\right].
      \end{aligned}
  \end{equation}
  
  It is now bounding the term $T_{21}$(see Eq.\eqref{eqA27}). From the assumption \ref{ass33}, we can get 
  $d_{t}=g_{t}-\frac{\gamma_{t}}{t^{a}} d_{t-1}=\nabla f\left(x_{t}\right)+\zeta_{t}-\frac{\gamma_{t}}{t^{a}} d_{t-1}=\nabla f\left(x_{t}\right)-\frac{\gamma_{t}}{t^{a}} d_{t-1}+\zeta_{t}=c_{t}+\zeta_{t}$,
  in which $c_{t}=\nabla f\left(x_{t}\right)-\frac{\gamma_{t}}{t^{a}} d_{t-1}$, 
  $\mathbb{E}[\zeta_{t}]=0$. Then the following holds:
  \begin{equation}\label{eqA38}
      \begin{aligned}
          T_{21} 
          & = -\mathbb{E}\left[\sum_{t=1}^{T} \frac{\alpha_{t}\left(1-\beta_{1t}\right)}{\xi_{t}}\left\langle\nabla f\left(x_{t}\right), \hat{V}_{t}^{-\frac{1}{2}} \left( c_{t} + \zeta_{t} \right)\right\rangle\right] \\
          & = -\mathbb{E}\left[\sum_{t=1}^{T} \frac{\alpha_{t}\left(1-\beta_{1t}\right)}{\xi_{t}}\left\langle\nabla f\left(x_{t}\right), \hat{V}_{t}^{-\frac{1}{2}} c_{t}\right\rangle\right] \\
          & \quad\, -\mathbb{E}\left[\sum_{t=1}^{T} \frac{\alpha_{t}\left(1-\beta_{1t}\right)}{\xi_{t}}\left\langle\nabla f\left(x_{t}\right), \hat{V}_{t}^{-\frac{1}{2}} \zeta_{t}\right\rangle\right].
      \end{aligned}
  \end{equation}
  Let $\mu_{t}=\frac{\alpha_{t}\left(1-\beta_{1t}\right)}{\xi_{t}}$, then
  \begin{equation}\label{eqA39}
      \begin{aligned}
          & -\mathbb{E}\left[\sum_{t=1}^{T} \frac{\alpha_{t}\left(1-\beta_{1t}\right)}{\xi_{t}}\left\langle\nabla f\left(x_{t}\right), \hat{V}_{t}^{-\frac{1}{2}} \zeta_{t}\right\rangle\right] \\
          & = -\mathbb{E}\left[\sum_{t=2}^{T}\left\langle\nabla f\left(x_{t}\right),\left(\mu_{t} \hat{V}_{t}^{-\frac{1}{2}}-\mu_{t-1} \hat{V}_{t-1}^{-\frac{1}{2}}\right) \zeta_{t}\right\rangle\right] \\
          & \quad\, -\mathbb{E}\left[\sum_{t=2}^{T} \mu_{t-1}\left\langle\nabla f\left(x_{t}\right), \hat{V}_{t-1}^{-\frac{1}{2}} \zeta_{t}\right\rangle\right]
          -\mathbb{E} \left[ \mu_{1} \left \langle \nabla f(x_{t}), \hat{V}^{-\frac{1}{2}}_{1} \zeta_{1}\right \rangle  \right] \\
          & \leqslant-\mathbb{E}\left[\sum_{t=2}^{T}\left\langle\nabla f\left(x_{t}\right),\left(\mu_{t} \hat{V}_{t}^{-\frac{1}{2}}-\mu_{t-1} \hat{V}_{t-1}^{-\frac{1}{2}}\right) \zeta_{t}\right\rangle\right]
          +2 \mu_{1} H^{2} \mathbb{E}\left[\sum_{k=1}^{d} \hat{v}^{-\frac{1}{2}}_{1,k}\right].
      \end{aligned}
  \end{equation}
  The last inequality sign is due to $\forall t \in \mathcal{T} \backslash \{1\}$, 
  $\mathbb{E}\left[\hat{V}_{t-1}^{-\frac{1}{2}} \zeta_{t} \mid x_{t}, \hat{V}_{t-1}\right]=0$,
  and the fact that 
  $\left\|\zeta_{t}\right\|-\left\|\nabla f\left(x_{t}\right)\right\| \leqslant \left\|\nabla f\left(x_{t}\right)+\zeta_{t}\right\|=\left\|g_{t}\right\| \leqslant H$,
  $\left\|\zeta_{t}\right\| \leqslant H+\left\|\nabla f\left(x_{t}\right)\right\| \leqslant 2 H$.
  Further more,
  \begin{equation}\label{eqA40}
      \begin{aligned}
          & -\mathbb{E}\left[\sum_{t=2}^{T}\left\langle\nabla f\left(x_{t}\right),\left(\mu_{t} \hat{V}_{t}^{-\frac{1}{2}}-\mu_{t-1} \hat{V}_{t-1}^{-\frac{1}{2}}\right) \zeta_{t}\right\rangle\right] \\
          & \leqslant \mathbb{E}\left[\sum_{t=2}^{T} \sum_{k=1}^{d}\left|\left(\nabla f\left(x_{t}\right)\right)_{k}\right| \cdot\left|\mu_{t} \hat{v}_{t, k}^{-\frac{1}{2}}-\mu_{t-1} \hat{v}_{t-1, k}^{-\frac{1}{2}}\right| \cdot\left|\zeta_{t, k}\right|\right] \\
          & \leqslant 2 H^{2} \mathbb{E}\left[\sum_{t=2}^{T} \sum_{k=1}^{d}\left|\mu_{t} \hat{v}_{t,k}^{-\frac{1}{2}}-\mu_{t-1} \hat{v}_{t-1, k}^{-\frac{1}{2}}\right|\right].
      \end{aligned}
  \end{equation}

  Therefore, combining Eq.\eqref{eqA39} and Eq.\eqref{eqA40}, term $T_{21}$(see Eq.\eqref{eqA38}) can be bounded as 
  follows:
  \begin{equation}\label{eqA41}
      \begin{aligned}
          T_{21}
          & \leqslant -\mathbb{E}\left[\sum_{t=1}^{T} \frac{\alpha_{t}\left(1-\beta_{1t}\right)}{\xi_{t}}\left\langle\nabla f\left(x_{t}\right), \hat{V}_{t}^{-\frac{1}{2}} c_{t}\right\rangle\right] \\
          & \quad\, +2 H^{2} \mathbb{E}\left[\sum_{t=2}^{T} \sum_{k=1}^{d}\left|\mu_{t} \hat{v}_{t,k}^{-\frac{1}{2}}-\mu_{t-1} \hat{v}_{t-1, k}^{-\frac{1}{2}}\right|\right]
          +2 \mu_{1} H^{2} \mathbb{E}\left[\sum_{k=1}^{d} \hat{v}^{-\frac{1}{2}}_{1,k}\right].
      \end{aligned}
  \end{equation}

  Finally, the results of Eq.\eqref{eqA29}, Eq.\eqref{eqA37} and Eq.\eqref{eqA41} ensure Eq.\eqref{eqA26}.
  
  The proof is over.
\end{proof}
\newpage
\section{Proof of Theorem \ref{th3.1}}
\label{prth31} 
\begin{proof}
    By the lemma \ref{lem2}, it is obvious that
    \begin{equation}\label{eqB1}
        \begin{aligned}
            \mathbb{E}\left[f\left(z_{t+1}\right)-f\left(z_{1}\right)\right] \leqslant \sum_{i=1}^{6} T_{i}.
        \end{aligned}
    \end{equation}

    From the lemma \ref{lem3}-\ref{lem7}, Eq.\eqref{eqB1} can be further bounded as follows:
    \begin{equation}\label{eqB2}
        \begin{aligned}
            & \mathbb{E}\left[f\left(z_{t+1}\right)-f\left(z_{1}\right)\right] \\
            & \leqslant \left[\frac{\alpha_{1} H \bar{M}}{1-\beta_{11}}+2 \mu_{1} H^{2}\right] E\left[\sum_{i=1}^{d} \hat{v}_{1, i}^{-\frac{1}{2}}\right] \\
            & \quad\,  +\left[\frac{H^{2}+G^{2}}{2}+\frac{3 L G^{2}}{1-\beta_{11}}\right]\left|\eta_{T}-\eta_{1}\right|\\
            & \quad\, + \left[\frac{3 \alpha_{1}^{2} L \bar{M}^{2}}{2\left(1-\beta_{11}\right)^{2}}+\frac{\alpha_{1}^{2} L^{2} \bar{H}^{2}}{\left(1-\beta_{11}\right)^{8}}\right] \mathbb{E}\left[\sum_{i=1}^{d} \hat{v}_{1, i}^{-1}\right]\\
            & \quad\, +\frac{L^{2}}{\left(1-\beta_{11}\right)^{6}} \mathbb{E}\left[\sum_{t=1}^{T-1}\left\|\alpha_{t} \hat{V}_{t}^{-\frac{1}{2}} d_{t}\right\|^{2}\right] \\
            & \quad\, 
             + \frac{1}{2} \mathbb{E}\left[\sum_{t=2}^{T}\left\|\frac{\alpha_{t}\left(1-\beta_{1 t}\right)}{\xi_{t}} \hat{V}_{t}^{-\frac{1}{2}} d_{t}\right\|^{2}\right] \\
            & \quad\, +2 H^{2} \mathbb{E}\left[\sum_{t=2}^{T} \sum_{i=1}^{d}\left|\mu_{t} \hat{v}_{t, i}^{-\frac{1}{2}}-\mu_{t-1} \hat{v}_{t-1, i}^{-\frac{1}{2}}\right|\right]  \\
            & \quad\,-E\left[\sum_{t=1}^{T} \frac{\alpha_{t}\left(1-\beta_{1t}\right)}{\xi_{t}}\left\langle\nabla f\left(x_{t}\right), \hat{V}_{t}^{-\frac{1}{2}} c_{t}\right\rangle\right].
        \end{aligned}
    \end{equation}
    
    Rearranging Eq.\eqref{eqB2} and uniting like terms, we can get 
    \begin{align}
        & \mathbb{E}\left[\sum_{t=1}^{T} \frac{\alpha_{t}\left(1-\beta_{1t}\right)}{\xi_{t}}\left\langle\nabla f\left(x_{t}\right), \hat{V}_{t}^{-\frac{1}{2}} c_{t}\right\rangle\right] \notag \\
        & \leqslant \left(\frac{\alpha_{1} H \bar{M}}{1-\beta_{11}} +2 \mu_{1} H^{2} \right) \mathbb{E}\left[\sum_{i=1}^{d} \hat{v}_{1, i}^{-\frac{1}{2}}\right] \notag \\
        & \quad\, +\left( \frac{H^{2}+G^{2}}{2} + \frac{3 L G^{2}}{1-\beta_{11}} \right) \frac{2}{1-\beta_{11}} \notag \\
        & \quad\,  + \left[\frac{3 \alpha_{1}^{2} L \bar{M}^{2}}{2\left(1-\beta_{11}\right)^{2}} + \frac{\alpha_{1}^{2} L^{2} \bar{H}^{2}}{\left(1-\beta_{11}\right)^{8}} \right] \mathbb{E}\left[\sum_{i=1}^{d} \hat{v}_{1, i}^{-1}\right] \label{eqB3} \\
        & \quad\, + \left[\frac{L^{2}}{\left(1-\beta_{11}\right)^{6}} + \frac{1}{2(1-\beta_{11})^{4}} \right] \mathbb{E}\left[\sum_{t=1}^{T-1}\left\|\alpha_{t} \hat{V}_{t}^{-\frac{1}{2}} d_{t}\right\|^{2}\right] \notag \\
        & \quad\, + 2 H^{2} \mathbb{E}\left[\sum_{t=2}^{T} \sum_{i=1}^{d}\left|\mu_{t} \hat{v}_{t, i}^{-\frac{1}{2}}-\mu_{t-1} \hat{v}_{t-1, i}^{-\frac{1}{2}}\right|\right] 
        + \mathbb{E}\left[f\left(z_{1}\right)-f\left(z^{*}\right)\right] , \notag 
    \end{align}
    where $z^{*}=\underset{z \in \mathcal{X}}{\arg \min } f(z)$.

    Let 
    \begin{equation*}
        \begin{aligned}
            C^{'}_{1}
            & =2 H^{2}, \ 
            C^{'}_{2}=\frac{L^{2}}{\left(1-\beta_{11}\right)^{6}}+\frac{1}{2\left(1-\beta_{11}\right)^{4}}, \\
            C^{'}_{3}
            & =\mathbb{E}\left[f\left(z_{1}\right)-f\left(z^{*}\right)\right]\\
            & \quad\, +\left(\frac{\alpha_{1} H \bar{M}}{1-\beta_{11}} + 2 \mu_{1} H^{2}\right) \mathbb{E}\left[\sum_{i=1}^{d} \hat{v}_{1,i}^{-\frac{1}{2}}\right] \\
            & \quad\, +\left(\frac{H^{2}+G^{2}}{2}+\frac{3 L G}{1-\beta_{11}}\right) \frac{2}{1-\beta_{11}} \\
            & \quad\, +\left[\frac{3 \alpha^{2}_{1} L \bar{M}}{2\left(1-\beta_{11}\right)^{2}}+\frac{\alpha_{1}^{2} L^{2} \bar{H}^{2}}{\left(1-\beta_{11}\right)^{8}}\right] \mathbb{E}\left[\sum_{i=1}^{d} \hat{v}_{1, i}^{-1}\right].\\
        \end{aligned}
    \end{equation*}
    Hence the following holds:
    \begin{equation*}
        \begin{aligned}
            & \mathbb{E}\left[\sum_{t=1}^{T} \frac{\alpha_{t}\left(1-\beta_{1t}\right)}{\xi_{t}}\left\langle\nabla f\left(x_{t}\right), \hat{V}_{t}^{-\frac{1}{2}} c_{t}\right\rangle\right] \\
            & \leqslant C^{'}_{1} \mathbb{E}\left[\sum_{t=2}^{T} \sum_{k=1}^{d}\left|\mu_{t} \hat{v}_{t, k}^{-\frac{1}{2}}-\mu_{t-1} \hat{v}_{t-1,k}^{-\frac{1}{2}}\right|\right]
            + C^{'}_{2} \mathbb{E}\left[\sum_{t=1}^{T-1}\left\|\alpha_{t} \hat{V}_{t}^{-\frac{1}{2}} d_{t}\right\|^{2}\right]+C^{'}_{3}
        \end{aligned}
    \end{equation*}
    and 
    \begin{equation}\label{eqB4}
        \begin{aligned}
            & \mathbb{E}\left[\sum_{t=1}^{T} \alpha_{t} \left\langle\nabla f\left(x_{t}\right), \hat{V}_{t}^{-\frac{1}{2}} c_{t}\right\rangle\right] \\
            & \leqslant \frac{\xi_{t} C^{'}_{1}}{1-\beta_{1t}} \mathbb{E}\left[\sum_{t=2}^{T} \sum_{k=1}^{d}\left|\mu_{t} \hat{v}_{t, k}^{-\frac{1}{2}}-\mu_{t-1} \hat{v}_{t-1,k}^{-\frac{1}{2}}\right|\right]
            + \frac{\xi_{t} C^{'}_{2}}{1-\beta_{1t}} \mathbb{E}\left[\sum_{t=1}^{T-1}\left\|\alpha_{t} \hat{V}_{t}^{-\frac{1}{2}} d_{t}\right\|^{2}\right]+ \frac{\xi_{t} C^{'}_{3}}{1-\beta_{1t}} \\
            & \leqslant \frac{C^{'}_{1}}{1-\beta_{11}} \mathbb{E}\left[\sum_{t=2}^{T} \sum_{k=1}^{d}\left|\mu_{t} \hat{v}_{t, k}^{-\frac{1}{2}}-\mu_{t-1} \hat{v}_{t-1,k}^{-\frac{1}{2}}\right|\right]
            + \frac{C^{'}_{2}}{1-\beta_{11}} \mathbb{E}\left[\sum_{t=1}^{T-1}\left\|\alpha_{t} \hat{V}_{t}^{-\frac{1}{2}} d_{t}\right\|^{2}\right] + \frac{C^{'}_{3}}{1-\beta_{11}}.
        \end{aligned}
    \end{equation}
    The last inequality sign is due to the lemma \ref{lem8} and $1-\beta_{1t} \geqslant 1-\beta_{11}$.

    Since $c_{t}=\nabla f\left(x_{t}\right)-\frac{\gamma_{t}}{t^{a}} d_{t-1}$,
    then there is
    \begin{equation}\label{eqB5}
        \begin{aligned}
            \mathbb{E}\left[\sum_{t=1}^{T} \alpha_{t} \left\langle\nabla f\left(x_{t}\right), \hat{V}_{t}^{-\frac{1}{2}} c_{t}\right\rangle\right]
            & = \mathbb{E}\left[\sum_{t=1}^{T}\left\langle\nabla f\left(x_{t}\right), \alpha_{t} \hat{V}_{t}^{-\frac{1}{2}} \nabla f\left(x_{t}\right)\right\rangle\right] \\
            & \quad\, -\mathbb{E}\left[\sum_{t=1}^{T}\left\langle\nabla f\left(x_{t}\right), \frac{\alpha_{t} \gamma_{t}}{t^{a}} \hat{V}_{t}^{-\frac{1}{2}} d_{t-1}\right\rangle\right].
        \end{aligned}
    \end{equation}

    Let 
    \begin{equation}\label{eqB6}
        \begin{aligned}
            R_{1} 
            & =\frac{C^{'}_{1}}{1-\beta_{11}} \mathbb{E}\left[\sum_{t=2}^{T} \sum_{k=1}^{d}\left|\mu_{t} \hat{v}_{t, k}^{-\frac{1}{2}}-\mu_{t-1} \hat{v}_{t-1,k}^{-\frac{1}{2}}\right|\right] \\
            & \quad\, + \frac{C^{'}_{2}}{1-\beta_{11}} \mathbb{E}\left[\sum_{t=1}^{T-1}\left\|\alpha_{t} \hat{V}_{t}^{-\frac{1}{2}} d_{t}\right\|^{2}\right]+ \frac{C^{'}_{3}}{1-\beta_{11}}. 
        \end{aligned}
    \end{equation}
    
    Combining Eq.\eqref{eqB4}, Eq.\eqref{eqB5} and Eq.\eqref{eqB6} comes to
    \begin{equation}\label{eqB7}
        \begin{aligned}
            & \mathbb{E}\left[\sum_{t=1}^{T} \left\langle\nabla f\left(x_{t}\right), \alpha_{t} \hat{V}_{t}^{-\frac{1}{2}} \nabla f\left(x_{t}\right)\right\rangle\right] \\ 
            & \leqslant R_{1} + \mathbb{E}\left[\sum_{t=1}^{T}\left\langle\nabla f\left(x_{t}\right), \frac{\alpha_{t} \gamma_{t}}{t^{a}} \hat{V}_{t}^{-\frac{1}{2}} d_{t-1}\right\rangle\right] \\
            & \leqslant R_{1} + \mathbb{E}\left[\sum_{t=1}^{T} \frac{\alpha_{t}\left|\gamma_{t}\right|}{t^{a}}\left\|\nabla f\left(x_{t}\right)\right\| \cdot\left\|\hat{V}_{t}^{-\frac{1}{2}} d_{t-1} \right\|\right]\\
            & \leqslant R_{1} + H \mathbb{E}\left[\sum_{t=1}^{T} \frac{\alpha_{t}\left|\gamma_{t}\right|}{t^{a}}\sqrt{\sum_{i=1}^{d} \hat{v}_{t, i}^{-1} d_{t-1, i}^{2}}\right] \\
            & \leqslant R_{1} + H \bar{H} \sqrt{\sum_{i=1}^{d} \hat{v}_{1, i}^{-1}} \mathbb{E}\left[\sum_{t=1}^{T} \frac{\alpha_{t}\left|\gamma_{t}\right|}{t^{a}}\right]. \\
        \end{aligned}
    \end{equation}
    The last inequality sgin is due to the lemma \ref{lem10} and $\forall t \in \mathcal{T}$,
    $\hat{v}_{t-1, i} \leqslant \hat{v}_{t, i}$.

    Let 
    \begin{equation}\label{eqB8}
        \begin{aligned}
            C_{1}
            & =\frac{C^{'}_{1}}{1-\beta_{11}}, \ 
            C_{2}=\frac{C^{'}_{2}}{1-\beta_{11}}, \  \\
            C_{3}
            & =\frac{C^{'}_{3}}{1-\beta_{11}}, \ 
            C_{4}=H \bar{H} \sqrt{\sum_{i=1}^{d} \hat{v}_{1, i}^{-1}}.
        \end{aligned}
    \end{equation}
    
    Combining Eq.\eqref{eqB6}, Eq.\eqref{eqB7} and Eq.\eqref{eqB8} completes the proof.
\end{proof}
\newpage
\section{Proof of Theorem \ref{th3.2}} 
\label{prth32}
\begin{proof}
    Let 
    \begin{equation}\label{eqC1}
        \begin{aligned}
            R_{2}
            & =C_{1} \mathbb{E}\left[\sum_{t=2}^{T} \sum_{k=1}^{d}\left|\mu_{t} \hat{v}_{t, k}^{-\frac{1}{2}} - \mu_{t-1} \hat{v}_{t-1, k}^{-\frac{1}{2}}\right|\right]\\
            & \quad\,+C_{2} \mathbb{E}\left[\sum_{t=1}^{T-1}\left\|\alpha_{t} \hat{V}_{t}^{-\frac{1}{2}} d_{t}\right\|^{2}\right]+ C_{3}+{C_{4} \sum_{t=1}^{T} \frac{\alpha_{t}\left|\gamma_{t}\right|}{t^{a}}}.
        \end{aligned}
    \end{equation}

    On one hand, from the definition of $O(\cdot)$, $\Omega(\cdot)$, obviously 
    $\exists K_{1}, K_{2} \in \mathcal{R}^{+}$, $\exists T_{0} \in \mathcal{T}$,
    $\forall T \geqslant T_{0}$,
    \begin{equation}\label{eqC2}
        \begin{aligned}
            R_{2} \leqslant K_{1} S_{1}(T), \quad
            \sum_{t=1}^{T} \tau_{t} \geqslant K_{2} S_{2}(T)>0.
        \end{aligned}
    \end{equation}

    On the other hand, if $T \geqslant T_{0}$, then
    \begin{equation}\label{eqC3}
        \begin{aligned}
            & \mathbb{E}\left[\sum_{t=1}^{T} \left\langle\nabla f\left(x_{t}\right), \alpha_{t} \hat{V}_{t}^{-\frac{1}{2}} \nabla f\left(x_{t}\right)\right\rangle\right] \\
            & \geqslant \mathbb{E}\left[\sum_{t=1}^{T} \tau_{t}\left\|\nabla f\left(x_{t}\right)\right\|^{2}\right] \\
            & = \sum_{t=1}^{T} \tau_{t} \cdot \mathbb{E}\left[\left\|\nabla f\left(x_{t}\right)\right\|^{2}\right]\\
            & \geqslant \sum_{t=1}^{T} \tau_{t} \cdot \underset{t \in \mathcal{T}}{\min} \ \mathbb{E}\left[\left\|\nabla f\left(x_{t}\right)\right\|^{2}\right] \\
            & \geqslant K_{2} S_{2}(T) \cdot \underset{t \in \mathcal{T}}{\min} \ \mathbb{E}\left[\left\|\nabla f\left(x_{t}\right)\right\|^{2}\right].
        \end{aligned}
    \end{equation}

    Combining the theorem \ref{th3.1}, Eq.\eqref{eqC1}, Eq.\eqref{eqC2} and Eq.\eqref{eqC3}, 
    when $T \geqslant T_{0}$, we have
    \begin{equation}\label{eqC4}
        \begin{aligned}
            K_{2} S_{2}(T) \cdot \underset{t \in \mathcal{T}}{\min} \ \mathbb{E}\left[\left\|\nabla f\left(x_{t}\right)\right\|^{2}\right] 
            \leqslant R_{2} \leqslant K_{1} S_{1}(T).
        \end{aligned}
    \end{equation}

    It is equivalent to when $T \rightarrow +\infty$,
    \begin{equation}\label{eqC5}
        \begin{aligned}
            \underset{t \in \mathcal{T}}{\min} \ \mathbb{E}\left[\left\|\nabla f\left(x_{t}\right)\right\|^{2}\right] 
            \leqslant \frac{K_{1}}{K_{2}} \frac{S_{1}(T)}{S_{2}(T)}, \\
            \underset{t \in \mathcal{T}}{\min} \ \mathbb{E}\left[\left\|\nabla f\left(x_{t}\right)\right\|^{2}\right] 
            = O\left(\frac{S_{1}(T)}{S_{2}(T)}\right).
        \end{aligned}
    \end{equation}

    The proof is over.
\end{proof}
\newpage
\section{Proof of Corollary \ref{cor31}}
\label{prcor31} 
\begin{proof}
    Firstly proof the following:
    \begin{equation}\label{eqD1}
        \begin{aligned}
            & \mathbb{E}\left[\sum_{t=1}^{T-1}\left\|\alpha_{t} \hat{V}_{t}^{-\frac{1}{2}} d_{t}\right\|^{2}\right] \leqslant \mathbb{E}\left[\sum_{t=1}^{T}\left\|\frac{\alpha_{t}} {c}  d_{t} \right\|^{2}\right] \\
            & \leqslant \frac{\alpha^{2} \bar{H}^{2}}{c^{2}} \sum_{t=1}^{T} \frac{1}{t^{2 b}} 
             \leqslant \frac{\alpha^{2} \bar{H}^{2}}{c^{2}} \sum_{t=1}^{T} \frac{1}{t} 
             \leqslant \frac{\alpha^{2} \bar{H}^{2}}{c^{2}}(1+\ln T)
        \end{aligned}
    \end{equation}
    and
    \begin{equation}\label{eqD2}
        \begin{aligned}
            \sum_{t=1}^{T} \frac{\alpha_{t}\left|\gamma_{t}\right|}{t^{b}} 
            \leqslant \alpha \bar{\gamma} \sum_{t=1}^{T} \frac{1}{t^{a+b}} 
            \leqslant \alpha \bar{\gamma} \sum_{t=1}^{T} \frac{1}{t} 
            \leqslant \alpha \bar{\gamma}(1+\ln T).
        \end{aligned}
    \end{equation}

    Since $ \beta_{1 t} $ is a constant,namely $ \beta_{1 t} = \beta_{1 1} $,
    so $\mu_{t}=\frac{\alpha_{t} (1-\beta_{1t})}{\xi_{t}}=\frac{\alpha_{t} (1-\beta_{11})}{1-\beta_{11}^{t}-\beta_{11}(1-\beta_{11}^{t-1})}=\alpha_{t}$, 
    we have
    \begin{equation}\label{eqD3}
        \begin{aligned}
            & \mathbb{E}\left[\sum_{t=2}^{T} \sum_{k=1}^{d}\left|\mu_{t} \hat{v}_{t, k}^{-\frac{1}{2}}-\mu_{t-1} \hat{v}_{t-1,k}^{-\frac{1}{2}}\right|\right]
            =\mathbb{E}\left[\sum_{t=2}^{T} \sum_{k=1}^{d}\left(\alpha_{t-1} \hat{v}_{t-1, k}^{-\frac{1}{2}}-\alpha_{t} \hat{v}_{t, k}^{-\frac{1}{2}}\right)\right] \\
            & =\mathbb{E}\left[\sum_{k=1}^{d}\left(\alpha_{1} \hat{v}_{1, k}^{-\frac{1}{2}}-\alpha_{T} \hat{v}_{T, k}^{-\frac{1}{2}}\right)\right]
            \leqslant \mathbb{E}\left[\sum_{k=1}^{d} \alpha_{1} \hat{v}_{1, k}^{-\frac{1}{2}}\right] 
            \leqslant \frac{\alpha d}{c}.
        \end{aligned}
    \end{equation}

    Therefore the term $R_{2}$ (Eq.\eqref{eqC1}) can be bounded as follows:
    \begin{equation}\label{eqD4}
        \begin{aligned}
            R_{2} 
             \leqslant \left(\frac{\alpha^{2} \bar{H}^{2} C_{2}}{c^{2}} + C_{4} \alpha \bar{\gamma} \right) (1+\ln T)
             + \frac{\alpha d C_{1}}{c} + C_{3}.
        \end{aligned} 
    \end{equation}

    Besides, because of 
    $\hat{v}_{t}=\frac{1}{1-\beta^{t}_{2}} \left[\beta_{2} v_{t-1}+\left(1-\beta_{2}\right) d^{2}_{t}\right]$
    and the lemma \ref{lem10}, then
    \begin{equation}\label{eqD5}
        \begin{aligned}
            \frac{\alpha_{t}}{\sqrt{\hat{v}_{t, k}}} 
            \geqslant \frac{\left(1-\beta_{2}\right) \alpha_{t}}{\bar{H}}
            = \frac{1-\beta_{2}}{\bar{H}} \frac{\alpha}{t^{b}} 
            \geqslant \frac{\alpha\left(1-\beta_{2}\right)}{\bar{H}} \frac{1}{T^{b}}.
        \end{aligned} 
    \end{equation}

    From Eq.\eqref{eqD5}, the following holds apparently:
    \begin{equation}\label{eqD6}
        \begin{aligned}
            & \mathbb{E}\left[\sum_{t=1}^{T} \alpha_{t}\left\langle\nabla f\left(x_{t}\right), \hat{V}_{t}^{-\frac{1}{2}} \nabla f\left(x_{t}\right)\right\rangle\right] \\
            & \geqslant \frac{\alpha\left(1-\beta_{2}\right)}{\bar{H}} \frac{1}{T^{b}} \mathbb{E}\left[\sum_{t=1}^{T}\left\|\nabla f\left(x_{t}\right)\right\|^{2}\right] \\
            & \geqslant \frac{\alpha \left(1-\beta_{2}\right)}{\bar{H}} T^{1-b} \cdot \min _{t \in \mathcal{T}} \mathbb{E}\left[\left\|\nabla f\left(x_{t}\right)\right\|^{2}\right].
        \end{aligned} 
    \end{equation}

    The theorem \ref{th3.1}, the Eq.\eqref{eqD4} and Eq.\eqref{eqD6} lead to
    \begin{equation}\label{eqD7}
        \begin{aligned}
            \frac{\alpha\left(1-\beta_{2}\right)}{\bar{H}} T^{1-b} \cdot \min _{t \in \mathcal{T}} \mathbb{E}\left[\left\|\nabla f\left(x_{t}\right)\right\|^{2}\right]
            \leqslant \left(\frac{\alpha^{2} \bar{H}^{2} C_{2}}{c^{2}} + C_{4} \alpha \bar{\gamma} \right) (1+\ln T)
            + \frac{\alpha d C_{1}}{c} + C_{3}.
        \end{aligned} 
    \end{equation}
    
    Let 
    \begin{equation}\label{eqD8}
        \begin{aligned}
            Q_{1} = \frac{\bar{H}}{\alpha\left(1-\beta_{2}\right)}, \quad
            Q_{2} = \frac{\alpha^{2} \bar{H}^{2} C_{2}}{c^{2}} + C_{4} \alpha \bar{\gamma}, \quad
            Q_{3} = Q_{2} + \frac{\alpha d C_{1}}{c} + C_{3}. 
        \end{aligned} 
    \end{equation}
    
    Combining Eq.\eqref{eqD7} and Eq.\eqref{eqD8} obtains 
    \begin{equation}\label{eqD9}
        \begin{aligned}
            \min _{t \in \mathcal{T}} \mathbb{E}\left[\left\|\nabla f\left(x_{t}\right)\right\|^{2}\right]
            \leqslant \frac{Q_{1}}{T^{1-b}} \left(Q_{2} \ln T + Q_{3} \right).
        \end{aligned} 
    \end{equation}

    The proof is over.
\end{proof}

\end{sloppypar}
\end{document}